\title{Relational Causal Discovery with Latent
Confounders}
\author[1]{\href{mailto:<mnegro2@uic.edu>?Subject=Relational Causal Discovery with Latent Confounders}{Matteo Negro\textsuperscript{*}}}
\author[1]{\href{mailto:<apiras2@uic.edu>?Subject=Relational Causal Discovery with Latent Confounders}{Andrea Piras\textsuperscript{*}}}
\author[2]{Ragib Ahsan}
\author[3]{David Arbour}
\author[1]{Elena Zheleva}
\affil[1]{%
    University of Illinois Chicago\\
    Chicago
}
\affil[2]{%
    Pinterest, Inc.\\
    San Francisco
}
\affil[3]{%
    Adobe Research\\
    San Francisco
}
\newcommand\independent{\protect\mathpalette{\protect\independenT}{\perp}}
\def\independenT#1#2{\mathrel{\rlap{$#1#2$}\mkern2mu{#1#2}}}
\newtheorem{definition}{Definition}
\newtheorem{proposition}{Proposition}
\newtheorem{lemma}{Lemma}
\newtheorem{theorem}{Theorem}
\DeclareMathAlphabet\mathbfcal{OMS}{cmsy}{b}{n}
\begin{document}
\maketitle
\renewcommand\thefootnote{\fnsymbol{footnote}}
\footnotetext[1]{These authors contributed equally to this work.}
\begin{abstract}
Estimating causal effects from real-world relational data can be challenging when the underlying causal model and potential confounders are unknown. While several causal discovery algorithms exist for learning causal models with latent confounders from data, they assume that the data is independent and identically distributed (i.i.d.) and are not well-suited for learning from relational data. Similarly, existing \textit{relational} causal discovery algorithms assume causal sufficiency, which is unrealistic for many real-world datasets.
To address this gap, we propose RelFCI, a sound and complete causal discovery algorithm for relational data with latent confounders. Our work builds upon the Fast Causal Inference (FCI) and Relational Causal Discovery (RCD) algorithms and it defines new graphical models, necessary to support causal discovery in relational domains. We also establish soundness and completeness guarantees for relational d-separation with latent confounders. We present experimental results demonstrating the effectiveness of RelFCI in identifying the correct causal structure in relational causal models with latent confounders.

\end{abstract}

\section{Introduction}
The goal of causal discovery is to reveal causal information by analyzing observational data. 
Most causal discovery algorithms assume that the data is independent and identically distributed (i.i.d.), and that the data generation is based on a directed acyclic model \citep{heinze2018causal}. 
However, many real-world data sources, including biological and social networks, do not meet the i.i.d. assumption and contain entities which interact with each other and exhibit causal dependencies among their attributes. 
To capture such dependencies and enable causal reasoning in relational data, more expressive classes of directed graphical models \citep{maier2014reasoning,lee2016,ahsan2022non} and algorithms for relational causal discovery \citep{maier2013sound,lee2016,lee2020,ahsan2023learning} have been developed over the past decade.

Existing relational causal discovery algorithms rely on the strong assumption of causal sufficiency, i.e., all common causes of observed variables have been measured and included in the data. 
However, this assumption rarely holds for real-world data where the presence of latent confounders can invalidate the causal discovery and causal effect estimation processes. This is especially true in relational domains where capturing latent confounders in causal models is key to separating homophily-based correlations from contagion \citep{shalizi-smr11,lee-jasa21}. While multiple algorithms exist for causal discovery with latent confounders in i.i.d. data (e.g., \citet{spirtes2000causation,colombo2012learning}), none address relational data. 
To facilitate more realistic causal discovery in relational domains, it is necessary to formalize latent confounders in relational causal models and lift the assumption of causal sufficiency. 

In this work, we introduce novel graphical models and a novel relational causal discovery algorithm, RelFCI, that can capture latent confounders in relational data. We build upon the representations and algorithms for Fast Causal Inference (FCI) \citep{spirtes2000causation} and Relational Causal Discovery (RCD) \citep{maier2013sound}, neither of which is sufficient on its own. FCI performs causal discovery with latent confounders but does not address relational data, whereas RCD performs relational causal discovery through relational \textit{d}-separation but assumes causal sufficiency. We introduce new relational graphical models, \textit{Latent Relational Causal Models} (LRCMs), \textit{Maximal Ancestral Abstract Ground Graphs} (MAAGGs), and \textit{Partial Ancestral Abstract Ground Graphs} (PAAGGs), and provide a set of assumptions necessary for causal discovery with latent variables on relational causal models. These models address the unique challenges of relational data, such as variable construction across relational paths and partial observation of entities. We then show that with these new models and under our specified assumptions, the rules of FCI, combined with the rules of RCD and applied to the PAAGGs, yield a sound and complete procedure for relational causal discovery. Specifically, we prove soundness and completeness guarantees of RelFCI up to a bounded hop threshold in the presence of latent variables. We demonstrate the algorithm's correctness on experimental datasets, comparing it to existing algorithms.

\section{Related Work} 
Related work falls broadly into two categories: causal discovery in the presence of latent variables and relational causal discovery. 
Several causal discovery methods support latent confounders, but only for propositional data. \citet{spirtes2000causation} introduce FCI, a generalization of PC algorithm explicitly designed for acyclic causal models with latent confounders.  
\citet{ZHANG20081873} augments FCI with an additional set of edge-orienting rules, providing completeness of the resulting algorithm. 
% thus allow reasoning for causal discovery using ancestral graph models. 
\citet{mooij-pmlr20} show that FCI is sound and complete for
cyclic models under $\sigma$-separation criteria.

\citet{maier2014reasoning} considered $d$-separation semantics on relational causal models, using  \textit{abstract ground graphs}, a lifted representation. 
\citet{maier2014reasoning} further provide soundness and completeness of $m$-separation, an analogue of $d$-separation on mixed graphs \citep{Richardson2002AncestralGM}, on abstract ground graphs.
\citet{maier2013sound} introduce RCD, a sound and complete algorithm for discovery on abstract ground graphs under the assumptions of $d$-faithfulness, sufficiency, and acyclicity. 
\citet{lee2016} develop a more efficient version of RCD, RCD-Light, that requires polynomial time and space to compute. Additionally, using a novel characterization of relational causal models under different path semantics, they present an alternate technique for causal discovery \citep{lee2016ch}.
Our work can be seen as an extension of these works, which relaxes causal sufficiency in order to more closely mirror real-world cases \citep{rothenhausler-nips15,strobl-springer19}.

\section{Background}
We provide an overview of relational theory, which serves as the foundation for our proposed RelFCI algorithm and its proofs of correctness. We follow the theoretical definitions provided by \cite{maier2014reasoning}. We also go over the theory underlying causal discovery using latent confounders and partial ancestral graphs for Bayesian networks as specified by \cite{spirtes2000causation} for the FCI algorithm. Finally, we provide the set of assumptions used in this work for relational causal discovery with latent variables.
Appendices \ref{rcd} and \ref{fci} contain accompanying figures that illustrate the concepts presented in this section.
\subsection{Relational Data and Relational Causal Models}
A \textit{Relational Schema} $\mathcal{S} = (\mathcal{E}, \mathcal{R}, \mathcal{A}, \textit{card})$ is a collection of a set of entity types $\mathcal{E}$; a set of relationship types $\mathcal{R}$, where $\mathcal{R}_i=\langle E_1^i,...,E_{a}^i\rangle \in \mathcal{R}$, with $E_j^i\in\mathcal{E}$ and $a$ the arity of the relation; a set of attribute classes $\mathcal{A}(I)$ for each entity or relationship and a cardinality function \textit{card}: $\mathcal{R}\times\mathcal{E} \rightarrow \{\text{ONE, MANY}\}$. As a running example, we will consider a schema with two entity types, USER (U) and POST (P), and the relationship between them, REACTS (R). USER has three attributes (U.Type, U.Activity and U.Sentiment), POST has two attributes (P.Engagement, P.Content), and REACTS has one attribute (R.Frequency). 

Given a relational schema, a \textit{Relational Variable} $[I_X...I_Y].Y$ consists of a \textit{Relational Path} $[I_X...I_Y]$, an alternating sequence of connected entities and relations, and an attribute $Y$ of the last class reached by said path. The first class $I_X$ of this relational variable is called \textit{perspective}. For example, from the described schema example, $[U, R, P].Engagement$ is a relational variable from the perspective USER, which captures the set of Engagements of all posts that a user reacts to. 

A \textit{Relational Dependency} $[I_X...I_Y].Y \rightarrow [I_X].X$ is a pair of two relational variables with a common perspective. Relational paths allow us to model causal dependencies between attributes of different entities, e.g., $[P, R, U].Sentiment \rightarrow [P].Engagement$ indicates that the engagement of a post depends on the sentiment of the user reacting to that post. The dependency is called \textit{canonical} if the path of the outcome variable (in the example, $[P].Engagement$) has a path of length 1. 
A \textit{Relational Causal Model} $\mathcal{M}_{\Theta}(\mathcal{S}, \mathcal{D})$ is a set of relational dependencies $\mathcal{D}$ defined over schema $\mathcal{S}$, with $\Theta$ denoting the set of conditional probability distributions for each attribute $\mathcal{A}(I)$ of every class $I\in\mathcal{E}\cup\mathcal{R}$ over its parents. The arrow corresponds to a relational dependency. The example relational causal model in Figure \ref{fig:model} shows that the user's sentiment and the post's content influence the engagement of the post.
A \textit{Relational Skeleton} $\sigma$, is an instantiation of the schema for all entities, relationships, and attributes which follows the cardinality requirements specified by \textit{card}. 
In other words, this is the data realization of the schema. For example, one relational skeleton could have two entities of type USER, Bob and Anna, and one entity of type POST, a food recipe, that Bob and Anna react to.
We denote the set of all possible relational skeletons for a schema $\mathcal{S}$ as $\Sigma_{\mathcal{S}}$.

Each relational causal model $\mathcal{M}_\Theta$ and relational skeleton $\sigma$ correspond to a \textit{Ground Graph} $GG_{\mathcal{M}\sigma}$.  The nodes in this graph are the attributes of all Entities and Relation instances in the skeleton $\sigma$, while the edges between instances of variables represent all dependencies in $\mathcal{M}$. Graphical examples of these representations can be seen in Appendix \ref{rcd}. An \textit{Abstract Ground Graph} $AGG_{\mathcal{M}\mathcal{B}h}$, for the relational causal model $\mathcal{M}$, perspective $\mathcal{B}$ and hop-threshold $h$, is a graph that captures dependencies between relational variables that hold for all possible ground graphs $GG_{\mathcal{M}\sigma}$, with $\sigma \in \Sigma_{\mathcal{S}}$. Abstract ground graphs are defined for each perspective $\mathcal{B}$ and relational path length fixed to $h$. 

%They include one node for each relational variable defined over the given perspective, as well as additional \textit{intersection variables}, which represent the potential overlap in instances between two relational variables that share the same attribute class. For instance, if the AGG contains $[U,R,P].\textit{Engagement}$ and $[U,R,P,R,U,R,P].\textit{Engagement}$, two relational variables with different paths but the same attribute, an intersection node $[U,R,P] \cap [U,R,P,R,U,R,P].\textit{Engagement}$ is added to capture their shared instances. These intersection nodes inherit all incoming and outgoing edges from the original variables, as a single ground random variable may participate in multiple relational variable instances. Including intersection variables is essential for accurately modeling implied dependencies and for correctly applying d-separation, since such shared participation can affect conditional independence reasoning.

$AGG_{\mathcal{M}\mathcal{B}h}$ contains edges between relational variables if the instantiations of those relational variables contain a dependent pair in some ground graph. 
The edges are obtained using the \verb|extend| method \citep{maier2014reasoning}, which constructs relational paths from the current perspective to a dependency’s source attribute. Formally, given a relational dependency $[I_Y, \dots, I_Z].Y \rightarrow [I_X].X$, and a current perspective path $[I_B, \dots, I_X]$, the method finds all valid pivot points between the reversed path $[I_X, \dots, I_B]$ and the dependency path, and concatenates them at the pivot to generate new paths of the form $[I_B, \dots, I_X, \dots, I_Z]$. The process ensures that dependencies are appropriately lifted to the abstract ground graph, regardless of original perspective. Furthermore, intersection variables inherit the edges from both the variables in the pair. 

A single dependency in $\mathcal{M}$, with the extend method, may support multiple edges in $AGG_{\mathcal{M}\mathcal{B}h}$. Additionally, a single model $\mathcal{M}$ produces multiple AGGs, one for each perspective. A more complete description of AGGs components and the extend method are provided in Appendix \ref{rcd}.
\citet{maier2014reasoning} showed that $d$-separation applied to abstract ground graphs (i.e., relational $d$-separation) allows the identification of conditional independences that hold across all ground graphs. 

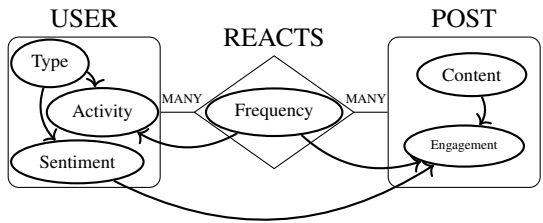
\begin{figure}[ht]
    \centering
    \begin{tikzpicture}
        % Entities
        \node[draw, rounded corners, rectangle, minimum width=2cm, minimum height=2cm] (A) at (0, 0) {};
        \node[above] at (A.north) {USER};
    
        \node[draw, rounded corners, rectangle, minimum width=2cm, minimum height=2cm] (B) at (5, 0) {};
        \node[above] at (B.north) {POST};
    
        % Define the four corners of the rhombus
        \coordinate (N) at (2.5, 0.75);
        \coordinate (E) at (3.55, 0);
        \coordinate (S) at (2.5, -0.75);
        \coordinate (W) at (1.45, 0);
        \node[above] at (N.north) {REACTS};
        \node[above, font=\tiny] at ([xshift=-0.16cm] W.center) {MANY};
        \node[above, font=\tiny] at ([xshift=0.16cm] E.center) {MANY};
    
        % Create attributes inside A
        \node[draw, ellipse, thick, font=\scriptsize] (A1) at ([yshift=-0.66cm, xshift=-0.09cm] A.center) {Sentiment};
        \node[draw, ellipse, thick, font=\scriptsize] (A2) at ([xshift=-0.45cm, yshift=0.65cm] A.center) {Type};
        \node[draw, ellipse, thick, font=\scriptsize] (A3) at ([xshift=0.24cm, yshift=0cm] A.center) {Activity};
    
        % Create attributes inside AB1
        \node[draw, ellipse, font=\scriptsize, thick] (AB1_1) at ([yshift=-0.75cm] N.center) {Frequency};
        
        % Draw the lines to form the rhombus
        \draw (N) -- (E) -- (S) -- (W) -- cycle;
    
        % Create attributes inside B
        \node[draw, ellipse, thick, font=\tiny]  (B1) at ([xshift=0cm, yshift=-0.45cm] B.center) {Engagement};
        \node[draw, ellipse, thick, font=\scriptsize] (B2) at ([xshift=0.1cm, yshift=0.5cm] B.center) {Content};
      
        % Lines
        \draw (A) -- (W);
        \draw (E) -- (B);
    
        % Curved edge
        \path (A1) edge[bend right, thick, ->] (B1);
        \path (B2) edge[bend left, thick, ->] (B1);
        \path (AB1_1) edge[bend left, thick, ->] (A3);
        \path (AB1_1) edge[bend right, thick, ->] (B1);
        \path (A2) edge[bend right, thick, ->] (A1);
        \path (A2) edge[bend left, thick, ->] (A3);
    \end{tikzpicture}
    \caption{Example of a Relational Causal Model}
    \label{fig:model}
\end{figure}
\subsection{Partial Ancestral Graph}
The variables of a causal graph, $V \in G$, can be divided into three categories: observed (\textbf{O}), selection (\textbf{S}), and latent (\textbf{L}) variables, denoted as G(\textbf{O},\textbf{S},\textbf{L}). In this work, we focus on latent variables and assume there are no selection ones, i.e., $\boldsymbol{S} = \emptyset$.
We denote \textbf{Cond} as the set of conditional independence relations among variables in \textbf{O}, and define the equivalence class of graphs that meets the conditional independence \textit{O-Equiv}(\textbf{Cond}) as follows: for a graph G(\textbf{O},\textbf{L}) belonging to \textit{O-Equiv}(\textbf{Cond}), given three sets of variables \textbf{X}, \textbf{Y} and \textbf{Z}, G(\textbf{O},\textbf{L}) entails that \textbf{X} $\independent$ \textbf{Y} $\mid$ \textbf{Z} if and only if \textbf{X} $\independent$ \textbf{Y} $\mid$ \textbf{Z} $\in$ \textbf{Cond}. 

An \textit{ancestral graph} \citep{ZHANG20081873} is a causal graph that can be used to represent conditional independence and causal relations of a DAG with latent variables, using only the observed variables. A path $p$ between any two vertices $X,Y\in\textbf{O}$ is called an \textit{inducing path relative to} $\langle\textbf{L}\rangle$ if every non-endpoint vertex on $p$ is either in \textbf{L} or a collider, and every collider in $p$ is an ancestor vertex of either $X$ or $Y$. A path is called primitive when \textbf{L} is empty. A \textit{Maximal Ancestral Graph} (MAG) is an ancestral graph having no primitive inducing path between any two non-adjacent vertices.

The FCI algorithm learns a Markov equivalence class of a MAG called \textit{Partial Ancestral Graph} (PAG), with edges ends having three possible marks, ◦, -, $\textgreater$, which indicate the following relationships: [1] $A \rightarrow B$ implies that $A$ causes $B$; [2] $A \leftrightarrow B$ implies a common latent confounder between the two observed variables. 
An edge has an arrowhead $\textgreater$ or tail - mark between two variables if and only if all DAGs in \textit{O-Equiv}(\textbf{Cond}) share the same arrowhead (or tail) mark for those variables, i.e. the mark is \textit{invariant}. On the other hand, if there exist two DAGs with a different edge mark between two variables, the PAG will contain a ◦ mark, i.e. the mark is \textit{variant}. 
If every circle mark corresponds to an invariant in \textit{O-Equiv}(\textbf{Cond}), the PAG is called \textit{maximally informative} for the equivalence class. Examples of MAG and PAG are available in Appendix \ref{fci}. 

\subsection{Assumptions for Relational Causal Discovery}\label{assumptions}
In this subsection, we define and discuss some key assumptions used for causal discovery in relational data, including the maximum hop threshold, d-faithfulness, acyclicity, causal sufficiency, and absence of latent descendants for latent variables.
\begin{itemize}
    \item Maximum Hop Threshold (\(h\)): The maximum hop threshold defines the largest permissible path length (or number of relational hops) between entities in a relational causal model that will be considered when constructing causal dependencies. Setting \(h\) limits the computational complexity and ensures that the discovered relationships are both interpretable and relevant. For instance, in a social network, \(h = 2\) might capture direct friendships and friends-of-friends relationships while ignoring more distant connections.
    \item D-Faithfulness: D-faithfulness (Dependency-Faithfulness) posits that any conditional independence observed in the data is also represented in the underlying causal graph, and vice versa. This ensures that the causal relationships inferred from the data align with the observed statistical dependencies in the relational causal model.
    \item Acyclicity: Acyclicity mandates that the causal graph representing the relationships among variables and entities is a directed acyclic graph (DAG). This means there are no directed cycles in the relational causal model.
    \item 
    Absence of latent descendants for latent variables: For this work, we assume that latent variables cannot be descendants of each other, i.e. all the parents and children of a latent variable are observed. This assumption is standard in constraint-based latent variable models \citep{evans2016graphs}. \citet{spirtes1995causal} note that conclusions about the equivalence class over observed variables remain valid regardless of the causal relations among latent variables.
\end{itemize}
\section{Relational Causal Discovery with Latent Variables}
In this section, we define latent variables in relational causal models, show why existing algorithms cannot perform relational causal discovery with latent variables, define the graphical models necessary for such discovery, and propose an algorithm for it. The full proofs of all theoretical findings in this paper are available in Appendix \ref{proofs}.
\subsection{Latent Variables in Relational Causal Models}
To perform causal discovery with latent confounders, we first define them in the relational context. 
Considering the set of latent variables \textbf{L}, we need to define what constitutes a latent relational variable  $[I_X...I_Y].Y\in$ \textbf{L}. We assume that all entities in $\mathcal{E}$ and relationships in $\mathcal{R}$ are observed in the relational schema and the model and, consequently, in the variable's relational path. We define the set of latent attributes in a schema $\mathcal{S}$ as the set $\mathcal{A}_\textbf{L}$. We can then look at the definition that follows: 
\begin{definition}[Latent Relational Variable]
A relational variable $RV$: $[I_X...I_Y].Y$ is considered latent, i.e., $RV\in $ \textbf{L} if and only if its attribute class $Y\in\mathcal{A}(I_Y)$ is unobserved in the schema, meaning $Y\in\mathcal{A}_\textbf{L}$.
\end{definition}
Consequently, a relational variable $RV$: $[I_W...I_Z].Z$ is observed, i.e., $RV\in $ \textbf{O} if its attribute class $Z\in\mathcal{A}(I_Z)$ is observed, indicating that it is a member of the set of observed attributes classes in the schema, which we respectively define as $\mathcal{A}_\textbf{O}$. A model's set of relational dependencies $\mathcal{D}$ is thus divided into two groups:
\begin{enumerate}
    \item Set $\mathcal{D}_\mathbf{O}$ of observed dependencies $RV_1 \rightarrow RV_2$ defined only over observed relational variables i.e., $RV_1, RV_2\in\textbf{O}$;
    \item Set $\mathcal{D}_\mathbf{L}$ of latent dependencies $RV_1 \rightarrow RV_2$ containing at least one latent relational variable i.e., $RV1\in\textbf{L}\lor RV_2\in\textbf{L}$;
\end{enumerate} 

The modified relational causal model can now be defined as follows:
\begin{definition}[Latent Relational Causal Model (LRCM)]A relational causal model with latent variables $\mathcal{M}_{\Theta L}$ consists of two parts:
\begin{enumerate}
    \item The structure $\mathcal{M}_L=(\mathcal{S}, \mathcal{D})$: the schema $\mathcal{S}$, containing a set of latent attributes $\mathcal{A}_\textbf{L}$; the set of dependencies $\mathcal{D}=\mathcal{D}_\textbf{O}\cup\mathcal{D}_\textbf{L}$ defined over all relational variables;
    \item Parameters $\Theta$: a conditional probability distribution $P([I_j].X\mid parents([I_j].X))$ for all relational variables of the form $[I_j].X$ \citep{maier2014reasoning}.
\end{enumerate}
\end{definition}
An example LRCM can be seen in figure \ref{subfig:LRCM}.
% Except for synthetically generated data, where we assume a specific LRCM, $\mathcal{M}_{\Theta L}$ is typically unknown.
The latent AGG is constructed from LRCM $\mathcal{M}_L$, similarly to conventional relational causal models \citep{maier2014reasoning}. The construction divides the edges of the abstract ground graph into observed and unobserved edges, based on whether the underlying dependency from which the edge is yielded belongs to $\mathcal{D}_{L}$, i.e., is unobserved.
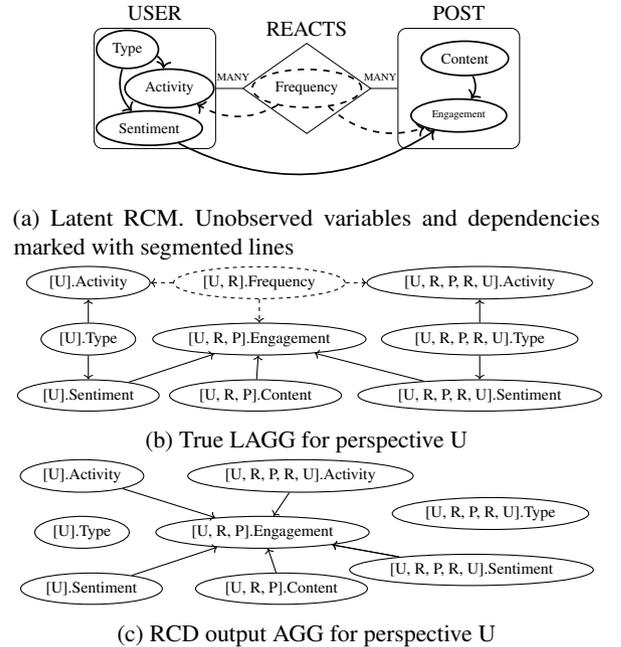
\begin{figure}[ht]
    \centering
    \begin{subfigure}{0.45\textwidth}
        \centering
        \scalebox{0.8}{
            \begin{tikzpicture}
                % Entities
                \node[draw, rounded corners, rectangle, minimum width=2cm, minimum height=2cm] (A) at (0, 0) {};
                \node[above] at (A.north) {USER};
            
                \node[draw, rounded corners, rectangle, minimum width=2cm, minimum height=2cm] (B) at (5, 0) {};
                \node[above] at (B.north) {POST};
            
                % Define the four corners of the rhombus
                \coordinate (N) at (2.5, 0.75);
                \coordinate (E) at (3.55, 0);
                \coordinate (S) at (2.5, -0.75);
                \coordinate (W) at (1.45, 0);
                \node[above] at (N.north) {REACTS};
                \node[above, font=\tiny] at ([xshift=-0.16cm] W.center) {MANY};
                \node[above, font=\tiny] at ([xshift=0.16cm] E.center) {MANY};
            
                % Create attributes inside A
                \node[draw, ellipse, thick, font=\scriptsize] (A1) at ([yshift=-0.66cm, xshift=-0.09cm] A.center) {Sentiment};
                \node[draw, ellipse, thick, font=\scriptsize] (A2) at ([xshift=-0.45cm, yshift=0.65cm] A.center) {Type};
                \node[draw, ellipse, thick, font=\scriptsize] (A3) at ([xshift=0.24cm, yshift=0cm] A.center) {Activity};
            
                % Create attributes inside AB1
                \node[draw, ellipse, font=\scriptsize, thick, dashed] (AB1_1) at ([yshift=-0.75cm] N.center) {Frequency};
                
                % Draw the lines to form the rhombus
                \draw (N) -- (E) -- (S) -- (W) -- cycle;
            
                \node[draw, ellipse, thick, font=\tiny]  (B1) at ([xshift=0cm, yshift=-0.45cm] B.center) {Engagement};
                \node[draw, ellipse, thick, font=\scriptsize] (B2) at ([xshift=0.1cm, yshift=0.5cm] B.center) {Content};
              
                % Lines
                \draw (A) -- (W);
                \draw (E) -- (B);
            
                % Curved edge
                \path (A1) edge[bend right, thick, ->] (B1);
                \path (B2) edge[bend left, thick, ->] (B1);
                \path (AB1_1) edge[bend left, thick, ->, dashed] (A3);
                \path (AB1_1) edge[bend right, thick, ->, dashed] (B1);
                \path (A2) edge[bend right, thick, ->] (A1);
                \path (A2) edge[bend left, thick, ->] (A3);
            \end{tikzpicture}
        }
        \caption{Latent RCM. Unobserved variables and dependencies marked with segmented lines}
        \label{subfig:LRCM}
    \end{subfigure} \\ 
    \begin{subfigure}{0.45\textwidth}
        \centering
        \scalebox{0.5}{
            \begin{tikzpicture}
                % Nodes
                \node[draw, ellipse, thick, font=\large] (1) at (-2.8, -1.5) {[U].Sentiment};
                \node[draw, ellipse, thick, font=\large] (2) at (-2.8, 0) {[U].Type};
                \node[draw, ellipse, thick, font=\large] (3) at (-2.8, 1.5) {[U].Activity};
                \node[draw, ellipse, thick, font=\large] (4) at (1.7, 0) {[U, R, P].Engagement};
                \node[draw, ellipse, thick, font=\large] (5) at (1.6, -1.5) {[U, R, P].Content};
                \node[draw, ellipse, thick, font=\large] (6) at (7.5, -1.5) {[U, R, P, R, U].Sentiment};
                \node[draw, ellipse, thick, font=\large] (7) at (7.5, 0) {[U, R, P, R, U].Type};
                \node[draw, ellipse, thick, font=\large] (8) at (7.5, 1.5) {[U, R, P, R, U].Activity};
                \node[draw, ellipse, thick, dashed, font=\large] (9) at (1.7, 1.5) {[U, R].Frequency};

                \path (2) edge[thick, ->] (1);
                \path (2) edge[thick, ->] (3);
                \path (9) edge[thick, ->, dashed] (3);
                \path (9) edge[thick, ->, dashed] (4);
                \path (1) edge[thick, ->] (4);
                \path (5) edge[thick, ->] (4);
                
                \path (7) edge[thick, ->] (6);
                \path (7) edge[thick, ->] (8);
                \path (9) edge[thick, ->, dashed] (8);
                \path (6) edge[thick, ->] (4);
            \end{tikzpicture}
        }
        \caption{True LAGG for perspective U}
        \label{subfig:True}
    \end{subfigure} \\  
    \begin{subfigure}{0.45\textwidth}
        \centering
        \scalebox{0.5}{
            \begin{tikzpicture}
                % Nodes
                \node[draw, ellipse, thick, font=\large] (1) at (0.5, -1.5) {[U].Sentiment};
                \node[draw, ellipse, thick, font=\large] (2) at (0.25, 0) {[U].Type};
                \node[draw, ellipse, thick, font=\large] (3) at (0.25, 1.5) {[U].Activity};
                \node[draw, ellipse, thick, font=\large] (4) at (5, 0) {[U, R, P].Engagement};
                \node[draw, ellipse, thick, font=\large] (5) at (5.5, -1.5) {[U, R, P].Content};
                \node[draw, ellipse, thick, font=\large] (6) at (10.5, -1) {[U, R, P, R, U].Sentiment};
                \node[draw, ellipse, thick, font=\large] (7) at (11, 0.5) {[U, R, P, R, U].Type};
                \node[draw, ellipse, thick, font=\large] (8) at (6, 1.5) {[U, R, P, R, U].Activity};
            
                % % Curved edge
                % \path (A1) edge[bend right, thick, ->] (B1);
                % \path (B2) edge[bend right, thick, ->] (B1);
                % % \path (AB_1) edge[bend left, thick, ->, dashed] (A3);
                % % \path (AB_1) edge[bend right, thick, ->, dashed] (B1);
                % \path (A2) edge[bend right, thick, {Circle[open]}-{Circle[open]}] (A1);
                % \path (A2) edge[bend left, thick, ->] (A3);
                % \path (A3) edge[bend left, thick, <->, line width=0.6mm] (B1);

                \path (1) edge[thick, ->] (4);
                \path (3) edge[thick, ->] (4);
                \path (5) edge[thick, ->] (4);
                \path (6) edge[thick, ->] (4);
                \path (8) edge[thick, ->] (4);
                \path (6) edge[thick, ->] (4);
                % \path (8) edge[thick, ->] (4);
                % \path (7) edge[thick, ->] (6);
                % \path (7) edge[thick, ->] (8);
            \end{tikzpicture}
        }
        \caption{RCD output AGG for perspective U}
        \label{subfig:RCD}
    \end{subfigure}
    \caption{Counterexample that shows RCD does not produce the correct output AGG for LRCM with a faithful oracle}
    \label{fig:RCD}
\end{figure}
\begin{definition}[Latent Abstract Ground Graph (LAGG)]
    Given a relational causal model $\mathcal{M}_L$ a maximum hop threshold $h$, and a perspective $\mathcal{B}$, the $LAGG_{\mathcal{M}_\textbf{L}\mathcal{B}h}$ is the abstract ground graph of the latent relational causal model. It contains both variables in the sets \textbf{O} and \textbf{L} over perspective $\mathcal{B}$, plus intersection variables divided into observed intersection variables (both participating variables in the intersections are observed), and latent ones (i.e., at least one participating variable is latent). The set of edges $E$ yielded from the dependencies in $\mathcal{D}_\textbf{O}$ and $\mathcal{D}_\textbf{L}$, using the \verb|extend| method \citep{maier2014reasoning}, is partitioned respectively into the set of observed ($E_\textbf{O}$) and unobserved ($E_\textbf{L}$) edges.
\end{definition} 
Consider the LRCM shown in figure \ref{subfig:LRCM} and a hop threshold $h=2$. 
% It has two entity types, A and B, one MANY-TO-MANY relationship AB, and maximum hop threshold $h=2$. The entities and relationship have three, two, and one attribute for A, B, and AB, respectively. 
The Frequency attribute for REACT is unobserved. There are six relational dependencies in the model: 1) [U].Type $\rightarrow$ [U].Activity, 2) [U].Type $\rightarrow$ [U].Sentiment, 3) [P, R, U].Sentiment $\rightarrow$ [P].Engagement, 4) [P].Content $\rightarrow$ [P].Engagement, 5) [U, R].Frequency $\rightarrow$ [U].Activity, 6) [P, R].Frequency $\rightarrow$ [P].Engagement. The last two are unobserved dependencies in $\mathcal{D}_\textbf{L}$. The respective LAGG for the described LRCM is shown in figure \ref{subfig:True}. 
\subsection{Latent Relational Causal Discovery}
% Describe why RCD is not sufficient for discovering causal relations in the presence of latent variables in relational models -> counterexample, maybe not complex enough underlying model
The RCD algorithm is the first sound and complete procedure that learns the dependencies of a relational causal model \citep{maier2013sound}. It works under several assumptions, described in detail in Appendix \ref{assumptions}: maximum hop threshold $h$, $d$-faithfulness, acyclicity, and causal sufficiency. Causal sufficiency in particular implies that RCD was not originally designed for models with latent variables. Given that some forms of latent confounding can be detected via simple dependence tests~\citep{arbour2016inferring}, it is natural to ask whether RCD is still sound and complete when the casual sufficiency assumption is lifted. To the best of our knowledge, no prior research has addressed this question in detail. 
% To answer, we need to assume the existence of an oracle as defined in \citet{spirtes2000causation}, which is faithful to the relations defined by G(\textbf{O},\textbf{L}) and, as such, can reliably determine the conditional independence of any two observed variables in \textbf{O}. 
The following counterexample shows that RCD is neither sound nor complete for relational causal discovery with latent variables.

\textbf{Counterexample} Figure \ref{subfig:RCD} shows the output AGG produced by RCD using an oracle faithful to the underlying distribution. As we can see, the actual LAGG in figure \ref{subfig:True} contains outgoing edges from [U].Type and from [U, R, P, R, U].Type; however, the output AGG (Fig. \ref{subfig:RCD} lacks these edges. This indicates that RCD fails to identify the relational dependencies 1) and 2).
% [A].A2 $\rightarrow$ [A].A1 and [A].A2 $\rightarrow$ [A].A3 relational dependencies.
Furthermore, without latent variables, RCD cannot capture and detect the presence of a latent confounder on the AGG using only directed edges. As seen in Figure \ref{fig:RCD}, the fundamental problem that renders RCD neither sound nor complete for LRCM is the lack of identification of latent variables. This suggests that a more expressive representation than AGGs is required for the correct causal discovery in the presence of latent variables. 

% The representations required to expand RCD with FCI to enable relational causal discovery with latent variables are shown in the next section.
\subsection{Partial Ancestral Abstract Ground Graphs}

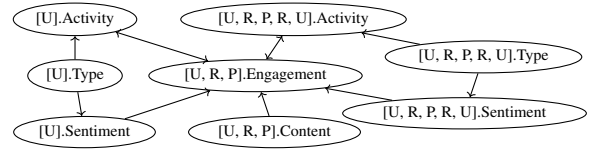
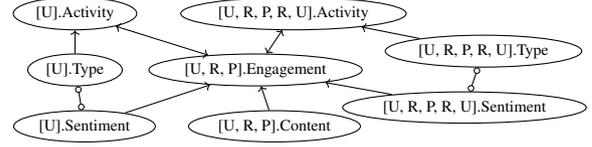
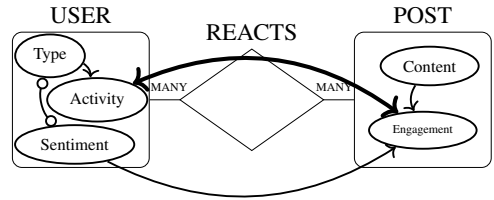
\begin{figure}[ht]
    \centering
    \begin{subfigure}{0.45\textwidth}
        \centering
        \scalebox{0.5}{
            \begin{tikzpicture}
                % Nodes
                \node[draw, ellipse, thick, font=\large] (1) at (0.5, -1.5) {[U].Sentiment};
                \node[draw, ellipse, thick, font=\large] (2) at (0.25, 0) {[U].Type};
                \node[draw, ellipse, thick, font=\large] (3) at (0.25, 1.5) {[U].Activity};
                \node[draw, ellipse, thick, font=\large] (4) at (5, 0) {[U, R, P].Engagement};
                \node[draw, ellipse, thick, font=\large] (5) at (5.5, -1.5) {[U, R, P].Content};
                \node[draw, ellipse, thick, font=\large] (6) at (10.5, -1) {[U, R, P, R, U].Sentiment};
                \node[draw, ellipse, thick, font=\large] (7) at (11, 0.5) {[U, R, P, R, U].Type};
                \node[draw, ellipse, thick, font=\large] (8) at (6, 1.5) {[U, R, P, R, U].Activity};

                \path (2) edge[thick, ->] (3);
                \path (2) edge[thick, ->] (1);
                \path (4) edge[thick, <->] (3);
                \path (1) edge[thick, ->] (4);
                \path (5) edge[thick, ->] (4);
                \path (6) edge[thick, ->] (4);
                \path (8) edge[thick, <->] (4);
                \path (7) edge[thick, ->] (6);
                \path (7) edge[thick, ->] (8);
            \end{tikzpicture}
        }
        \caption{MAAGG from the LRCM in Figure \ref{subfig:LRCM} for perspective U}
    \end{subfigure} \\
    \begin{subfigure}{0.45\textwidth}
        \centering
        \scalebox{0.5}{
            \begin{tikzpicture}
                % Nodes
                \node[draw, ellipse, thick, font=\large] (1) at (0.5, -1.5) {[U].Sentiment};
                \node[draw, ellipse, thick, font=\large] (2) at (0.25, 0) {[U].Type};
                \node[draw, ellipse, thick, font=\large] (3) at (0.25, 1.5) {[U].Activity};
                \node[draw, ellipse, thick, font=\large] (4) at (5, 0) {[U, R, P].Engagement};
                \node[draw, ellipse, thick, font=\large] (5) at (5.5, -1.5) {[U, R, P].Content};
                \node[draw, ellipse, thick, font=\large] (6) at (10.5, -1) {[U, R, P, R, U].Sentiment};
                \node[draw, ellipse, thick, font=\large] (7) at (11, 0.5) {[U, R, P, R, U].Type};
                \node[draw, ellipse, thick, font=\large] (8) at (6, 1.5) {[U, R, P, R, U].Activity};

                \path (2) edge[thick, ->] (3);
                \path (2) edge[thick, {Circle[open]}-{Circle[open]}] (1);
                \path (4) edge[thick, <->] (3);
                \path (1) edge[thick, ->] (4);
                \path (5) edge[thick, ->] (4);
                \path (6) edge[thick, ->] (4);
                \path (8) edge[thick, <->] (4);
                \path (7) edge[thick, {Circle[open]}-{Circle[open]}] (6);
                \path (7) edge[thick, ->] (8);
            \end{tikzpicture}
        }
        \caption{PAAGG of \textit{O-Equiv}($\mathcal{D}_{\textbf{O}}$) for perspective U}
    \end{subfigure} \\
    \begin{subfigure}{0.45\textwidth}
        \centering
        \scalebox{0.9}{
            \begin{tikzpicture}
                % Entities
                \node[draw, rounded corners, rectangle, minimum width=2cm, minimum height=2cm] (A) at (0, 0) {};
                \node[above] at (A.north) {USER};
            
                \node[draw, rounded corners, rectangle, minimum width=2cm, minimum height=2cm] (B) at (5, 0) {};
                \node[above] at (B.north) {POST};
            
 % Define the four corners of the rhombus
                \coordinate (N) at (2.5, 0.75);
                \coordinate (E) at (3.55, 0);
                \coordinate (S) at (2.5, -0.75);
                \coordinate (W) at (1.45, 0);
                \node[above] at (N.north) {REACTS};
                \node[above, font=\tiny] at ([xshift=-0.16cm] W.center) {MANY};
                \node[above, font=\tiny] at ([xshift=0.16cm] E.center) {MANY};
            
                % Create attributes inside A
                \node[draw, ellipse, thick, font=\scriptsize] (A1) at ([yshift=-0.66cm, xshift=-0.09cm] A.center) {Sentiment};
                \node[draw, ellipse, thick, font=\scriptsize] (A2) at ([xshift=-0.45cm, yshift=0.65cm] A.center) {Type};
                \node[draw, ellipse, thick, font=\scriptsize] (A3) at ([xshift=0.24cm, yshift=0cm] A.center) {Activity};
                
                % Draw the lines to form the rhombus
                \draw (N) -- (E) -- (S) -- (W) -- cycle;
            
                \node[draw, ellipse, thick, font=\tiny]  (B1) at ([xshift=0cm, yshift=-0.45cm] B.center) {Engagement};
                \node[draw, ellipse, thick, font=\scriptsize] (B2) at ([xshift=0.1cm, yshift=0.5cm] B.center) {Content};
              
                % Lines
                \draw (A) -- (W);
                \draw (E) -- (B);
            
                % Curved edge
                \path (A1) edge[bend right, thick, ->] (B1);
                \path (B2) edge[bend right, thick, ->] (B1);
                % \path (AB_1) edge[bend left, thick, ->, dashed] (A3);
                % \path (AB_1) edge[bend right, thick, ->, dashed] (B1);
                \path (A2) edge[bend right, thick, {Circle[open]}-{Circle[open]}] (A1);
                \path (A2) edge[bend left, thick, ->] (A3);
                \path (A3) edge[bend left, thick, <->, line width=0.6mm] (B1);
            \end{tikzpicture}
        }
        \caption{PARM of \textit{O-Equiv}($\mathcal{D}_{\textbf{O}}$)}
    \end{subfigure}%
    \caption{New representations to enable relational causal discovery in LRCMs}
    \label{fig:models}
\end{figure}

To enable causal discovery in latent relational causal models, we first need to define the necessary graphical models. Figure 3 shows an example of the models we introduce in this section. We will do so by considering an extension of the graphical models used in FCI to the relational setting, with a representation that expresses the underlying dependencies $\mathcal{D}$ over a set of observed variables \textbf{O} (i.e., $\mathcal{D}_{\textbf{O}}$):
\begin{definition}[Maximal Ancestral Abstract Ground Graph (MAAGG)]
Given a 
% relational schema $\mathcal{S} = (\mathcal{E}, \mathcal{R}, \mathcal{A}, \textit{card})$, a 
latent relational causal model $\mathcal{M}_\textbf{L}(\mathcal{S},\mathcal{D})$ with a hop threshold $h$, any perspective $\mathcal{B}$, and the resulting Latent Abstract Ground Graph $LAGG_{\mathcal{M}_L\mathcal{B}h}$, the maximal ancestral abstract ground graph $MAAGG_{\mathcal{M}\mathcal{B}h'}$ is a graph, with a hop threshold $h'\geq h$ comprising:
\begin{itemize}
    \item One node for each relational variable of the LAGG in \textbf{O} and the respective set of observed intersection variables;
    \item Three types of edges: $\rightarrow$, \textemdash , and $\leftrightarrow$, which are used to represent the underlying dependencies $\mathcal{D}_{\textbf{O}}$.
\end{itemize}
\end{definition}
The MAAGG $\mathbfcal{G}$ defined over the variables in \textbf{O}, following the definition of \citet{ZHANG20081873}, probabilistically represents the respective LAGG defined over \textbf{O} and \textbf{L}, specifically:
\begin{itemize}
    \item Two variables $A, B \in \textbf{O}$ are adjacent in $\mathbfcal{G}$ if and only if there is an inducing path relative to $\langle L\rangle$ in the true LAGG;
    \item The orientation entails the same concept of non-causality and ancestry between two variables for MAGs and PAGs.
    % :
    % \begin{enumerate}
    %     \item $A \rightarrow B$ implies that $A$ causes $B$ ($A$ is an ancestor of $B$), and $B$ is not a cause of $A$ ($B$ is not an ancestor of $A$);
    %     \item $A \leftrightarrow B$ implies that $A$ is not a cause of $B$, but also $B$ is not a cause of $A$. Considering the adjacency of $A$ and $B$ in the MAAGG, this implies a common latent confounder between the two observed variables.
    %     \item $A$ \textemdash $B$ implies that $A$ causes $B$, and $B$ also causes $A$.
    % \end{enumerate}
\end{itemize}
We now introduce a lemma that is necessary for proving the soundness and completeness of our proposed method:
\begin{lemma}
    Given a relational causal model structure $\mathcal{M}$ and perspective $\mathcal{B}$, an abstract ground graph $AGG_{\mathcal{M}\mathcal{B}}$ is ancestral if and only if all ground graphs $GG_{\mathcal{M}\sigma}$, with skeleton $\sigma\in\sum_\mathcal{S}$, are ancestral.
\end{lemma}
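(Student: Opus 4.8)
The plan is to prove both directions by contraposition, moving the defining violations of ancestrality back and forth between $AGG_{\mathcal{M}\mathcal{B}}$ and the ground graphs $GG_{\mathcal{M}\sigma}$ via the correspondence between the \texttt{extend} construction and instantiations established by \citet{maier2014reasoning}. Recall that a mixed graph fails to be ancestral exactly when it contains a directed cycle, an \emph{almost-directed cycle} (a directed path from $X$ to $Y$ together with an edge $X \leftrightarrow Y$), or an undirected edge with an arrowhead at one of its endpoints; all three are ``local'' witnesses built from finitely many edges plus at most one extra edge, and the proof treats them uniformly. The two facts the argument rests on are: (i) \emph{grounding} --- every edge of $AGG_{\mathcal{M}\mathcal{B}}$ is realized between corresponding instances in the ground graph of some skeleton, and, more strongly, any finite connected subgraph of $AGG_{\mathcal{M}\mathcal{B}}$ consisting of a directed path plus at most one incident edge is realized \emph{simultaneously} in a single sufficiently large skeleton; and (ii) \emph{abstraction} --- for any $GG_{\mathcal{M}\sigma}$, any edge between instances, and the fixed perspective $\mathcal{B}$, the corresponding relational variables are adjacent in $AGG_{\mathcal{M}\mathcal{B}}$ with the same edge type, and this extends along directed paths. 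Both are the soundness/completeness content of the abstract-ground-graph construction.

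For $AGG_{\mathcal{M}\mathcal{B}}$ ancestral $\Rightarrow$ all $GG_{\mathcal{M}\sigma}$ ancestral, suppose some $GG_{\mathcal{M}\sigma}$ is not ancestral and fix a witnessing substructure --- a directed cycle $x_1 \to \cdots \to x_k \to x_1$, or a directed path from $x$ to $y$ with $x \leftrightarrow y$ (the undirected case is analogous). Walking along this substructure starting from the relational context of $x_1$ (resp.\ $x$) relative to some item of type $\mathcal{B}$, apply abstraction (fact (ii)) to each edge in turn; because \texttt{extend} is deterministic on the relational paths encountered, the images fit together into, respectively, a closed directed walk --- hence a directed cycle --- or a directed path plus a bidirected edge between its endpoints in $AGG_{\mathcal{M}\mathcal{B}}$, so $AGG_{\mathcal{M}\mathcal{B}}$ is not ancestral. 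The only bookkeeping is to check the images close up at the end of the walk, which holds because the terminal relational variable is determined by the same relational path that determines $x_1$ (resp.\ $y$) and by the perspective $\mathcal{B}$.

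For all $GG_{\mathcal{M}\sigma}$ ancestral $\Rightarrow$ $AGG_{\mathcal{M}\mathcal{B}}$ ancestral, suppose $AGG_{\mathcal{M}\mathcal{B}}$ is not ancestral and fix a witness: a directed cycle, or a directed path $P.X \to \cdots \to Q.Y$ together with an edge $P.X \leftrightarrow Q.Y$ (again the undirected case is analogous). Apply grounding (fact (i)) to this finite subgraph to obtain a skeleton $\sigma$ and instances such that $GG_{\mathcal{M}\sigma}$ realizes all of its edges between the corresponding instances at once; the realized image of a directed cycle is a directed cycle, and that of a directed path plus a bidirected edge between its endpoints is an almost-directed cycle, so $GG_{\mathcal{M}\sigma}$ is not ancestral --- contradicting the hypothesis.

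I expect the main obstacle to be the simultaneous-realization strengthening in fact (i): the results of \citet{maier2014reasoning} directly yield a witnessing skeleton for a single AGG edge, whereas here I need one skeleton whose ground graph realizes a whole directed path --- and an incident bidirected edge --- at once, with no accidental identification of instances that would shortcut or erase the witness. The plan for this step is to build $\sigma$ by concatenating the per-edge witnessing skeletons along the items shared between consecutive relational paths on the AGG walk --- essentially instantiating a tree-like skeleton that follows the walk --- and then to verify, dependency by dependency, that the resulting ground graph contains exactly the intended edges on those instances (each dependency still fires, giving all the needed edges; distinct relational contexts live on distinct items, preventing spurious collapses; intersection variables of the AGG correspond to genuine co-located instances, so their inherited edges are reproduced). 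A minor secondary point, in the first direction, is that the hop threshold of $AGG_{\mathcal{M}\mathcal{B}}$ must be large enough for abstraction to cover the whole witnessing walk; this is ensured by the standing maximum-hop-threshold assumption, or simply by noting that a violation inside a ground graph spans only finitely many hops.
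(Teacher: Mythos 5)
Your forward direction (AGG ancestral implies all ground graphs ancestral) is essentially the paper's own argument: the paper also proceeds by contraposition, lifting a directed or almost-directed cycle from a ground graph into the AGG via the soundness/completeness of the \texttt{extend} construction, and it disposes of the undirected-edge condition by observing that under acyclicity and no selection bias AGGs contain no undirected edges at all (slightly cleaner than your ``analogous'' treatment). One caveat you flag but dismiss too quickly: your closing-up claim that ``the terminal relational variable is determined by the same relational path'' is not true in general --- a single ground instance can instantiate several distinct relational variables from the same perspective (this is exactly why AGGs need intersection variables), so the lifted walk need not return to the same AGG node without further argument. The paper's proof has the same looseness, so this does not put you behind it, but it is a real soft spot in both.

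The substantive difference is that you attempt the converse direction (all ground graphs ancestral implies the AGG is ancestral), which the ``if and only if'' in the statement demands but which the paper silently drops: its appendix restates the lemma as a one-way implication and proves only that. Your converse rests on a ``simultaneous realization'' strengthening of soundness --- a single skeleton whose ground graph realizes an entire AGG directed cycle (or directed path plus bidirected edge) on a coherent set of instances. You correctly identify this as the main obstacle, but the proposal only sketches a plan (concatenating per-edge witnessing skeletons along the walk) and does not carry it out. The difficulty is genuine: each AGG edge $V_i \to V_{i+1}$ is witnessed only by \emph{some} pair of instances in the instantiations of $V_i$ and $V_{i+1}$, and per-edge soundness does not guarantee that the witnessing instance of $V_{i+1}$ for the $i$-th edge coincides with the one for the $(i{+}1)$-st edge, nor that the walk returns to the original instance of $V_1$; cardinality constraints can further obstruct naive gluing of skeletons. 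So as written the converse contains a gap --- albeit a gap in a direction the paper never proves either, so closing it would strengthen the paper's result rather than merely reproduce it.
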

Generally, the set underlying dependencies $\mathcal{D}_\textbf{O}$ is not associated with a single MAAGG, but with the class of Markov equivalence defined as \textit{O-Equiv}($\mathcal{D}_{\textbf{O}}$). Therefore, we define another abstraction, based on PAGs and AGGs, that represents this equivalence class: 
\begin{definition}[Partial Ancestral Abstract Ground Graph (PAAGG)]
Given a relational causal model $\mathcal{M}_\textbf{L}(\mathcal{S},\mathcal{D})$ with hop threshold $h$, the respective $MAAGG_{\mathcal{M}\mathcal{B}_{h'}}$, and its equivalence class \textit{O-Equiv}($\mathcal{D}_{\textbf{O}}$), and a perspective $\mathcal{B}$, the partial ancestral abstract ground graph $PAAGG_{\mathbfcal{M}\mathcal{B}_{h'}}$ is a bidirected PAG, with hop threshold $h'\geq h$ comprising:
\begin{itemize}[wide, labelwidth=!, labelindent=0pt]
    \item The same set of nodes and adjacencies as the MAAGG;
    \item Edges containing three kinds of marks: ◦, \textemdash and $\rightarrow$, which are used to represent the variance and invariance of the equivalence class \textit{O-Equiv}($\mathcal{D}_{\textbf{O}}$).
    % , and thus of models belonging to \textit{O-Equiv}($\mathcal{D}_{\textbf{O}}$).
\end{itemize}
\end{definition}

% To perform causal discovery on the ARM by applying relational $d$-separation, we define a new lifted representation based on $AGG$s:

% From this set, it's possible to obtain the \textit{O-Equiv}($\mathcal{D}_{\textbf{O}}$) class of relational models that share the same observed dependencies. To support the concept of variant and invariant dependency marks between models, a new representation inspired by PAGs is defined as follows: 
% , without including latent variables, can represent the causal and conditional independence relations for observed relational variables:

% the underlying dependencies $\mathcal{D}$ over a set of observed variables \textbf{O} (i.e., $\mathcal{D}_{\textbf{O}}$). From this set, it's possible to obtain the \textit{O-Equiv}($\mathcal{D}_{\textbf{O}}$) class of relational models that share the same observed dependencies. To support the concept of variant and invariant dependency marks between models, a new representation inspired by PAGs is defined as follows:
The following proposition provides a description of the soundness and completeness of the new representation:
\begin{proposition}
    Given a relational causal model $\mathcal{M}_\textbf{L}(\mathcal{S},\mathcal{D})$ with hop threshold $h$, and its respective latent abstract ground graph $G$, the constructed MAAGG probabilistically and causally represents $G$ and thus the underlying relational causal model. Furthermore, assuming a sound and complete procedure to construct the PAAGG $G'$, it correctly represents the Markov equivalence class of the produced MAAGG
    % \cite{ZHANG20081873}
    and, therefore, of $G$ and the underlying model $\mathcal{M}_\textbf{L}$. 
\end{proposition}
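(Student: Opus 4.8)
The plan is to split the statement into its two halves and reduce each to established i.i.d.\ theory, transported through the abstract‑ground‑graph correspondence. For the first half I would treat the latent abstract ground graph $G = LAGG_{\mathcal{M}_\textbf{L}\mathcal{B}h}$ as an ordinary DAG over the relational variables in $\textbf{O}\cup\textbf{L}$ together with the intersection variables (legitimate by the acyclicity assumption), and argue that the MAAGG is precisely the canonical maximal ancestral graph obtained from $G$ by marginalizing the latent relational variables in $\textbf{L}$. For the second half I would invoke the Markov‑equivalence theory for MAGs to show that the PAAGG, defined as the bidirected PAG carrying exactly the invariant marks of \textit{O-Equiv}($\mathcal{D}_\textbf{O}$), is the maximally informative representative of that class, and that any sound and complete construction must output precisely this graph.

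For the MAAGG half, the key steps in order are: (i) use the preceding Lemma to conclude that $G$ is ancestral, since every ground graph $GG_{\mathcal{M}_\textbf{L}\sigma}$ is a DAG and hence ancestral; (ii) verify that the construction rules in the MAAGG definition — adjacency of $A,B\in\textbf{O}$ iff there is an inducing path relative to $\langle\textbf{L}\rangle$ in $G$, plus the ancestry / non‑ancestry orientation — coincide with the standard DAG‑to‑MAG transformation of \citet{Richardson2002AncestralGM}, so that the result is a valid MAG (ancestral and maximal), where the ``no latent descendants'' assumption ensures the marginalization is well behaved; (iii) transfer probabilistic representation by chaining three equivalences: $m$-separation in the MAAGG $\Leftrightarrow$ $d$-separation in $G$ restricted to $\textbf{O}$ (MAG construction), $d$-separation in $G$ $\Leftrightarrow$ conditional independence holding in all ground graphs (relational $d$-separation of \citet{maier2014reasoning}), and $m$-separation in the MAAGG $\Leftrightarrow$ independence in all of its ground graphs ($m$-separation soundness/completeness for abstract ground graphs), yielding that the MAAGG and $G$ entail the same independences over $\textbf{O}$; (iv) transfer causal representation, i.e.\ that ancestral relations and invariant arrowhead/tail marks are read off the MAAGG in agreement with $G$ and hence with $\mathcal{M}_\textbf{L}$. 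A subsidiary point is to check that the inflated hop threshold $h'\ge h$ is large enough to contain every inducing path that contributes an edge, which requires exhibiting a finite bound on the length of such paths in terms of $h$ and the schema.

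For the PAAGG half I would: (i) identify \textit{O-Equiv}($\mathcal{D}_\textbf{O}$) with the set of MAAGGs that are $m$-separation equivalent to the one constructed above, and recall the characterization of Markov equivalence for MAGs (same adjacencies, same unshielded colliders, same colliders on discriminating paths) due to Ali, Richardson and Spirtes, together with \citet{ZHANG20081873}'s completeness of the augmented orientation rules; (ii) observe that the PAAGG, by its definition, has the common adjacencies of the class and sets each edge mark to its invariant value when invariant across the class and to a circle mark otherwise — i.e.\ it is by construction the maximally informative PAAGG; (iii) conclude that a sound procedure orients only correct invariant marks and a complete procedure orients all of them, so a sound and complete construction outputs exactly this maximally informative PAAGG, which therefore correctly represents \textit{O-Equiv}($\mathcal{D}_\textbf{O}$); (iv) compose with the first half — since the MAAGG probabilistically and causally represents $G$ and $\mathcal{M}_\textbf{L}$, so does its equivalence‑class representative, the PAAGG.

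The main obstacle I expect lies in steps (ii)–(iii) of the MAAGG half: justifying that the DAG‑to‑MAG machinery — inducing paths, ancestral and collider structure, maximality — applies verbatim on the \emph{lifted} graph and remains faithful to every ground graph simultaneously, including the correct treatment of intersection variables and the need for the larger, bounded hop threshold $h'$. The Lemma handles the ancestrality bridge, but the remaining bookkeeping — that an inducing path on the LAGG projects to an inducing path on a bounded‑hop extension of each ground graph and conversely — is where the relational setting genuinely departs from the i.i.d.\ case and will need the most care.
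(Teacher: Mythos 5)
Your proposal is correct and follows essentially the same route as the paper: reduce the MAAGG half to the standard DAG-to-MAG marginalization (the paper cites Theorem 4.18 of \citet{Richardson2002AncestralGM}) bridged to the ground graphs via the preceding ancestrality lemma and relational $d$-separation, and reduce the PAAGG half to \citet{ZHANG20081873}'s Markov-equivalence and completeness theory for MAGs/PAGs. Your treatment is in fact more explicit than the paper's about the chain of separation equivalences and about the hop-threshold bookkeeping, which the paper defers to its separate Proposition~\ref{prop:hop} rather than resolving inside this proof.
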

The equivalence class of MAAGGs represented by the PAAGG corresponds to multiple LRCMs that share the same set of dependencies over $\mathcal{D}_\textbf{O}$. Thus, it is possible to define a new model from the PAAGG, which represents the equivalence class \textit{O-Equiv}($\mathcal{D}_{\textbf{O}}$):

% \begin{definition}[Partial Ancestral Relational Model (PARM)]
% Given a $PAAGG_{\mathbfcal{M}\mathcal{B}_h'}$ for a LRCM $\mathcal{M}_\textbf{L}(\mathcal{S},\mathcal{D})$, a partial ancestral model $\mathbfcal{M}(\mathcal{S}_\textbf{O},\mathcal{D}')$ is the relational model corresponding to the $PAAGG$. The PARM is defined over a relational schema containing only observed attribute classes $\mathcal{S}_\textbf{O}=(\mathcal{E}, \mathcal{R}, \mathcal{A}_\textbf{O}, \textit{card})$ and a set $\mathcal{D}'$ of dependencies, which are used to represent the causality information for all models in the equivalence class \textit{O-Equiv}($\mathcal{D}_{\textbf{O}}$).
% \end{definition}

\begin{definition}[Partial Ancestral Relational Model (PARM)]
Given a LRCM $\mathcal{M}_\textbf{L}(\mathcal{S},\mathcal{D})$ and its respective $PAAGG_{\mathbfcal{M}\mathcal{B}_{h'}}$ for the equivalence class \textit{O-Equiv}($\mathcal{D}_{\textbf{O}}$), a partial ancestral model $\mathbfcal{M}(\mathcal{S}_\textbf{O},\mathcal{D}')$ is the relational causal model abstracted by the PAAGG that represents \textit{O-Equiv}($\mathcal{D}_{\textbf{O}}$). The PARM is defined over a relational schema containing only observed attribute classes $\mathcal{S}_\textbf{O}=(\mathcal{E}, \mathcal{R}, \mathcal{A}_\textbf{O}, \textit{card})$ and a set $\mathcal{D}'$ of dependencies, which are used to represent the causality information for all models in \textit{O-Equiv}($\mathcal{D}_{\textbf{O}}$).
\end{definition}
The definitions of $MAAGG$ and $PAAGG$ allow a limit on the hop threshold higher than that of the underlying equivalence class of models. This is because the set of possible underlying dependencies with at most the same hop threshold would not capture paths that, in addition to the allowed threshold for observed variables, include unobserved variables. 
The higher hop threshold implied by definition 2 for $PAAGG$s is required to obtain an abstraction that correctly represents the presence of latent confounders in the underlying model. This is due to the presence of latent confounders (Definition 3) in a relational causal model and to the absence of latent parents and children for latent variables.
\begin{proposition} \label{prop:hop}
Given a latent relational causal model $\mathcal{M}_\textbf{L}(\mathcal{S},\mathcal{D})$ with hop threshold $h$ and its corresponding PARM $\mathbfcal{M}$, the hop threshold $h'$ of the $PAAGG_{\mathbfcal{M}\mathcal{B}}$ for any perspective $\mathcal{B}$ can be at most $2h$.
\end{proposition}
 
\subsection{The RelFCI Algorithm}
In this section, we present the Relational Fast Causal Inference (RelFCI) algorithm, a sound and complete procedure for determining causal relationships from relational data when unobserved variables are present. 
RelFCI follows a three-step approach similar to the FCI algorithm for Bayesian networks (Spirtes 2013). RelFCI adapts the FCI procedure to relational causal models, similar to how RCD \citep{maier2013sound} does with the PC algorithm \citep{spirtes2000causation}. Given that FCI is an extension of PC, RelFCI follows the same orientation rules as RCD and also assumes a prior relational skeleton. However, it differs from RCD in two ways: [1] RelFCI uses partial ancestral abstract ground graphs, one for each perspective, as the underlying representation; [2] RelFCI applies seven additional rules from FCI to ensure soundness and completeness with latent relational data.

Algorithm \ref{alg:main_alg} shows the high-level pseudocode for RelFCI, and Appendix \ref{algo} contains the complete algorithm pseudocode. RelFCI, like RCD, computes the set of potential dependencies in canonical form, limited by the $h'=2h$ threshold. Starting from these dependencies, the algorithm constructs PAAGGs, one for each perspective, all with ◦ edge marks. The first step is to remove potential dependencies using conditional independence tests with conditioning sets of increasing size drawn from the collection of adjacencies of the two nodes considered. After deleting all possible edges, a set of unshielded triples is obtained. The second phase detects colliders while finding potential additional independence relationships between the triples' variables and potentially eliminating the respective edges. Even though RelFCI operates on different graphical models compared to FCI and RCD, it is straightforward to adapt their rules for RelFCI. The third step thus performs edge orientation by applying RBO, KNC, CA, and MK3 rules from RCD first, then rules R4 through R10 from FCI. A detailed description of these rules is provided in Appendix \ref{rules}. In contrast to the first step, the latter two differ from FCI because they apply the RBO rule from RCD and propagate each edge orientation to other PAAGGs. All steps are performed on all PAAGGs to accurately identify additional separation sets for each perspective. 

Before demonstrating the soundness and completeness of RelFCI, we first clarify how the algorithm handles relational dependencies and edge orientations in PAAGGs. 
Since with ◦ marks RelFCI produces an equivalence class rather than a single causal model, certain underlying dependencies remain ambiguous. To address this, we distinguish between \textit{required dependencies}, which must be oriented in a specific direction to respect the PAAGG orientation, and \textit{possible dependencies}, which may have alternative orientations while remaining consistent with the learned PAAGG. With this new distinction, it is then possible to define the propagation of edges orientation across all PAAGGs for every perspective in a given LRCM, following a similar approach to the one described in RCD \cite{maier2013sound} for regular AGGs. A detailed explanation of these aspects is provided in Appendices \ref{deps} and \ref{edges}.

\begin{algorithm}[tb]
\caption{RelFCI algorithm}
\label{alg:main_alg}
\textbf{Input}: schema, oracle, threshold \\
\textbf{Output}: Dependencies
\begin{algorithmic}[1] %[1] enables line numbers
\STATE $PDs \gets$ get potential dependencies from the base schema with 2*threshold
\STATE $PAAGGs \gets$ construct PAAGGs from set of potential dependencies $PDs$
\STATE $S \gets \{\}$ \\
% \verb|// Find independent variables and| \\
% \verb|unshielded triples|
\STATE $PAAGGs, S, U \gets$ find all independent variables in the graphs, storing separating sets and unshielded triples
% \verb|obtainInitialSkeleton|(PAAGGs, S)$ \\
% \verb|// Find and orient v-structures|
\STATE $PAAGGS, S \gets$ orient v-structures using CD, starting from unshielded triples in $U$
% \verb|orientVStructures|(PAAGGs, S, U)$ 
% \verb|// Apply orientations rules from RCD, FCI| \\
\STATE $PAAGGs, S \gets$ orient PAAGGs edges using RCD and FCI rules
% \verb|performEdgeOrientation|(PAAGGs, S)$
\STATE $Deps \gets$ retrieve underlying dependencies from the edges of oriented PAAGGs
% \verb|retrieveDepsFromPAAGGs|(PAAGGs)$
\STATE \textbf{return} Deps
\end{algorithmic}
\end{algorithm}

\subsection{Soundness}
\citet{maier2013sound} prove the soundness of CD, KNC, CA, MR3, and the new RBO rule using a proof derived from the soundness definition presented in \citet{meek1995causal}. Thus, we will focus on the soundness of the remaining rules R4-R10 adapted from FCI \citep{ZHANG20081873}.
\begin{theorem} \label{theo:sound}
Let G be the partially oriented PAAGG from perspective B, with the correct adjacencies, unshielded colliders correctly orientated through CD and RBO, and as many edges as possible oriented through KNC, CA, MR3, and the purely common cause of RBO. Then, FCI's rules R4-R10 and the orientation propagations are sound.
\end{theorem}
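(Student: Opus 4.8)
The plan is to reduce the soundness of R4--R10 on a PAAGG to their known soundness on ordinary partial ancestral graphs \citep{ZHANG20081873}, exploiting the fact that the MAAGG is a bona fide maximal ancestral graph over the observed relational (and observed intersection) variables of a fixed perspective and that its PAAGG is exactly the PAG of the associated Markov equivalence class. Here a rule is sound if every arrowhead or tail it substitutes for a circle in the current partially oriented PAAGG $G$ is invariant in \emph{O-Equiv}($\mathcal{D}_{\textbf{O}}$) --- i.e., carries that same mark on the corresponding edge of the true MAAGG --- and the propagation step is sound if the marks it induces on the PAAGGs of the remaining perspectives are invariant as well. I would argue by induction on the number of applied orientations: the hypothesis of the theorem supplies the base case (correct adjacencies; unshielded colliders correct by CD and RBO; all marks forced by KNC, CA, MR3 and the purely common cause part of RBO already present, their correctness being due to \citet{maier2013sound}), so it remains to show that each application of R4--R10 or of a propagation writes only invariant marks.

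The first ingredient is the bridge between the relational and the propositional settings. By Lemma~1 the abstract ground graph underlying the model is ancestral exactly when every ground graph $GG_{\mathcal{M}\sigma}$ is; hence the MAAGG --- obtained from the LAGG by the MAG construction of \citet{ZHANG20081873} once the latent relational and latent intersection variables are marginalized (there being no selection variables) --- is an ancestral graph, and by maximality it is a MAG on the node set of observed relational and observed intersection variables of perspective $\mathcal{B}$; by the representation proposition its $m$-separations coincide with relational $m$-separation across all ground graphs up to the hop threshold, and the PAAGG is precisely the PAG of this MAG's equivalence class. Consequently, under the stated hypotheses $G$ is a propositional PAG in exactly the state in which the FCI augmentation rules are designed to operate, and the invariance characterization of PAG edge marks becomes available.

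With the bridge in place I would treat the rules one family at a time --- R4 (discriminating paths), R5--R7 (circle components and tail propagation), R8--R10 (propagating arrowheads and tails into directed edges) --- each time noting that the rule's premise is a purely graphical configuration among the nodes of $G$ and that, since the adjacencies and the already-placed marks of $G$ agree with those of the true MAAGG/PAAGG, the same configuration occurs there; the soundness argument of \citet{ZHANG20081873} for that rule then transfers verbatim and certifies that the newly written mark is invariant. Two obligations are specific to the relational setting. First, one must confirm that these configurations are well formed in a PAAGG despite the presence of intersection variables, which inherit the edges of both constituents and can therefore lie on paths with no single-variable counterpart, so that no \emph{spurious} premise is ever matched. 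Second, for the propagation step one uses that an invariant mark on a PAAGG edge for perspective $\mathcal{B}$ corresponds --- via the \verb|extend| construction and the required/possible-dependency bookkeeping of Appendices~\ref{deps} and \ref{edges} --- to a constraint on an underlying relational dependency, which then forces the same mark on every PAAGG in which that dependency surfaces; soundness of this step is the mixed-edge analogue of the orientation-propagation argument already used for RCD \citep{maier2013sound}.

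The main obstacle is the first of those two obligations: ruling out spurious premises for R4--R10 once intersection variables and the per-perspective restriction are in play. Concretely, I must show that every discriminating path and every uncovered possibly-directed path that the rules detect in $G$ reflects a genuine such path in the true MAAGG --- otherwise a rule could write an invariant-looking mark that is not in fact invariant. This is where Lemma~1 (so that the MAAGG is a legitimate MAG and the structural lemmas of \citet{ZHANG20081873} apply at all) and the representation proposition (so that PAAGG configurations faithfully mirror MAAGG ones, even through intersection variables) do the real work, with the per-perspective aspect handled by matching each relevant path to the dependency that produced it under \verb|extend| and invoking the soundness of RCD's own rules on the other perspectives' graphs. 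By contrast, truncation of long paths at the hop threshold is not a threat to soundness --- a path too long to appear in $G$ merely prevents a rule from firing, which can cost completeness but not soundness --- so the bound of Proposition~\ref{prop:hop} is needed for the completeness proof rather than for this one.
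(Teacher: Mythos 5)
Your proposal follows essentially the same route as the paper's proof: use Lemma~1 to certify that the MAAGG is a genuine maximal ancestral graph so that the per-rule soundness arguments of \citet{spirtes1995causal} (for R4) and \citet{ZHANG20081873} (for R5--R10) transfer to the PAAGG, and then appeal to the RCD orientation-propagation argument of \citet{maier2013sound} for the propagation step. Your explicit flagging of the obligation to rule out spurious rule premises created by intersection variables is a point the paper's proof leaves implicit, but it is a refinement of, not a departure from, the same reduction.
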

The proof is an extension of those presented by \citet{spirtes2000causation} for rule R4 and \citet{ZHANG20081873} for rules R5-R10.
\subsection{Completeness}
A set of orientation rules is called complete if it generates a maximally informative graph. In PAAGGs, each circle corresponds to a variation mark in the equivalence class \textit{O-Equiv}($\mathcal{D}_{\textbf{O}}$) (modified from \citet{ZHANG20081873}). The rules employed in FCI can be divided into two groups based on their function: those used to identify arrowhead invariants (CD, KNC, CA, MR3, and R4) and those used to identify tail invariants (R5-R10). According to \citet{ali2012towards}, the first set of rules covers all invariant arrowheads. Lemma \ref{lemma:arrow} shows that PAAGGs have similar arrowhead completeness, which can be used to prove overall rule completeness.
\begin{lemma} \label{lemma:arrow}
Let G be a partially oriented PAAGG with correct adjacencies. Then, exhaustively applying CD, RBO, KNC, CA, and MR3, all with orientation propagation of edges, produces a graph G' in which for every circle mark, there exists a MAAGG in the \textit{O-Equiv}($\mathcal{D}_{\textbf{O}}$) class with a corresponding tail mark.
\end{lemma}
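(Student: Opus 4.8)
The statement is the relational analogue of the arrowhead-completeness result for FCI (Ali et al., Zhang). The plan is to reduce the PAAGG claim to the already-known propositional result by exploiting the fact that a MAAGG is, by the definition given above and by Lemma 1, an ancestral graph whose adjacency and orientation semantics match those of an ordinary MAG over the relational variables in $\textbf{O}$, and that the rules CD, RBO, KNC, CA, MR3 (with propagation) are exactly the RCD/FCI arrowhead rules transported to that graph. So I would first fix a perspective $\mathcal{B}$ and the partially oriented $PAAGG$ $G$ with correct adjacencies, and show that after exhaustively applying the five rules one obtains $G'$ whose invariant arrowheads coincide with those of the true MAAGG $\mathbfcal{G}$ — i.e. that these rules are \emph{arrowhead-complete} on PAAGGs. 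For this I would invoke the soundness already established (Theorem 1 and the cited soundness of CD/KNC/CA/MR3/RBO) to get ``$\Rightarrow$'' (every arrowhead the rules produce is invariant), and then argue ``$\Leftarrow$'': any arrowhead left as a circle in $G'$ is genuinely variant, meaning there is a MAAGG in \textit{O-Equiv}($\mathcal{D}_{\textbf{O}}$) realizing a tail there. That last direction is the substance of the lemma.

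**Key steps.** (1) Set up the correspondence: a MAAGG in the class is determined by a choice of orientation of the observed dependencies $\mathcal{D}_{\textbf{O}}$ consistent with $d$-faithfulness, acyclicity, and the latent-structure assumptions; by Lemma 1 and Proposition 1 this MAAGG is ancestral and probabilistically/causally represents the corresponding LAGG, so membership in \textit{O-Equiv}($\mathcal{D}_{\textbf{O}}$) is exactly Markov equivalence of these MAAGGs. (2) Show the PAAGG's adjacencies and the invariant marks common to the whole class are preserved by the five rules — this is the soundness half, already in hand. (3) For completeness, take any circle mark $\circ$ on an edge $A \,\ast\!\!-\!\!\circ\, B$ in $G'$. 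Because the five rules are exhausted, none of their triggering patterns applies at that mark; I would then transport the propositional argument of Ali et al. (2012): construct explicitly a MAAGG $\mathbfcal{G}^{\mathrm{tail}}$ in the equivalence class in which that endpoint is a tail, by orienting the underlying dependency appropriately and checking that no discriminating path, no new unshielded collider, and no ancestral-graph violation is created — the ``no rule fires'' hypothesis is exactly what guarantees this. The relational twist is that an edge in $G$ may be yielded by a single dependency in $\mathcal{D}_{\textbf{O}}$ but appear, via \verb|extend|, as several edges across the PAAGGs of different perspectives; so after reorienting I must check consistency under the orientation-propagation mechanism (Appendix \ref{edges}) — i.e. that the reoriented MAAGG is simultaneously valid in every perspective's abstract ground graph. (4) Conclude: each circle in $G'$ is witnessed by some MAAGG in the class with a tail there, which is the statement.

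**Main obstacle.** The hard part is step (3)–(4): the propositional arrowhead-completeness proof builds the witnessing MAG by a delicate local surgery on one edge and then verifies globally that the result is still a MAG in the same Markov class; in the relational setting ``one edge'' is really ``one dependency in $\mathcal{D}_{\textbf{O}}$,'' whose reorientation simultaneously perturbs possibly many edges in possibly many PAAGGs (including intersection variables, which inherit edges from both participants). I will need the required-vs-possible-dependency distinction and the propagation consistency lemma from Appendices \ref{deps}–\ref{edges} to argue that a tail-orientation that is locally admissible in perspective $\mathcal{B}$ lifts to a globally admissible LRCM, and in particular that the elevated hop threshold $h' = 2h$ (Proposition \ref{prop:hop}) is large enough that no inducing path through latent variables is missed when re-checking maximality and ancestrality. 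Once that bookkeeping is discharged, the remaining arguments are the standard FCI ones applied edgewise, so I expect the write-up to be ``transport plus one nontrivial consistency check,'' with the consistency check being where the real work lies.
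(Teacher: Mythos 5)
Your plan follows essentially the same route as the paper: both reduce the claim to Theorem 4.3 of \citet{ali2012towards} (arrowhead completeness of the propositional rules), use Lemma 1 to transfer ancestrality between the lifted representation and the ground graphs, and close with the same contradiction (a circle with no tail witness would be an invariant arrowhead, which the arrowhead-complete rules would already have oriented). The paper replays Ali et al.'s four-step chordal-decomposition argument explicitly --- the witnessing MAAGGs with opposite orientations of a given edge come from the two consistent total orderings of the residual chordal undirected graph --- whereas you frame step (3) as a per-circle ``local surgery'' constructing a witness directly; these are the same argument seen from two ends. You also correctly flag the genuinely relational difficulty (one dependency in $\mathcal{D}_{\textbf{O}}$ yields many edges across many perspectives' PAAGGs, so the witness must survive orientation propagation), which the paper dispatches by citing \citet{maier2013sound} rather than arguing it afresh.

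The one concrete hole is your treatment of RBO. You write that CD, RBO, KNC, CA, MR3 ``are exactly the RCD/FCI arrowhead rules transported'' and propose to cover all of them by transporting the propositional completeness result. But RBO has no propositional analogue --- it is the intrinsically relational rule that exploits cardinality and the $[I_X]\mbox{--}[I_X,\dots,I_Y]\mbox{--}[I_X,\dots,I_X]$ perspective structure --- so Ali et al.'s theorem says nothing about the arrowheads it produces or about whether the residual graph remains chordal after RBO fires. The paper handles this separately, invoking the completeness proof for the purely-common-cause case of RBO from \citet{maier2013sound} before running the final contradiction. Your write-up would need either that citation or a direct argument that RBO's orientations are compatible with (and accounted for in) the chordal-ordering construction; without it, the transport argument establishes arrowhead completeness only for CD/KNC/CA/MR3.
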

Following \citet{ali2012towards}'s proofs for MAG, we apply the same reasoning for MAAGG and expand it with the RBO rule. The orientation propagation proof is identical to the one offered in \citet{maier2013sound}.
We now provide tail completeness of the remaining set of rules.
\begin{lemma} \label{lemma:tail}
Let G' be the partially oriented PAAGG with correct adjacencies and unshielded colliders, and as many edges orientated with KNC, CA, and MR3, consistently applying edge propagation. Then, applying rules R5-R10, along with orientation propagation, provides a graph G'' such that for every circle mark, there exists a MAAGG in \textit{O-Equiv}($\mathcal{D}_{\textbf{O}}$) in which the associated mark is an arrowhead.
\end{lemma}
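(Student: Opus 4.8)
The plan is to lift Zhang's tail-completeness argument for PAGs (\citet{ZHANG20081873}, as refined in \citet{ali2012towards}) to the PAAGG setting, using the structural correspondence between a PAAGG and the MAAGGs in its equivalence class that was established in Proposition 1 and Lemma 1. Concretely, let $G''$ be the graph obtained from $G'$ after exhaustively applying R5--R10 together with orientation propagation. Fix an arbitrary circle mark in $G''$, say at the $A$-end of an edge $A \mathbin{\circ\!\!-\!\!*} B$. I want to exhibit a MAAGG $\mathbfcal{G}^* \in \textit{O-Equiv}(\mathcal{D}_{\textbf{O}})$ that agrees with $G''$ on every already-oriented (arrowhead/tail) mark but has an \emph{arrowhead} at the $A$-end of this edge; together with Lemma \ref{lemma:arrow} (which supplies, for every circle, a MAAGG witnessing a tail), this shows every circle is genuinely variant, i.e. $G''$ is maximally informative.

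The key steps, in order. First, I would invoke Lemma \ref{lemma:arrow}: after CD, RBO, KNC, CA, MR3 with propagation, the arrowhead marks of $G'$ are exactly the invariant arrowheads of the class, so any remaining circle is not forced to be an arrowhead and in particular R5--R10 only ever convert circles to tails, never introduce new arrowheads — this guarantees R5--R10 preserve the adjacency structure and the unshielded-collider structure, hence stay inside the Markov equivalence class (here I would cite Lemma 1 to pass between the AGG/MAAGG level and the ground-graph level, so that ``ancestral'' and ``maximal'' are preserved). Second, for the fixed circle at $A \mathbin{\circ\!\!-\!\!*} B$, I would take the MAAGG $\mathbfcal{G}^*$ produced by Lemma \ref{lemma:arrow} for that same circle (which has a \emph{tail} there), and argue that the dual construction — reversing the role of tails and arrowheads along the relevant uncovered circle path — yields another valid MAAGG in the class with an arrowhead at that end; this is exactly where Zhang's/Ali et al.'s discriminating-path and uncovered-possibly-directed-path constructions get reused, and I would port each of R5, R6, R7 (the ``tails from circles'' rules around circle components) and R8, R9, R10 (the ``tails propagated along potentially directed paths'' rules) one at a time, checking that the relational lift through the \verb|extend| method and intersection-variable inheritance does not create spurious edges. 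Third, I would handle the relational-specific wrinkle: orientation propagation across the PAAGGs for all perspectives. Here I would reuse verbatim the argument of \citet{maier2013sound} that an orientation valid in one perspective's AGG lifts to a consistent orientation in every other perspective's AGG (via the common ground graphs), now applied to circle$\to$tail conversions rather than circle$\to$arrowhead ones, so that the witness MAAGG $\mathbfcal{G}^*$ constructed for perspective $B$ has consistent counterparts in all other perspectives — this is what makes the single-perspective PAAGG statement extend to the whole collection.

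The main obstacle I anticipate is the interaction between the doubled hop threshold $h' = 2h$ (Proposition \ref{prop:hop}) and the relational inducing-path characterization of adjacency in a MAAGG. Zhang's tail rules are justified by manipulating paths in a MAG; in our setting each such path is a path in a MAAGG whose edges are images, under \verb|extend|, of relational dependencies possibly routed through latent variables, and I must check that every path-surgery used in R5--R10 (and in the dual witness construction) respects the bounded hop threshold and the ``no latent parents/children of latents'' assumption — i.e. that the witness MAAGG $\mathbfcal{G}^*$ is not only graph-theoretically valid but is actually realizable as a LAGG of some LRCM over the observed schema $\mathcal{S}_\textbf{O}$. Making that realizability argument precise — showing the dual-oriented graph still arises from a legitimate PARM within the $2h$ budget — is the step I expect to require the most care, and it is where a naive transcription of the propositional FCI proof would fail.
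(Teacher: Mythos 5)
Your proposal takes essentially the same route as the paper's proof: both lift Zhang (2008)'s tail-completeness argument to PAAGGs by way of Lemma 1 (to transfer ancestrality between the lifted and ground representations) and handle cross-perspective consistency by reusing the orientation-propagation argument of \citet{maier2013sound}. The only difference is in how the witness MAAGG is produced — where you gesture at a ``dual construction'' from the tail witness of Lemma 2, the paper instead follows Zhang's concrete machinery (properties P1--P4, chordality of the circle-circle subgraph $P^C_{AAGG}$, and an explicit witness graph $H$ built by directing the ${\circ}\!\rightarrow$ and tail-circle edges and orienting the circle component as a collider-free DAG) — which amounts to the same idea.
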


\begin{figure*}[ht]
    \centering
    \includegraphics[width=\textwidth]{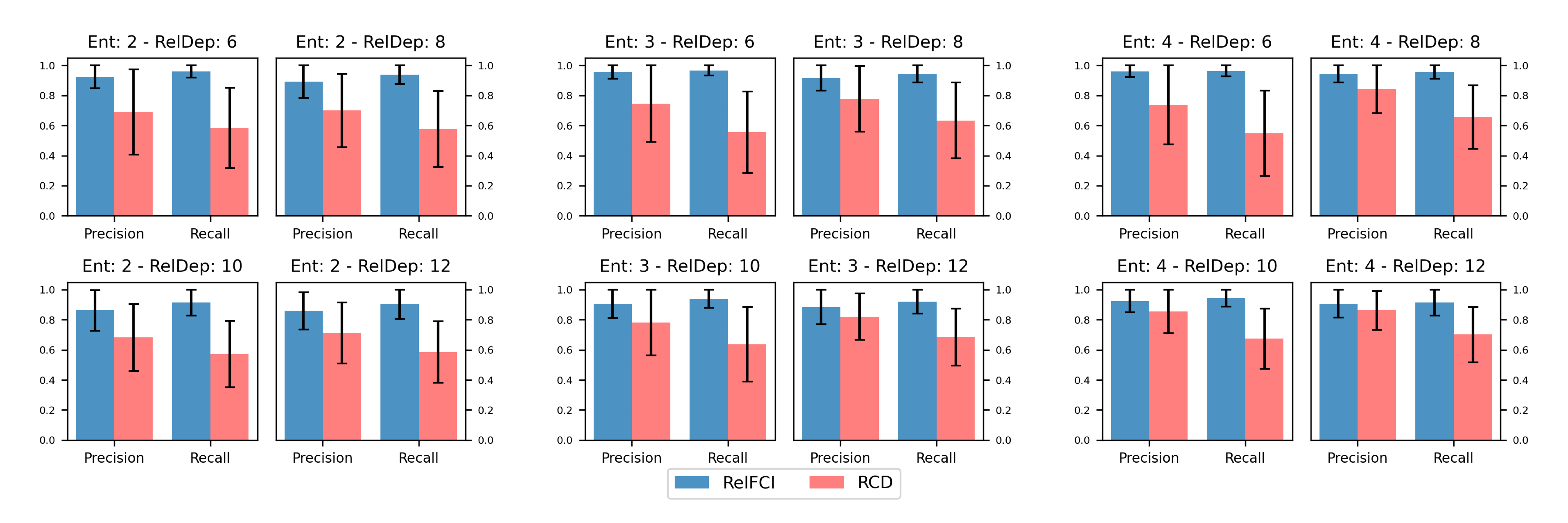}
    \caption{RelFCI and RCD Precision and Recall comparison. Results are combined for both 1 and 2 latent variables. Intervals represent $\pm1$ standard deviation.}
    \label{fig:pre-rec-tot}
\end{figure*}
\begin{figure*}[ht]
    \centering
    \includegraphics[width=\textwidth]{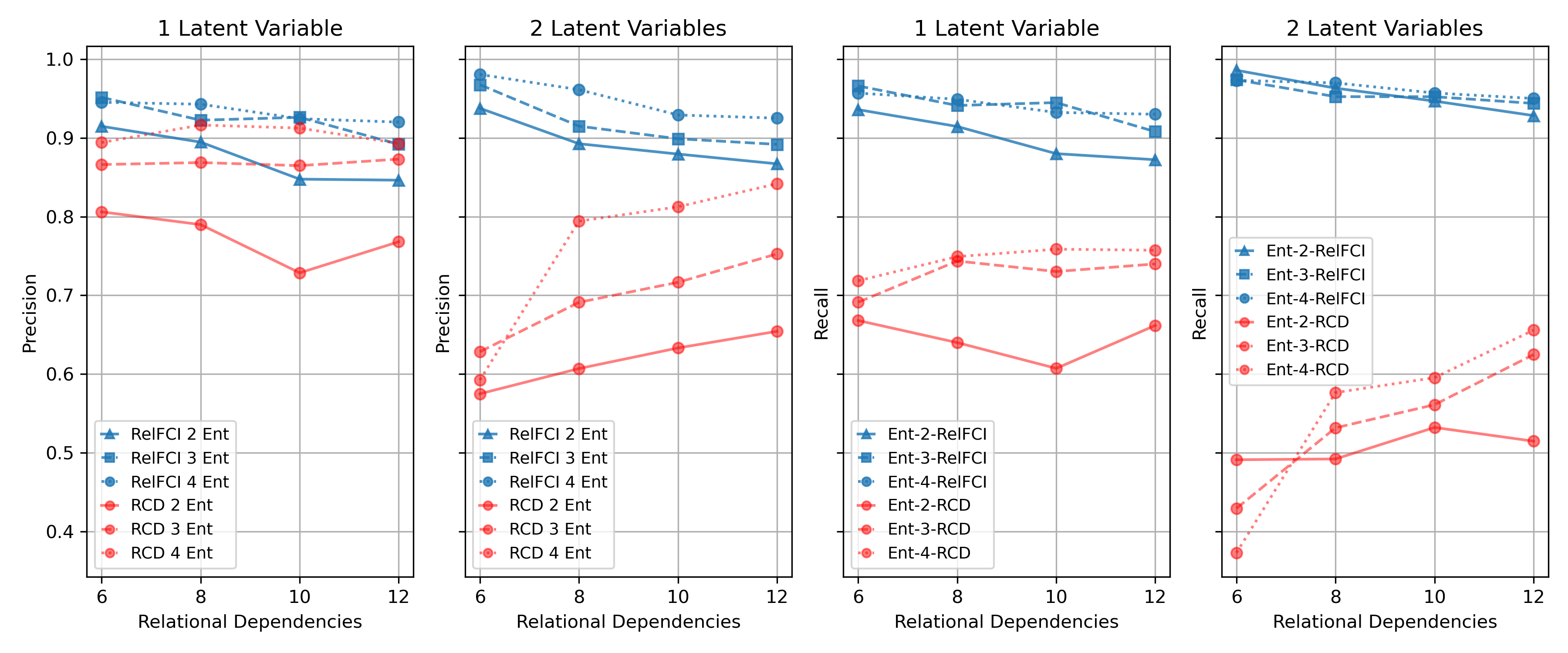}
    \caption{RelFCI and RCD Precision and Recall performance with 1 and 2 latent variables.}
    \label{fig:pre-rec-lat}
\end{figure*}

The proof comes from \cite{ZHANG20081873} tail completeness, establishing that every PAAGG edge ◦\textemdash, ◦\textemdash◦, ◦$\rightarrow$, the circle mark corresponds to an arrowhead in an MAAGG belonging to the equivalence class. With lemmas \ref{lemma:arrow} and \ref{lemma:tail} in place, completeness follows:

\begin{theorem} \label{theo:compl}
    Given a partially oriented PAAGG G with the appropriate set of adjacencies, applying rules CD, KNC, CA, MR3, and RBO extensively, followed by orienting any possible edges with rules R4-R10, all with orientation propagation, yields a maximally informative graph G. 
\end{theorem}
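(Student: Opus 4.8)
The plan is to obtain Theorem~\ref{theo:compl} as a corollary of the two completeness lemmas, Lemma~\ref{lemma:arrow} (arrowhead completeness) and Lemma~\ref{lemma:tail} (tail completeness), together with the soundness statement of Theorem~\ref{theo:sound} and the soundness of CD, KNC, CA, MR3 and RBO from \citet{maier2013sound}, carrying out \citet{ZHANG20081873}'s completeness argument for FCI inside the PAAGG representation. Recall that a PAAGG is \emph{maximally informative} precisely when every surviving circle mark is a genuine variation mark of \textit{O-Equiv}($\mathcal{D}_{\textbf{O}}$), i.e., two MAAGGs in that class disagree on it. So the claim splits into: (i) nothing is over-oriented --- every non-circle mark the algorithm writes is invariant across \textit{O-Equiv}($\mathcal{D}_{\textbf{O}}$); and (ii) nothing is under-oriented --- every circle that remains is variant. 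Part (i) is exactly soundness: CD, KNC, CA, MR3, RBO are sound by \citet{maier2013sound}, and by Theorem~\ref{theo:sound} so are R4--R10 together with the orientation propagations; hence the output is a legitimate PAAGG for \textit{O-Equiv}($\mathcal{D}_{\textbf{O}}$) and every tail and arrowhead it carries is forced in the whole class. It remains to prove (ii).

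For (ii) I would reason position by position. Consider the graph $G'$ obtained after the arrowhead-orienting rules CD, RBO, KNC, CA, MR3 (and the discriminating-path rule R4, which is applied at the head of the R4--R10 batch, before any tail rule) have been run to exhaustion with propagation. By Lemma~\ref{lemma:arrow} --- the PAAGG analogue of the arrowhead-completeness result of \citet{ali2012towards}, extended to include RBO --- every circle mark of $G'$ has a witness MAAGG in \textit{O-Equiv}($\mathcal{D}_{\textbf{O}}$) whose mark at that position is a \emph{tail}; in particular no surviving circle is an invariant arrowhead, so all invariant arrowheads have been found. Now run R5--R10 with propagation, producing $G''$. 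These rules only ever turn a circle into a tail, so the set of circle marks can only shrink and no new arrowhead is created; hence the tail-witness property of $G'$ is inherited by $G''$. Dually, Lemma~\ref{lemma:tail} gives that in $G''$ every surviving circle has a witness MAAGG in \textit{O-Equiv}($\mathcal{D}_{\textbf{O}}$) with an \emph{arrowhead} at that position. Combining, for each circle mark of $G''$ the equivalence class contains one member with a tail and one with an arrowhead there, so the mark is genuinely variant; together with (i), $G''$ is the maximally informative PAAGG for \textit{O-Equiv}($\mathcal{D}_{\textbf{O}}$), which is the assertion of Theorem~\ref{theo:compl}.

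The combining step above is routine bookkeeping; the real difficulty --- and what I expect to be the main obstacle --- is packed into Lemmas~\ref{lemma:arrow} and~\ref{lemma:tail}, where three points need care. First, the FCI orientation rules must be shown to remain valid on a PAAGG, whose adjacencies are not those of an ordinary MAG but are dictated by inducing paths in the LAGG over the inflated hop threshold $h' \le 2h$ of Proposition~\ref{prop:hop}; the route through is the proposition that the MAAGG probabilistically and causally represents the LAGG (together with the lemma characterizing when an abstract ground graph is ancestral), which lets the MAG-level arguments of \citet{ali2012towards} and \citet{ZHANG20081873} apply to each MAAGG in the class. Second, RBO has no i.i.d. counterpart, so its interplay with R4--R10 --- in particular that it leaves no invariant arrowhead unoriented and is not itself unsound in the presence of bidirected edges --- must be checked by replaying \citet{ali2012towards}'s and \citet{ZHANG20081873}'s case analyses with RBO adjoined. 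Third, because there is one PAAGG per perspective and orientations are propagated between them, one must verify that a mark which looks variant in the perspective-$\mathcal{B}$ PAAGG is not secretly forced by a constraint visible only from another perspective, i.e., that propagation neither discards a completeness witness nor introduces an unsound orientation; this is handled by the propagation argument of \citet{maier2013sound}, which transfers verbatim to PAAGGs. Once these are in place, Theorem~\ref{theo:compl} follows immediately from the combining argument.
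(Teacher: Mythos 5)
Your proposal is correct and follows essentially the same route as the paper: the theorem is obtained by combining Lemma~\ref{lemma:arrow} (every surviving circle has a tail witness in \textit{O-Equiv}($\mathcal{D}_{\textbf{O}}$)) with Lemma~\ref{lemma:tail} (every surviving circle has an arrowhead witness), so each remaining circle is a genuine variation mark and the graph is maximally informative. You are in fact somewhat more careful than the paper's three-sentence proof, e.g., in noting that the tail-witness property established after the arrowhead rules must survive the application of R5--R10, and in making the soundness half of ``maximally informative'' explicit via Theorem~\ref{theo:sound}.
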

\begin{proof}
    Lemmas \ref{lemma:arrow} and \ref{lemma:tail} prove that in the output graph $G$ produced by applying all the rules, every remaining circle mark corresponds to both tail and arrowhead variant marks in the \textit{O-Equiv} ($\mathcal{D}_{\textbf{O}}$). As such, the circle mark is considered a variation mark. Thus, by definition, the graph $G$ is maximally informative.
\end{proof}
 
We are now ready to establish the soundness and completeness of RelFCI:
\begin{theorem}
Given a schema and a probability distribution P(\textbf{V}) with $\textbf{V}=\textbf{O}\cup\textbf{L}\cup\textbf{S}$, the output of RelFCI is a correct maximally informative PAAGG, and thus a maximally informative PARM $\mathbfcal{M}$, assuming perfect conditional independence tests and sufficient hop threshold $h'$.
\end{theorem}

\section{Experimental Results} 
\subsection{Setup}\label{sec:setup}
Our RelFCI algorithm implementation\footnote{Code available at \url{https://github.com/edgeslab/RelFCI}.}
% \footnote{Code available at \url{https://github.com/edgeslab/RelFCI}.}
is based on the RFCI algorithm \cite{colombo2012learning} rather than FCI. 

RFCI performs significantly fewer conditional independence tests than other FCI variants. While not proven complete, experiments show it achieves similar accuracy in edge orientation.
We generate synthetic data using a procedure similar to \cite{maier2013sound} but with the addition of introducing latent variables into the schema and model. We generate 1000 random LRCMs from randomly generated schemas for each of the following combinations: number of entities $n\in [2,4]$; $n-1$ relationships with randomly selected cardinalities; attributes per item drawn from a Poisson distribution Pois($\lambda=1$) + 1; and the number of relational dependencies (6, 8, 10, and 12) limited by hop threshold of 2. We additionally require the presence of one or two latent attributes, which are randomly chosen from the set of attributes for relational variables in the LAGG involved in at least two dependencies as the cause variable. The process yields a total of 22,000 synthetic models. We use an oracle to perform conditional independent tests for RelFCI and RCD for all possible perspectives. The results are then averaged over multiple runs for every combination, i.e., averaging over 1000 different LRCMs sharing the same properties.

\subsection{Evaluation}
We evaluated our work by comparing the model derived from the algorithm's dependencies to the ground truth.
We define the latent relational causal model obtained as ground truth by replacing the latent variable with double arrowhead edges using the same Maximal Ancestral Graph construction approach as presented in \citet{ZHANG20081873}. 
We label a missing edge as a false negative, an additional edge as a false positive, and a correct edge as a true positive and compute the precision and recall. Furthermore, to assess the necessity of new rules for relational causal discovery, we also measure the frequency with which each rule was invoked during the RelFCI runs. This last result can be found in Appendix \ref{res}.
\subsection{Results}
Figure \ref{fig:pre-rec-tot} presents a comparative analysis of RelFCI and RCD regarding precision and recall. An apparent discrepancy can be noticed in the results. This difference arises due to latent variables, which RCD fails to handle effectively. As previously discussed, the influence of hidden confounders violates RCD's core assumptions, significantly degrading its accuracy. 
In contrast, our proposed method, RelFCI, is designed to be sound and complete in the presence of latent variables. Since the RFCI implementation can sometimes introduce spurious edges or omit true ones, we expect its precision and recall to be slightly below one, as supported by \citet{colombo2012learning}. Furthermore, RelFCI exhibits a smaller variance than RCD. This indicates that RelFCI produces more consistent and reliable results across different conditions, reinforcing its robustness in handling latent variables. 

Figure \ref{fig:pre-rec-lat} further illustrates the performance trends with either one or two latent variables as the number of entities and dependencies increases. A key observation is that while RCD's performances slightly improve as the number of entities and dependencies grows, its precision and recall remain consistently lower than those of RelFCI. This trend is particularly noticeable in recall, suggesting that RCD benefits marginally from increased structural complexity. RelFCI, instead, maintains stable and high precision and recall across all conditions. These findings highlight the robustness of RelFCI in handling relational datasets with latent variables, where RCD struggles to achieve comparable accuracy.

As an additional analysis, we evaluated our new algorithm's rule activation distribution over all synthetic runs. Rules unique to FCI account for approximately one-third of all orientations. It demonstrates that latent confounders impact the entire model structure during the learning process. The plot of the rules distribution is shown in Figure \ref{fig:rule-distr}.

\begin{figure}[ht]
    \centering
    \includegraphics[width=0.45\textwidth]{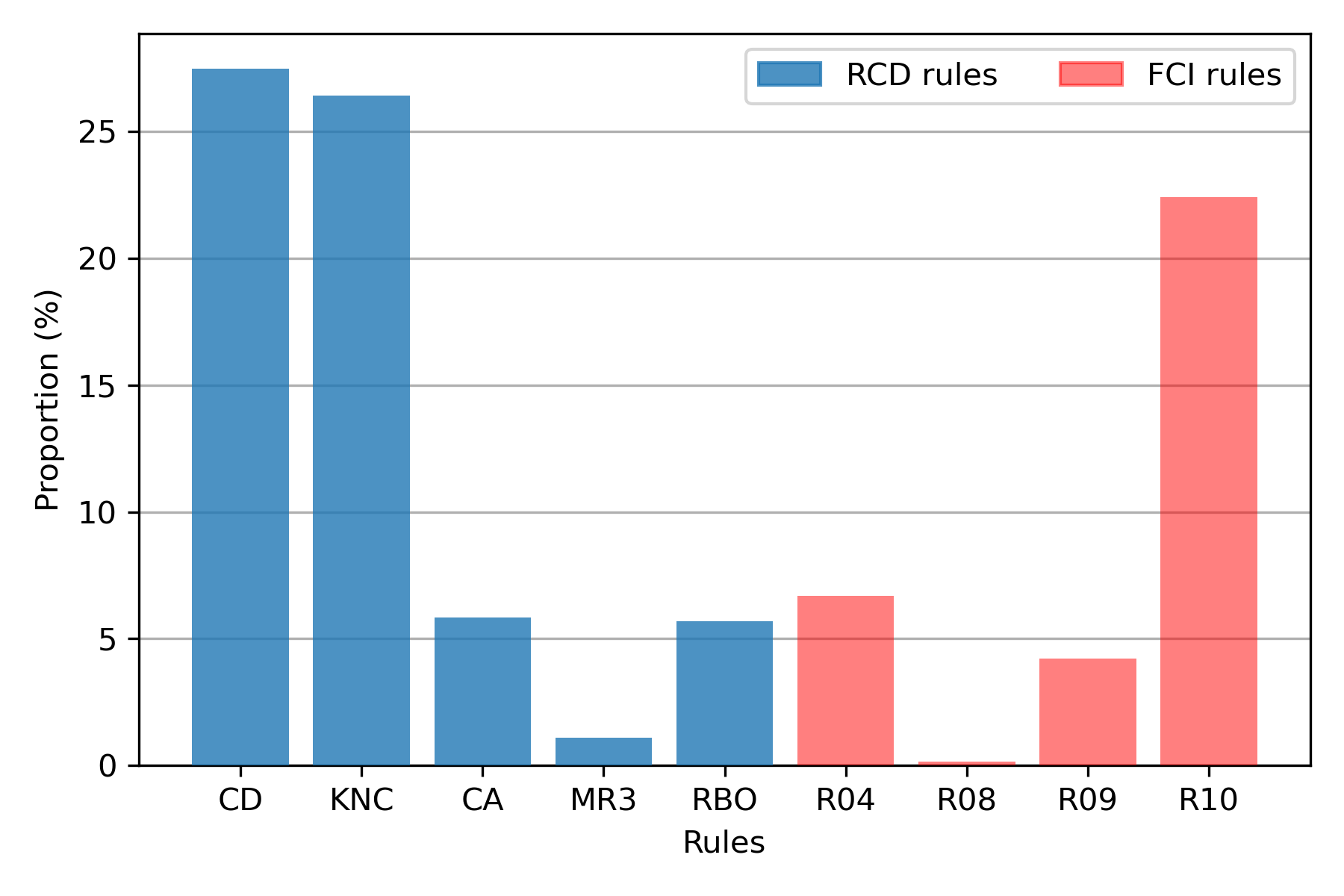}
    \caption{RelFCI's rule distribution of RCD and FCI rules.}
    \label{fig:rule-distr}
\end{figure}

\section{Conclusion}
In this paper, we provide novel representations for relational causal models with latent confounders.  
We present a sound and complete algorithm, RelFCI, for detecting causality relationships from relational data with latent confounders, which provides a more comprehensive understanding of relational causal models. % beyond latent confounder detection. 
To the best of our knowledge, this approach is the first to study relational causal discovery with latent variables. 
We believe this work will be critical in enabling causal effect estimation in complex relational systems for which the underlying causal model is unknown. 
Areas of future work include investigating the effects of including selection bias and cycles into latent relational causal models.

\begin{acknowledgements}
This research was funded in part by NSF under grant no. 2047899. The authors would like to thank Aleksandr Elifirenko for valuable analysis on the code. 
\end{acknowledgements}

\bibliography{uai2025-template}

\doublespacing

\renewcommand{\thefigure}{\arabic{figure}} % Adds the "S" prefix
\renewcommand{\thetable}{S\arabic{table}} % Add "S" prefix
\setcounter{lemma}{0}

\definecolor{background-color}{gray}{0.98}
\definecolor{backcolour}{rgb}{0.95,0.95,0.92}
\definecolor{codegreen}{rgb}{0,0.6,0}

% Define a custom style
\lstdefinestyle{myStyle}{
    backgroundcolor=\color{backcolour},   
    commentstyle=\color{codegreen},
    basicstyle=\ttfamily\footnotesize,
    breakatwhitespace=false,         
    breaklines=true,                 
    keepspaces=true,                 
    numbers=left,       
    numbersep=5pt,                  
    showspaces=false,                
    showstringspaces=false,
    showtabs=false,                  
    tabsize=2,
}

% Use \lstset to make myStyle the global default
\lstset{style=myStyle}

\newcommand\circleast{\mathrel{{\circ}\!{-}\!{\ast}}}
\newcommand\leftarrowast{\mathrel{{\leftarrow}\!{\ast}}}
\newcommand\rightarrowast{\mathrel{{\ast}\!{\rightarrow}}}
\newcommand\rightarrowcircle{\mathrel{{\circ}\!{\rightarrow}}}
\newcommand\leftarrowcircle{\mathrel{{\leftarrow}\!{\circ}}}

\newpage

\onecolumn

\title{Relational Causal Discovery with Latent Confounders\\(Supplementary Material)}
\maketitle

\begin{appendix}
\section{Background}
\subsection{Relational Data}\label{rcd}
In this subsection, we provide possible examples of relational data. Figure \ref{fig:schema} shows an example relational schema with two entities, USER (E) and POST (P), and the relationship between them, REACTS (P), with a MANY TO MANY cardinality, meaning users can react to multiple posts and vice versa. The USER type has three attributes: Type, Sentiment, and Activity, while the POST entity type has the attributes Content and Engagement. The relationship type REACTS instead has the attribute Frequency. 
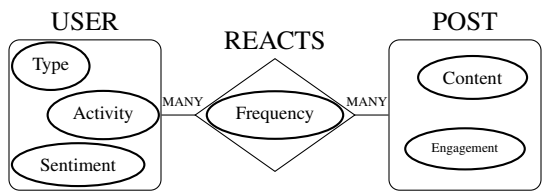
\begin{figure}[ht]
    \centering
    \begin{tikzpicture}
        % Entities
        \node[draw, rounded corners, rectangle, minimum width=2cm, minimum height=2cm] (A) at (0, 0) {};
        \node[above] at (A.north) {USER};
    
        \node[draw, rounded corners, rectangle, minimum width=2cm, minimum height=2cm] (B) at (5, 0) {};
        \node[above] at (B.north) {POST};
    
                % Define the four corners of the rhombus
                \coordinate (N) at (2.5, 0.75);
                \coordinate (E) at (3.55, 0);
                \coordinate (S) at (2.5, -0.75);
                \coordinate (W) at (1.45, 0);
                \node[above] at (N.north) {REACTS};
                \node[above, font=\tiny] at ([xshift=-0.16cm] W.center) {MANY};
                \node[above, font=\tiny] at ([xshift=0.16cm] E.center) {MANY};
            
                % Create attributes inside A
                \node[draw, ellipse, thick, font=\scriptsize] (A1) at ([yshift=-0.66cm, xshift=-0.09cm] A.center) {Sentiment};
                \node[draw, ellipse, thick, font=\scriptsize] (A2) at ([xshift=-0.45cm, yshift=0.65cm] A.center) {Type};
                \node[draw, ellipse, thick, font=\scriptsize] (A3) at ([xshift=0.24cm, yshift=0cm] A.center) {Activity};
            
                % Create attributes inside AB1
                \node[draw, ellipse, font=\scriptsize, thick] (AB1_1) at ([yshift=-0.75cm] N.center) {Frequency};
                
                % Draw the lines to form the rhombus
                \draw (N) -- (E) -- (S) -- (W) -- cycle;
            
                \node[draw, ellipse, thick, font=\tiny]  (B1) at ([xshift=0cm, yshift=-0.45cm] B.center) {Engagement};
                \node[draw, ellipse, thick, font=\scriptsize] (B2) at ([xshift=0.1cm, yshift=0.5cm] B.center) {Content};
              
                % Lines
                \draw (A) -- (W);
                \draw (E) -- (B);
    
        % Curved edge
        % \path (A1) edge[bend right, thick, ->] (B1);
        % \path (B2) edge[bend right, thick, ->] (B1);
        % \path (AB1_1) edge[bend left, thick, ->] (A3);
        % \path (AB1_1) edge[bend right, thick, ->] (B1);
        % \path (A2) edge[bend right, thick, ->] (A1);
        % \path (A2) edge[bend left, thick, ->] (A3);
    \end{tikzpicture}
    \caption{Example of Relational Schema}
    \label{fig:schema}
\end{figure}

An example of an instantiation of the depicted relational schema can be seen in figure \ref{fig:skeleton-SI}. For simplicity, attributes are left with the original placeholder for each entity and relationship instance. As an example, the skeleton contains three instantiations of the USER entity, Bob, Anna, and Andrea, and four instantiations of the POST entity type, Food recipe, Meme, Poem, and News. Bob and Anna react to the Food recipe and Meme, while Andrea reacts to the Poem and News. It is important to note that this skeleton is coherent with the cardinality requirements (i.e., MANY TO MANY) of the relationship defined in the schema.

\begin{figure}[ht]
    \centering
    \begin{tikzpicture}
        \node[draw, rounded corners, rectangle, minimum width=2cm, minimum height=2cm, thick] (E1) at (0, 0) {};
        \node[above] at (E1.north) {Bob};

        \node[draw, rounded corners, rectangle, minimum width=2cm, minimum height=2cm, thick] (E2) at (5, 0) {};
        \node[above] at (E2.north) {Anna};
    
        \node[draw, rounded corners, rectangle, minimum width=2cm, minimum height=2cm, thick] (P1) at (0, -5) {};
        \node[above] at (P1.north) {Food Recipe};

        \node[draw, rounded corners, rectangle, minimum width=2cm, minimum height=2cm, thick] (P2) at (5, -5) {};
        \node[above] at (P2.north) {Meme};

        \coordinate (N1) at (2.5, -1.5);
        \coordinate (ES1) at (3.55, -2.25);
        \coordinate (S1) at (2.5, -3);
        \coordinate (W1) at (1.45, -2.25);
        \node[above] at (N1.north) {Reacts};
        % \node[above, font=\tiny] at ([xshift=-0.16cm] W.center) {MANY};
        % \node[above, font=\tiny] at ([xshift=0.16cm] E.center) {MANY};
    
        \node[draw, ellipse, thick, font=\scriptsize] (A1) at ([yshift=-0.66cm, xshift=-0.09cm] A.center) {Sentiment};
        \node[draw, ellipse, thick, font=\scriptsize] (A2) at ([xshift=-0.45cm, yshift=0.65cm] A.center) {Type};
        \node[draw, ellipse, thick, font=\scriptsize] (A3) at ([xshift=0.24cm, yshift=0cm] A.center) {Activity};

        \node[draw, ellipse, thick, font=\scriptsize] (A1) at ([yshift=-0.66cm, xshift=-0.09cm] E2.center) {Sentiment};
        \node[draw, ellipse, thick, font=\scriptsize] (A2) at ([xshift=-0.45cm, yshift=0.65cm] E2.center) {Type};
        \node[draw, ellipse, thick, font=\scriptsize] (A3) at ([xshift=0.24cm, yshift=0cm] E2.center) {Activity};
    
        \node[draw, ellipse, font=\scriptsize, thick] (loc_1) at ([yshift=-0.75cm] N1.center) {Frequency};
            
        \node[draw, ellipse, thick, font=\tiny]  (B1) at ([xshift=0cm, yshift=-0.45cm] P1.center) {Engagement};
        \node[draw, ellipse, thick, font=\scriptsize] (B2) at ([xshift=0.1cm, yshift=0.5cm] P1.center) {Content};

        \node[draw, ellipse, thick, font=\tiny]  (B1) at ([xshift=0cm, yshift=-0.45cm] P2.center) {Engagement};
        \node[draw, ellipse, thick, font=\scriptsize] (B2) at ([xshift=0.1cm, yshift=0.5cm] P2.center) {Content};

        \draw[thick] (N1) -- (ES1) -- (S1) -- (W1) -- cycle;

        \draw (W1) -- (E1);
        \draw (ES1) -- (E2);
        \draw (W1) -- (P1);
        \draw (ES1) -- (P2);

        %%%%%%%%%%%%%%%%%%%%%
        %%%%%%%%%%%%%%%%%%%%%
        %%%%%%%%%%%%%%%%%%%%%

        \node[draw, rounded corners, rectangle, minimum width=2cm, minimum height=2cm, thick] (E3) at (12.5, 0) {};
        \node[above] at (E3.north) {Andrea};
    
        \node[draw, rounded corners, rectangle, minimum width=2cm, minimum height=2cm, thick] (P3) at (10, -5) {};
        \node[above] at (P3.north west) {Poem};

        \node[draw, rounded corners, rectangle, minimum width=2cm, minimum height=2cm, thick] (P4) at (15, -5) {};
        \node[above] at (P4.north east) {News};

        \coordinate (N2) at (10, -1.75);
        \coordinate (ES2) at (11, -2.5);
        \coordinate (S2) at (10, -3.25);
        \coordinate (W2) at (9, -2.5);
        \node[above] at (N2.north) {Reacts};

        \coordinate (N3) at (15, -1.75);
        \coordinate (ES3) at (16, -2.5);
        \coordinate (S3) at (15, -3.25);
        \coordinate (W3) at (14, -2.5);
        \node[above] at (N3.north) {Reacts};
        % \node[above, font=\tiny] at ([xshift=-0.16cm] W.center) {MANY};
        % \node[above, font=\tiny] at ([xshift=0.16cm] E.center) {MANY};
    
        \node[draw, ellipse, thick, font=\scriptsize] (A1) at ([yshift=-0.66cm, xshift=-0.09cm] E3.center) {Sentiment};
        \node[draw, ellipse, thick, font=\scriptsize] (A2) at ([xshift=-0.45cm, yshift=0.65cm] E3.center) {Type};
        \node[draw, ellipse, thick, font=\scriptsize] (A3) at ([xshift=0.24cm, yshift=0cm] E3.center) {Activity};
    
        \node[draw, ellipse, font=\scriptsize, thick] (loc_2) at ([yshift=-0.75cm] N2.center) {Frequency};

        \node[draw, ellipse, font=\scriptsize, thick] (loc_3) at ([yshift=-0.75cm] N3.center) {Frequency};
            
        \node[draw, ellipse, thick, font=\tiny]  (B1) at ([xshift=0cm, yshift=-0.45cm] P3.center) {Engagement};
        \node[draw, ellipse, thick, font=\scriptsize] (B2) at ([xshift=0.1cm, yshift=0.5cm] P3.center) {Content};

          \node[draw, ellipse, thick, font=\tiny]  (B1) at ([xshift=0cm, yshift=-0.45cm] P4.center) {Engagement};
        \node[draw, ellipse, thick, font=\scriptsize] (B2) at ([xshift=0.1cm, yshift=0.5cm] P4.center) {Content};

        \draw[thick] (N2) -- (ES2) -- (S2) -- (W2) -- cycle;
        \draw[thick] (N3) -- (ES3) -- (S3) -- (W3) -- cycle;

        \draw (ES2) -- (E3);
        \draw (W3) -- (E3);
        \draw (S2) -- (P3);
        \draw (S3) -- (P4);

    \end{tikzpicture}
    \caption{Example of Relational Skeleton}
    \label{fig:skeleton-SI}
\end{figure}
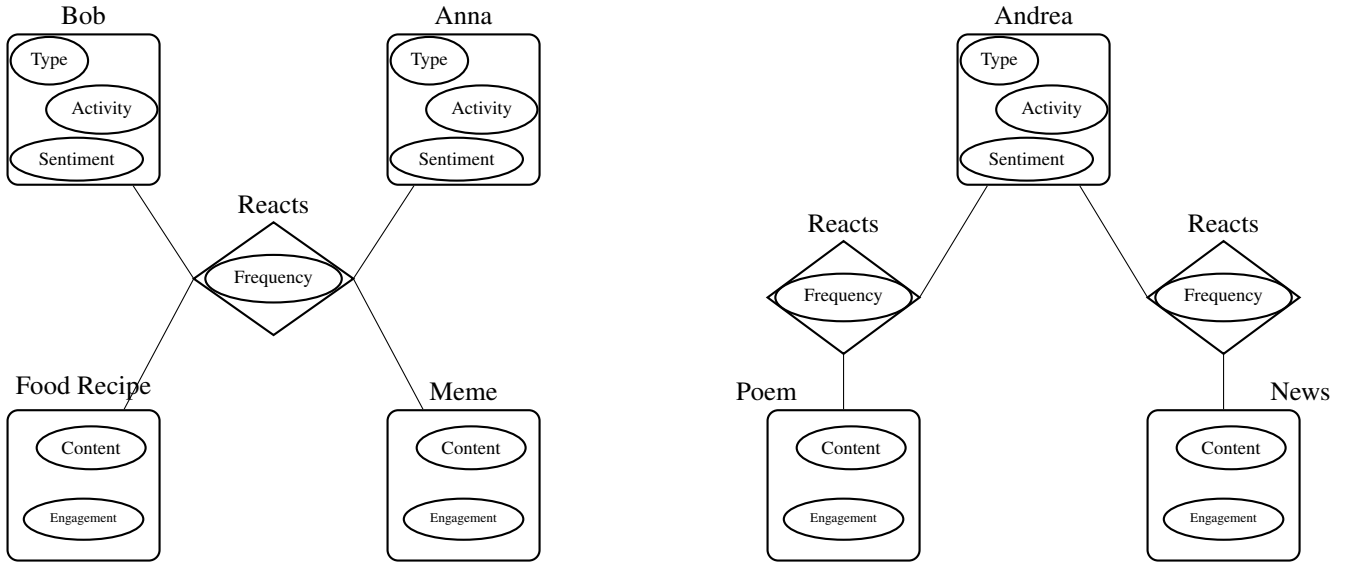

Given the relational skeleton provided and the relational dependencies provided in the relational causal model in figure \ref{fig:model}, it is possible to obtain the corresponding ground graph, shown in figure \ref{fig:groudgraph-SI}. The nodes on the ground graph represent the attributes of every single entity and relationship instance in the skeleton. In contrast, the edges represent the dependencies in the relational causal model applied to the attribute instances of the relational skeleton. For example, the relational dependency $[P, R, U].Sentiment \rightarrow [P].Engagement$ in the model, which indicates that a post's engagement depends on the user's reaction to the product, is represented in the ground graph with the following edges: Bob.Sentiment $\rightarrow$ Food\_Recipe.Engagement, Bob.Sentiment $\rightarrow$ Meme.Engagement, Anna.Sentiment $\rightarrow$ Food\_Recipe.Engagement, Anna.Sentiment $\rightarrow$ Food\_Recipe.Engagement, Andrea.Sentiment $\rightarrow$ Poem.Engagement, Andrea.Sentiment $\rightarrow$ News.Engagement. 

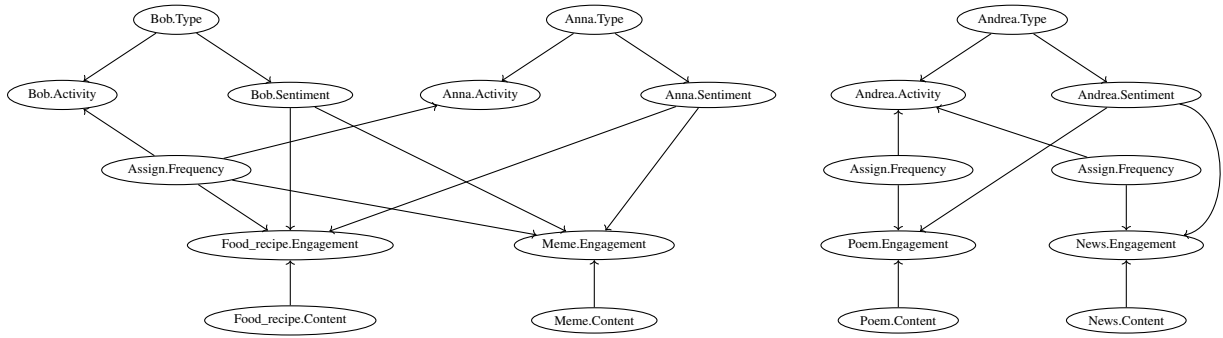
\begin{figure}[ht]
    \centering
    \scalebox{0.5}{
        \begin{tikzpicture}
            % Nodes
            \node[draw, ellipse, thick] (1) at (0, 0) {Bob.Type};
            \node[draw, ellipse, thick] (2) at (-3, -2) {Bob.Activity};
            \node[draw, ellipse, thick] (3) at (3, -2) {Bob.Sentiment};
            \node[draw, ellipse, thick] (4) at (0, -4) {Assign.Frequency};
            \node[draw, ellipse, thick] (5) at (3, -6) {Food\_recipe.Engagement};
            \node[draw, ellipse, thick] (6) at (3, -8) {Food\_recipe.Content};
            \node[draw, ellipse, thick] (7) at (11, 0) {Anna.Type};
            \node[draw, ellipse, thick] (8) at (8, -2) {Anna.Activity};
            \node[draw, ellipse, thick] (9) at (14, -2) {Anna.Sentiment};
            \node[draw, ellipse, thick] (10) at (11, -6) {Meme.Engagement};
            \node[draw, ellipse, thick] (11) at (11, -8) {Meme.Content};

            \node[draw, ellipse, thick] (12) at (22, 0) {Andrea.Type};
            \node[draw, ellipse, thick] (13) at (19, -2) {Andrea.Activity};
            \node[draw, ellipse, thick] (14) at (25, -2) {Andrea.Sentiment};
            \node[draw, ellipse, thick] (15) at (19, -4) {Assign.Frequency};
            \node[draw, ellipse, thick] (16) at (25, -4) {Assign.Frequency};
            \node[draw, ellipse, thick] (17) at (19, -6) {Poem.Engagement};
            \node[draw, ellipse, thick] (18) at (19, -8) {Poem.Content};
            \node[draw, ellipse, thick] (19) at (25, -6) {News.Engagement};
            \node[draw, ellipse, thick] (20) at (25, -8) {News.Content};

            \path (1) edge[thick, ->] (2);
            \path (1) edge[thick, ->] (3);
            \path (3) edge[thick, ->] (5);
            \path (3) edge[thick, ->] (10);
            \path (4) edge[thick, ->] (2);
            \path (4) edge[thick, ->] (5);
            \path (4) edge[thick, ->] (8);
            \path (4) edge[thick, ->] (10);
            \path (6) edge[thick, ->] (5);
            \path (7) edge[thick, ->] (8);
            \path (7) edge[thick, ->] (9);
            \path (9) edge[thick, ->] (5);
            \path (9) edge[thick, ->] (10);
            \path (11) edge[thick, ->] (10);

            \path (12) edge[thick, ->] (13);
            \path (12) edge[thick, ->] (14);
            \path (14) edge[thick, ->] (17);
            \path (14) edge[thick, bend left=80, ->] (19);
            \path (15) edge[thick, ->] (13);
            \path (15) edge[thick, ->] (17);
            \path (16) edge[thick, ->] (13);
            \path (16) edge[thick, ->] (19);
            \path (18) edge[thick, ->] (17);
            \path (20) edge[thick, ->] (19);

        \end{tikzpicture}
    }
    \caption{Example of Ground Graph}
    \label{fig:groudgraph-SI}
\end{figure}

Beyond the specific instantiation of the ground graph, to perform relational causal discovery, it is necessary to define an abstract ground graph that generalizes the structure of dependencies without referring to particular entities or relationship instances. The abstract ground graph represents the relational dependencies in the relational causal model at a higher level, capturing attribute interactions without being tied to a specific skeleton. In this representation, nodes correspond to attribute types rather than instances, while edges represent the abstract relational dependencies in the model \citep{maier2014reasoning}. 

To construct the abstract ground graph from a given relational causal model, it is necessary to project the dependencies onto the relevant perspective. The \verb|extend| method devised by \citet{maier2014reasoning} achieves this by mapping underlying relational dependencies into the set of edges in the abstract ground graphs. Below we provide the formula of the \verb|extend| method:
\begin{align*}
    \text{extend}(P_{\text{orig}}, P_{\text{ext}}) &= \left\{ 
P = P_{\text{orig}}^{1, n_o - i + 1} + P_{\text{ext}}^{i+1, n_e} 
\,\middle|\, 
i \in \text{pivots}(\text{reverse}(P_{\text{orig}}), P_{\text{ext}}) 
\wedge \text{validPath}(P) 
\right\} \\
\\
\text{pivots}(P_1, P_2) &= \left\{ i \,\middle|\, P_1^{1,i} = P_2^{1,i} \right\}
\end{align*}
Where $\text{validPath}(P)$ checks that the relational path is valid with the respect to the schema and its relationships' cardinalities. \\
Each abstract ground graph edge $
[B, \ldots, I_k].Y \rightarrow [B, \ldots, I_j].X$ is then constructed from the underlying dependency $[I_j, \ldots, I_k].Y \rightarrow [I_j].X$ with the following logic:
\begin{align*}
    \left\{
[B, \ldots, I_k].Y \rightarrow [B, \ldots, I_j].X \,\middle|\,
[I_j, \ldots, I_k].Y \rightarrow [I_j].X \in \mathcal{D} \,\wedge\,
[B, \ldots, I_k] \in \text{extend}([B, \ldots, I_j], [I_j, \ldots, I_k])
\right\}
\end{align*}
For example, the relational dependency $[P, R, U].Sentiment \rightarrow [P].Engagement$, which in the ground graph manifests as instance-specific edges (e.g., Bob.Sentiment $ \rightarrow $ Food\_Recipe.Engagement), is represented in the abstract ground graph for the perspective USER with the directed edges $[U].Sentiment \rightarrow [U, R, P].Engagement$ and $[U, R, P, R, U].Sentiment \rightarrow [U, R, P].Engagement$. 

Similarly, other relational dependencies in the model are reflected as edges between attribute types in the abstract ground graph, providing a compact and generalized view of how information propagates through the relational structure. Analyzing the abstract ground graph makes it possible to reason about potential influences and dependencies at the schema level without requiring explicit enumeration of individual instances.

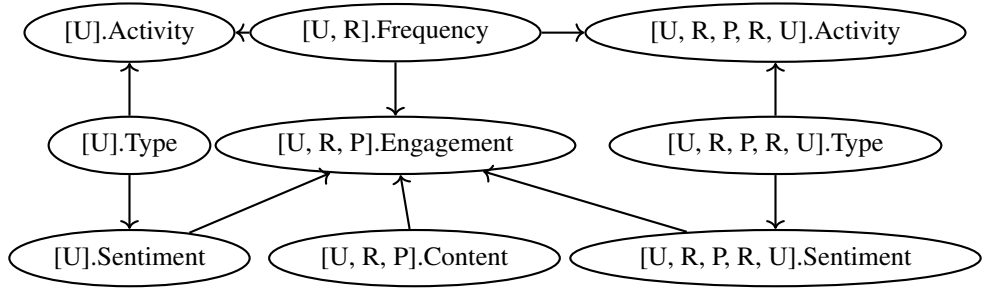
\begin{figure}[h!]
    \centering
        \begin{tikzpicture}
            % Nodes
            \node[draw, ellipse, thick] (1) at (-1, -1.5) {[U].Sentiment};
            \node[draw, ellipse, thick] (2) at (-1, 0) {[U].Type};
            \node[draw, ellipse, thick] (3) at (-1, 1.5) {[U].Activity};
            \node[draw, ellipse, thick] (4) at (2.5, 0) {[U, R, P].Engagement};
            \node[draw, ellipse, thick] (5) at (2.75, -1.5) {[U, R, P].Content};
            \node[draw, ellipse, thick] (6) at (7.5, -1.5) {[U, R, P, R, U].Sentiment};
            \node[draw, ellipse, thick] (7) at (7.5, 0) {[U, R, P, R, U].Type};
            \node[draw, ellipse, thick] (8) at (7.5, 1.5) {[U, R, P, R, U].Activity};
            \node[draw, ellipse, thick] (9) at (2.5, 1.5) {[U, R].Frequency};

            \path (2) edge[thick, ->] (1);
            \path (2) edge[thick, ->] (3);
            \path (9) edge[thick, ->] (3);
            \path (9) edge[thick, ->] (4);
            \path (1) edge[thick, ->] (4);
            \path (5) edge[thick, ->] (4);
            
            \path (7) edge[thick, ->] (6);
            \path (7) edge[thick, ->] (8);
            \path (9) edge[thick, ->] (8);
            \path (6) edge[thick, ->] (4);
        \end{tikzpicture}
    \caption{Example of Abstract Ground Graph for perspective USER}
    \label{fig:agg}
\end{figure}
\subsection{MAGs and PAGs}\label{fci}
In this subsection, we provide an example of how more than one MAG can be a member of the same PAG and single equivalency class. Given a collection of observable variables, let \textbf{Cond} in figure \ref{fig:pag1} represent the set of conditional dependencies. It is evident that it is entailed by several DAGs. Figure \ref{fig:pag2} displays the PAG that was generated for \textbf{Cond}. Since they are not mentioned in the conditional set, A and D's edge marks are ◦, which could lead to different marks for various DAGs in \textit{O-Equiv}(\textbf{Cond}).
\begin{figure}[h!]
    \centering
    \begin{subfigure}{0.45\columnwidth}
        \centering
        \scalebox{0.7}{
            \begin{tikzpicture}
                % Define points
                \node (A1) at (0,0) {A};
                \node (B1) at (0.8,0) {B};
                \node (C1) at (1.6,0) {C};
                \node (D1) at (2.4,0) {D};
                \node[draw] (L1_1) at (1.2, 0.8) {L1};
    
                % \node (A2) at (0,-1.5) {A};
                % \node (B2) at (0.8,-1.5) {B};
                % \node (C2) at (1.6,-1.5) {C};
                % \node (D2) at (2.4,-1.5) {D};
                % \node[draw] (L1_2) at (1.2, -0.7) {L1};
                % \node[draw] (L2_2) at (1.2, -2.3) {L2};

                \node (A2) at (3,0) {A};
                \node (B2) at (3.8,0) {B};
                \node (C2) at (4.6,0) {C};
                \node (D2) at (5.4,0) {D};
                \node[draw] (L1_2) at (4.2, 0.8) {L1};
                \node[draw] (L2_2) at (4.2, -0.8) {L2};
            
                % Connect points
                \draw[thick, -{Stealth[round]}] (A1) -- (B1);
                \draw[thick, -{Stealth[round]}] (L1_1) -- (B1);
                \draw[thick, -{Stealth[round]}] (L1_1) -- (C1);
                \draw[thick, -{Stealth[round]}] (D1) -- (C1);
    
                \draw[thick, -{Stealth[round]}] (A2) -- (B2);
                \draw[thick, -{Stealth[round]}] (L1_2) -- (B2);
                \draw[thick, -{Stealth[round]}] (L1_2) -- (C2);
                \draw[thick, -{Stealth[round]}] (L2_2) -- (B2);
                \draw[thick, -{Stealth[round]}] (L2_2) -- (C2);
                \draw[thick, -{Stealth[round]}] (D2) -- (C2);
    
                % Dependencies
                \node[align=left] at (2.7,2) {\{ \{D\} $\perp$ \{A, B\},\\\hspace{0.26cm}\{A\} $\perp$ \{C, D\} \}};
            \end{tikzpicture}
        }
        \caption{DAGs in same O-Equiv(Cond) class}
        \label{fig:pag1}
    \end{subfigure}%
    \hfill
    \begin{subfigure}{0.45\columnwidth}
        \centering
        \begin{tikzpicture}
            % Define points
            \node (A) at (0,0) {A};
            \node (B) at (0.5, -1.5) {B};
            \node (C) at (2, -1.5) {C};
            \node (D) at (2.5,0) {D};
        
            % Connect points
            \draw[thick, {Circle[open]}-{Stealth[round]}] (A) -- (B);
            \draw[thick, {Stealth[round]}-{Stealth[round]}] (B) -- (C);
            \draw[thick, {Circle[open]}-{Stealth[round]}] (D) -- (C);
        \end{tikzpicture}
        
        \caption{Resulting PAG for O-Equiv(Cond) class}
        \label{fig:pag2}
    \end{subfigure}
    \caption{DAGs in the same observational equivalence class under \textbf{Cond} (a), and the resulting PAG (b) that captures shared structure and uncertainty in edge directions.}
    \label{fig:both_images}
\end{figure}
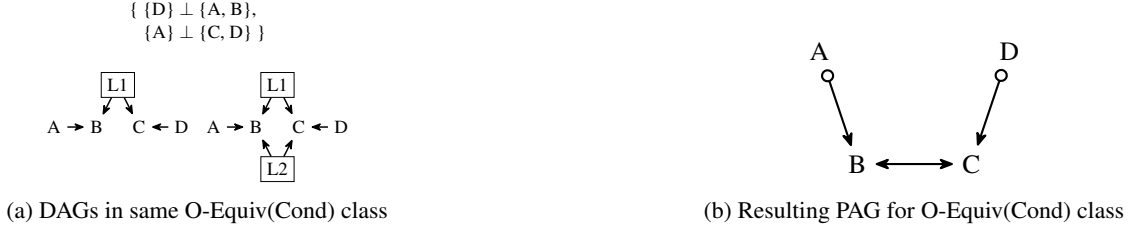

\section{RelFCI Rules}\label{rules}
This section outlines every rule we apply to the new Partial Ancestral Abstract Ground Graph representation to obtain a maximally informative graph and, thus, an underlying model. We introduce the rules in the framework of PAAGGs, where any ◦ marks represent unoriented edges and $\ast$ denotes any edge mark.
\subsection{RCD Rules}
RCD \citep{maier2013sound} performs relational causal discovery using a similar strategy to the Poem algorithm, extended with the RBO purely common cause rule.
The edges of the abstract ground graph are oriented using the following set of rules:
\begin{enumerate}
    \item Collider Detection (CD): For each triple $\langle\alpha,\beta,\gamma\rangle$, if $\beta$ is not in the set that separates $\alpha$ and $\gamma$, orient it as $\alpha\rightarrowast\beta\leftarrowast\gamma$;
    \item Relational Bivariate Orientation (RBO): Let $\mathcal{M}$ be a relational causal model  and $G$ a partially directed PAAGG for $\mathcal{M}$ for perspective $I_X$, and let there be an unshielded triple in $G$ $\alpha$◦\textemdash◦$\beta$◦\textemdash◦$\gamma$ with $\alpha=[I_X].X, \beta=[I_X,...,I_Y].Y, \gamma=[I_X,...,I_Y,...,I_X].X$. If $\textit{card}([I_Y,...,I_X])=\verb|MANY|$ and $\alpha\independent\gamma\rvert\mathbf{Z}$, then if $\beta\in\textbf{Z}$, orient the triple as $\alpha\leftarrowcircle\beta\rightarrowcircle\gamma$;
    \item Known Non-Colliders (KNC): If $\alpha\rightarrowast\beta$◦\textemdash$\ast\gamma$, with $\alpha,\gamma$ not adjacent, orient the triple as $\alpha\rightarrowast\beta\rightarrow\gamma$
    \item Cycle Avoidance (CA): If either $\alpha\rightarrow\beta\rightarrowast\gamma$ or $\alpha\rightarrowast\beta\rightarrow\gamma$, with $\alpha\ast$\textemdash◦$\gamma$, orient the latter as $\alpha\rightarrowast\gamma$;
    \item Meek Rule 3 (MR3): If both $\alpha\rightarrowast\beta\leftarrowast\gamma$ and $\alpha\ast$\textemdash◦$\theta$◦\textemdash$\ast\gamma$, with $\alpha,\gamma$ not adjacent and $\theta\ast$\textemdash◦$\beta$, then orient the latter as $\theta\rightarrowast\beta$.
\end{enumerate}
\subsection{FCI Rules}

FCI \citep{ZHANG20081873} constructs a causal graph starting from a fully connected undirected graph with ◦ marks and removes edges between conditionally dependent variables. In the second phase, it orients edges by identifying colliders and "Y" structures. The remaining edges are then oriented according to a set of additional rules: 
\begin{enumerate}
    \setcounter{enumi}{3}
    \item If $u=\langle\theta,...,\alpha,\beta,\gamma\rangle$ is a discriminating path and $\beta\ast$\textemdash$\gamma$, if $\beta\in\textit{SepSet}(\theta,\gamma)$ orient $\beta\rightarrow\gamma$, otherwise orient $\alpha\leftrightarrow\beta\leftrightarrow\gamma$;
    \item For every (remaining) $\alpha$◦\textemdash◦$\beta$, if there is an uncovered path $p = \langle\alpha, \gamma,...,\theta,\beta\rangle$ s.t. all edges are ◦\textemdash◦ and $\alpha, \theta$ are
    not adjacent and $\beta,\gamma$ are not adjacent, then orient all edges in the path as \textemdash;
    \item If $\alpha$\textemdash$\beta$◦\textemdash$\ast\gamma$, with $\alpha, \gamma$ either adjacent or not, orient $\beta$\textemdash$\ast\gamma$;
    \item If $\alpha$\textemdash◦$\beta$◦\textemdash$\ast\gamma$, and $\alpha, \gamma$ are not adjacent, orient $\beta$\textemdash$\ast\gamma$;
    \item If $\alpha$\textemdash◦$\beta\rightarrow\gamma$ or $\alpha$\textemdash◦$\beta\rightarrow\gamma$, and $\alpha$◦$\rightarrow\gamma$, orient  $\alpha\rightarrow\gamma$;
    \item If $\alpha\rightarrowcircle\gamma$ and $p = \langle\alpha,\beta,  \theta,...,\gamma\rangle$ is an uncovered path s.t. $\beta$ and $\gamma$ are not adjacent, orient $\alpha\rightarrow\gamma$;
    \item If $\alpha\rightarrowcircle\gamma$, $\beta\rightarrow\gamma\leftarrow\theta$, and $p_1,p_2$ are uncovered p.d. paths from $\alpha$ to $\beta$ and from $\alpha$ to $\theta$, let $\mu$ and $\omega$ be the adjacent nodes of $\alpha$ on $p_1,p_2$. If $\mu$ and $\omega$ are distinct, orient $\alpha\rightarrow\gamma$.
\end{enumerate}

\section{Algorithms}\label{algo}
The following section provides more detailed pseudocode for each step in the main algorithm. The described algorithm and steps are adapted from the implementation provided in \citet{colombo2012learning}. For easy reference, the main RelFCI pseudocode is provided again below in Algorithm \ref{alg:main}.

\begin{algorithm}[htb]
\caption{RelFCI algorithm}
\label{alg:main}
\textbf{Input}: schema, oracle,\\
\textbf{Parameter}: threshold\\
\textbf{Output}: Dependencies
\begin{algorithmic}[1] %[1] enables line numbers
\STATE \textit{// Step 1: Graphs initialization}
\STATE $PDs \gets$ get potential Dependencies from the base schema (with no dependencies) and two times the threshold (2*h)
\STATE $PAAGGs \gets$ construct PAAGGs from potential dependencies set $PDs$
\STATE $S \gets \{\}$ \\
\STATE \textit{// Step 1: Independent Variables identification, storing separating sets and unshielded triples}
\STATE $PAAGGs, S, U \gets \text{obtainInitialSkeleton}(PAAGGs, S)$ 
%  \\
% \verb|// Find and orient v-structures|
\STATE \textit{// Step 2: V-structures orientation using CD, starting from unshielded triples in $U$}
\STATE $PAAGGS, S \gets \text{orientVStructures}(PAAGGs, S, U)$  
% \verb|// Apply orientations rules from RCD, FCI| \\
\STATE \textit{// Step 3: edges orientation using rules from RCD and additional ones from FCI}
\STATE $PAAGGs, S \gets \text{performEdgeOrientation}(PAAGGs, S)$
\STATE $Deps \gets$ retrieve underlying dependencies from oriented PAAGGs edges
% \verb|retrieveDepsFromPAAGGs|(PAAGGs)$
\STATE \textbf{return} Deps
\end{algorithmic}
\end{algorithm}

\begin{algorithm}[htb]
\caption{obtainInitialSkeleton}
\label{alg:step_1}
\textbf{Input}: Schema, Oracle,\\
\textbf{Parameter}: threshold, depth\\
\textbf{Output}: Non-oriented AGGs
\begin{algorithmic}[1] %[1] enables line numbers

\FOR{$agg$ \textbf{in} AGGs}
    \STATE Let $l=0$
    \STATE Let $max\_depth=agg.number\_of\_nodes - 2$
    \WHILE{$l \leq max\_depth$}
        \FORALL{pair of vertices ($X_i$, $X_j$) in $agg$}
            \STATE Let $C = agg.nodes - \{X_i, X_j\}$
            \FORALL{$Y \subseteq C$}
                \IF {CITest($X_i$, $X_j$, $Y$)}
                    \STATE Remove dependencies between ($X_i$, $X_j$)
                    \STATE Store $Y$ as $sepSet$ for ($X_i$, $X_j$)
                \ENDIF
            \ENDFOR
        \ENDFOR
        \STATE Let $l = l + 1$
    \ENDWHILE

    \STATE
    \FORALL{triple of vertices ($X_k$, $X_j$, $X_m$) in $agg$}
        \IF{$k < m$}
            \IF{$agg.has\_edge(X_k, X_j)$ \textbf{and} $agg.has\_edge(X_j, X_m)$ \textbf{and not} $agg.has\_edge(X_k, X_m)$}
                \STATE Append ($X_k$, $X_j$, $X_m$) to $unshieldedTriples[agg]$ 
            \ENDIF
        \ENDIF
    \ENDFOR
\ENDFOR
\end{algorithmic}
\end{algorithm}

\begin{algorithm}[htb]
\caption{orientVStructures}
\label{alg:step_2}
\textbf{Input}: Schema, Oracle,\\
\textbf{Parameter}: threshold, depth\\
\textbf{Output}:  Partially oriented AGGs
\begin{algorithmic}[1] %[1] enables line numbers
\FOR{$agg$ \textbf{in} AGGs}
    \WHILE{$unshieldedTriples[agg]$}
        \STATE Let $(X_i, X_j, X_k) = unshieldedTriples[agg].pop()$
        \STATE Let $Z = sepSet(X_i, X_k) - \{ X_j\}$
        \IF {\textbf{not} CITest($X_i$, $X_j$, $Z$) \textbf{and} \textbf{not} CITest($X_j$, $X_k$, $Z$)}
            \STATE Append ($X_i$, $X_j$, $X_k$) to $dependentTriples[agg]$ 
        \ELSE
            \FOR{$X_r$ \textbf{in} [$X_i$, $X_k$]}
                \IF {CITest($X_r$, $X_j$, $Z$)}
                    \STATE Let $Y = findMinimalSepset(X_r, X_j, Z)$
                    \STATE Store $Y$ as $sepSets$ for ($X_r$, $X_j$)
                    \FORALL{$X_x$ \textbf{in} $agg.nodes$}
                        \IF{isTriangle($X_{min(r,j)}, \cdot , X_{max(r,j)}$)}
                            \STATE Add to $unshieldedTriples[agg]$ the triple
                        \ENDIF
                    \ENDFOR
                    \FORALL{triple \textbf{in} $unshieldedTriples[agg]$}
                        \STATE Delete the triple if matches one of the following patterns: $(X_r, X_j, \cdot )$, $(X_j, X_r, \cdot )$, $( \cdot, X_j, X_r)$ and $(\cdot, X_r, X_j)$
                    \ENDFOR
                    \STATE Remove dependencies between ($X_r$, $X_j$)
                \ENDIF
            \ENDFOR
        \ENDIF 
    \ENDWHILE

    \FORALL{triple \textbf{in} $dependentTriples[agg]$}
        \STATE Let $X_i, X_j, X_k = triple$
        \IF{$X_j$ \textbf{not in} $sepSets(X_i, X_k)$ \textbf{and} $agg.has\_edge(X_i, X_j)$ \textbf{and} $agg.has\_edge(X_j, X_k)$}
            \STATE Orient the triple as a collider
        \ENDIF
    \ENDFOR
\ENDFOR
\end{algorithmic}
\end{algorithm}

\begin{algorithm}[htb]
\caption{performEdgeOrientation}
\label{alg:step_3}
\textbf{Input}: Schema, Oracle,\\
\textbf{Parameter}: threshold, depth\\
\textbf{Output}:  Maximum oriented AGGs
\begin{algorithmic}[1] %[1] enables line numbers
\FOR{$agg$ \textbf{in} AGGs}
    \WHILE{AGG is updated}
        \STATE Orient as many edges as possible by applying RBO rule
        \STATE Orient as many edges as possible by applying FCI\_1 - FCI\_3 rules
        \FORALL{possible triples}
            \STATE Let $X_l, X_j, X_k = triple$
            \IF{$isTriangle(X_l, X_j, X_k)$ \textbf{and} $X_j  \circleast X_k$ \textbf{and} $X_l \leftarrowast X_j$ and $X_l \rightarrow X_k$}
                \STATE Find Minimal Discriminating Path for the triple
                \IF{minimalDiscriminatingPath}
                    \FORALL{adjacent couples}
                        \STATE Let $X_r, X_q = couple$
                        \STATE Let $otherSepSet = sepSets(X_i, X_k) - {X_r, X_q}$
                        \STATE Let $l = -1$
                        \WHILE{$|otherSepSet| \geq l$}
                            \STATE Let $l = l + 1$
                            \FORALL{$Y \subseteq otherSepSet$ \textbf{and} $|Y| = l$}
                                \IF{CITest($X_r$, $X_q$, $Y$)}
                                    \STATE Store $Y$ as $sepSet$ for ($X_r$, $X_q$)
                                    \FORALL{$X_x$ \textbf{in} $agg.nodes$}
                                        \IF{isTriangle($X_{min(r,j)}, \cdot , X_{max(r,j)}$)}
                                            \STATE Add to $unshieldedTriples[agg]$ the triple
                                        \ENDIF
                                    \ENDFOR
                                    \STATE Remove dependencies between ($X_r$, $X_q$)
                                    \STATE Execute Algorithm 2
                                \ENDIF
                            \ENDFOR
                        \ENDWHILE
                    \ENDFOR
                    \IF{Still adjacent \textbf{and} $X_j$ \textbf{in} $sepSets(X_i, X_k)$}
                        \STATE Orienting $X_j \rightarrow X_k$
                    \ELSIF{Still adjacent}
                        \STATE Orienting $X_l \leftrightarrow X_j \leftrightarrow X_k$
                    \ENDIF
                \ENDIF
            \ENDIF
        \ENDFOR
        \STATE Orient as many edges as possible by applying FCI\_5 - FCI\_10 rules
    \ENDWHILE
\ENDFOR

\end{algorithmic}
\end{algorithm}

\clearpage

\section{Possible Dependencies}\label{deps}
The presence of ◦ marks in the edge of $PAAGGs$, and thus in the underlying $PARM$, implies that the \textit{O-Equiv}($\mathcal{D}_{\textbf{O}}$) class contains different relational causal models. The algorithm's output is not the exact relational causal model that generates the data. RelFCI returns an equivalence class containing the model responsible for the data causal relationships. 
RelFCI computes conditional independence tests among the variables, thus possibly producing the same result with different underlying topologies e.g., with the independence fact $A \independent C \mid B$, the nodes A, B, and C can be correctly oriented as follows: $A \rightarrow B \rightarrow C$,  $A \leftarrow B \rightarrow C$,  $A \leftarrow B \leftarrow C$, $ A \rightarrow B \leftarrow C$ \citep{spirtes2000causation}.
RelFCI works by learning the edges' orientation of each $PAAGG$, which are defined by underlying relational dependencies. 

When the algorithm concludes and collects all the information learned to produce the $PARM$, the remaining ◦ marks lose significance in terms of relational dependencies. 
The definition of relational dependency in canonical form implies a natural orientation, i.e., $[I_X...I_Y].Y \rightarrow [I_X].X$. Orienting dependencies the other way around is an infraction of the definition, i.e., $[I_X].X \nrightarrow [I_X...I_Y].Y$. 
For this reason, given this formalization of the problem, we differentiate the information the algorithm learns by clearly stating which relational dependencies are required to define the $PARM$ and which are instead allowed. 
We define the required relational dependencies with a $\rightarrow$, i.e., $[I_X...I_Y].Y \rightarrow [I_X].X$ and the ones that are allowed but not necessary with a $\leadsto$, i.e., $[I_X...I_Y].Y \leadsto [I_X].X$. We will refer to the latter as \textit{Possible Dependencies}.

\section{PAAGG edge orientation}\label{edges}
We apply the four PC rules and the new RBO rule, described in RCD, and further apply the rules of FCI, as defined by Zhang (2008), adapted for the PAAGG representation. A latent relational causal model consists of a set of AGGs, one for each perspective, derived from the same set of relational dependencies $\mathcal{D}$. Similarly, both MAAGGs and PAAGGs are derived from the same collection of observed relational dependencies $\mathcal{D}_\textbf{O}$. In classical AGGs, activating a rule in a certain abstract ground graph involves propagating the orientation of the underlying dependency across all AGGs \citep{maier2013sound}. 

Consider a PARM $\mathbfcal{M}$ defined over the set of dependencies $\mathcal{D}_\textbf{O}$ and its corresponding PAAGG $G$ for the perspective $\mathcal{B}$. Let $\alpha=[\mathcal{B},..., I_X].X$ and $\gamma=[\mathcal{B},..., I_Y].Y$ be two nodes in $G$, $\alpha-\gamma$ be a bidirected edge in $G$, and $d_1=[I_X, ..., I_Y].Y\rightarrow[I_X].X\in\mathcal{D}_\textbf{O}$ be the underlying dependency that yields the left direction of the edge.
The FCI rules can orient a PAG edge with three edge marks: ◦, \textemdash, and $\rightarrow$. We apply these orientations to the PAAGG using the following logic:
\begin{itemize}
    \item The orientation $\alpha$◦\textemdash$\gamma$ implies that the underlying dependency $d_1$ belongs to the set of possible dependencies;
    \item The orientation $\alpha - \gamma$ implies that the underlying dependency $d_1$ is not coherent with the edge orientation and, as such, is not existent in the underlying PARM;
    \item The orientation $\alpha \leftarrow \gamma$ indicates that the underlying dependency $d_1$ is consistent with the edge orientation and belongs to the category of exact dependencies.
\end{itemize}
With this logic, the same propagation property applies to new representations that share the same underlying dependencies because exact and potential dependencies are propagated equally.

\section{Example Execution of RelFCI}
To illustrate the functioning of the RelFCI algorithm, we provide a step-by-step execution over an example relational causal model. This walk-through demonstrates the graphical transformations applied to the Partial Ancestral Abstract Ground Graph (PAAGG) across the different phases of the algorithm. Each figure referenced corresponds to a visual depiction of the model after the respective step of the algorithm. 

\noindent\textbf{Note:} For this example, we focus on a single perspective (in this case, $AB1$). Similar graphs and reasoning are applied to all other perspectives. Rule propagation ensures that orientations in one PAAGG are reflected across others in line with shared underlying dependencies.

\subsection*{Initial Model and Underlying Graph}

\begin{figure}[h!]
    \centering
    \includegraphics[width=0.8\textwidth]{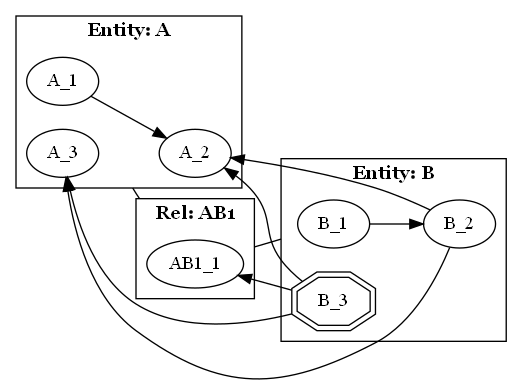}
    \caption{Relational causal model with entities, relationships, and dependencies, including latent variables.}
\end{figure}

We begin with a relational causal model that includes observed and latent variables. The figure depicts:

\begin{itemize}
    \item Entities $A$ and $B$ with a relationship $AB1$;
    \item Attributes $A_1, A_2, A_3, B_1, B_2$ (observed), and $B_3$ (latent, represented with a double edges octagon);
    \item Dependencies between relational variables, considering a hop threshold $h=2$:
        \begin{itemize}
            \item Observed dependencies $\in \mathcal{D}_{\boldsymbol{O}}$: 
                    $[A].A_1 \rightarrow [A].A_2$, 
                     $[A, AB1, B].B_2 \rightarrow [A].A_2$,
                     $[A, AB1, B].B_2 \rightarrow [A].A_3$,
                     $[B].B_1 \rightarrow [B].B_2$].
             \item Unobserved dependencies $\in \mathcal{D}_{\boldsymbol{L}}$:
                    $[A, AB1, B].B_3 \rightarrow [A].A_2$,
                     $[A, AB1, B].B_3 \rightarrow [A].A_3$,
                     $[AB1, B].B_3 \rightarrow [AB1].AB1_1$.
        \end{itemize}
\end{itemize}

\subsection*{Phase 0 – PAAGG Construction}

In this phase, the algorithm constructs the PAAGGs with all possible dependencies:

\begin{itemize}
    \item A node is created for each relational variable with a path length up to the hop threshold $h'=2h=4$.
    \item Edges are added according to the $\texttt{extend}$ method, resulting in a fully connected undirected graph with $\circ{-}\circ$ marks.
    \item Intersection variables are included if needed to maintain the closure under intersections. In this example, these variables are excluded from the plots for better readability.
\end{itemize}

The graph in \ref{fig:phase-0} represents the PAAGG with all potential dependencies for the perspective $AB1$.

\begin{figure}[h!]
    \centering
    \includegraphics[width=\textwidth]{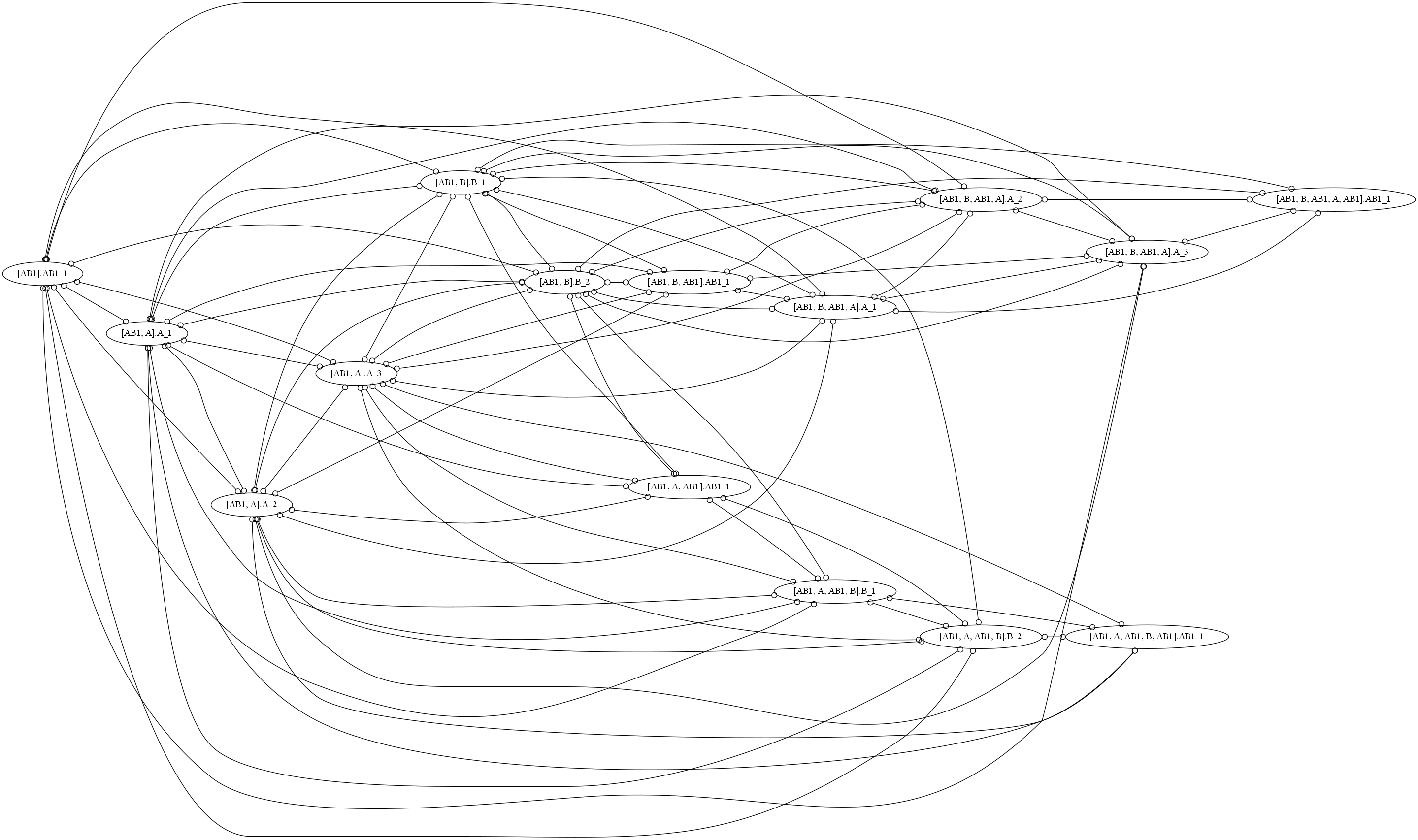}
    \caption{Fully connected PAAGG for perspective $AB1$.}
    \label{fig:phase-0}
\end{figure}

\subsection*{Phase 1 – Initial Skeleton Identification via Conditional Independence Testing}

The algorithm now performs conditional independence tests between every pair of variables, using increasingly bigger separating sets. If the two variables are found to be independent conditioned on the variables in the separating set, the edge is removed, and the set is stored.

\begin{figure}[h!]
    \centering
    \includegraphics[width=\textwidth]{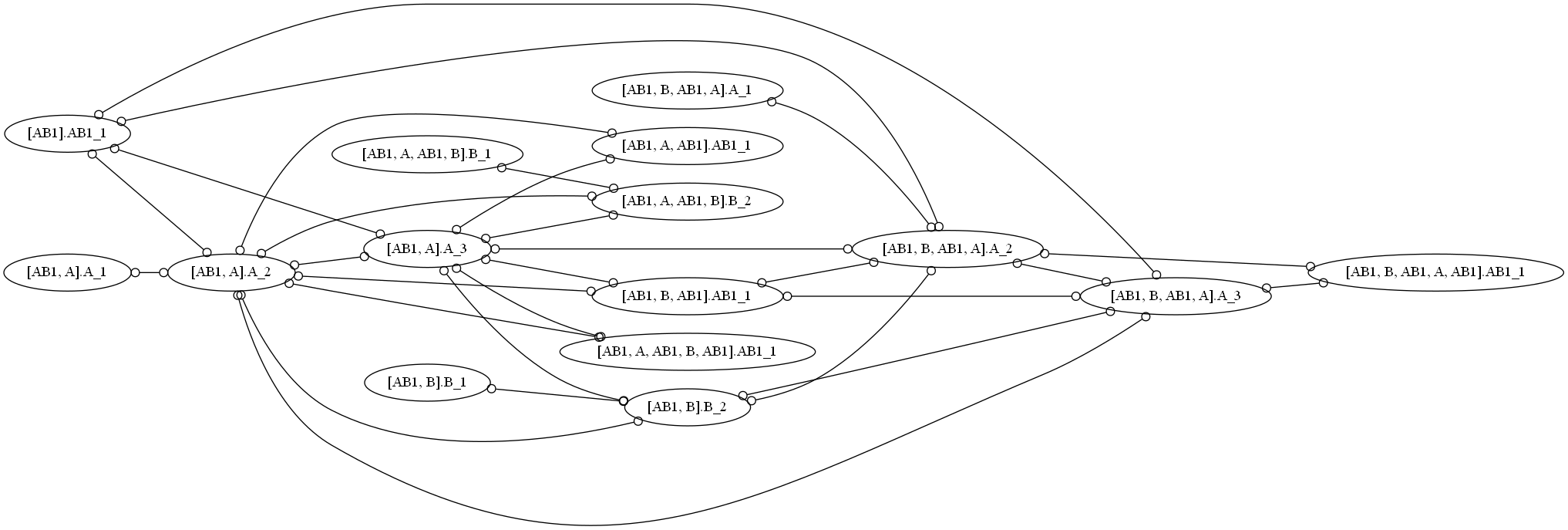}
    \caption{PAAGG after conditional independence testing.}
\end{figure}

Unshielded triples are also identified at this stage as candidate collider patterns. In this example, the following triples are found: 	 
\begin{itemize}
        \item $[AB1].AB1_1, [AB1, A].A_2, [AB1, A].A_1$;
	 \item $[AB1].AB1_1, [AB1, A].A_2, [AB1, B].B_2$;
	 \item $[AB1].AB1_1, [AB1, A].A_2, [AB1, A, AB1, B].B_2$;
	 \item $[AB1].AB1_1, [AB1, A].A_3, [AB1, B].B_2$;
	 \item $[AB1].AB1_1, [AB1, A].A_3, [AB1, A, AB1, B].B_2$;
	 \item $[AB1].AB1_1, [AB1, B, AB1, A].A_2, [AB1, B].B_2$;
	 \item $[AB1].AB1_1, [AB1, B, AB1, A].A_2, [AB1, B, AB1, A].A_1$;
	 \item $[AB1].AB1_1, [AB1, B, AB1, A].A_3, [AB1, B].B_2$.
    \end{itemize}

\subsection*{Phase 2 – Collider Detection and V-Structure Orientation}

This phase introduces the first directed edge orientations in the graph. The algorithm starts by checking whether the unshielded triples are found to be dependent (i.e., for triple $X,Y,Z$, $X,Z$ and $Y,Z$ are not independent given the separating set of $X$ and $Z$) or not. For this example, all 7 unshielded triples are identified as dependent. Then, the CD rule is applied to identify and orient colliders among these triples.
The PAAGG after CD is applied is shown on figure \ref{fig:phase-2}.
\begin{figure}[h!]
    \centering
    \includegraphics[width=\textwidth]{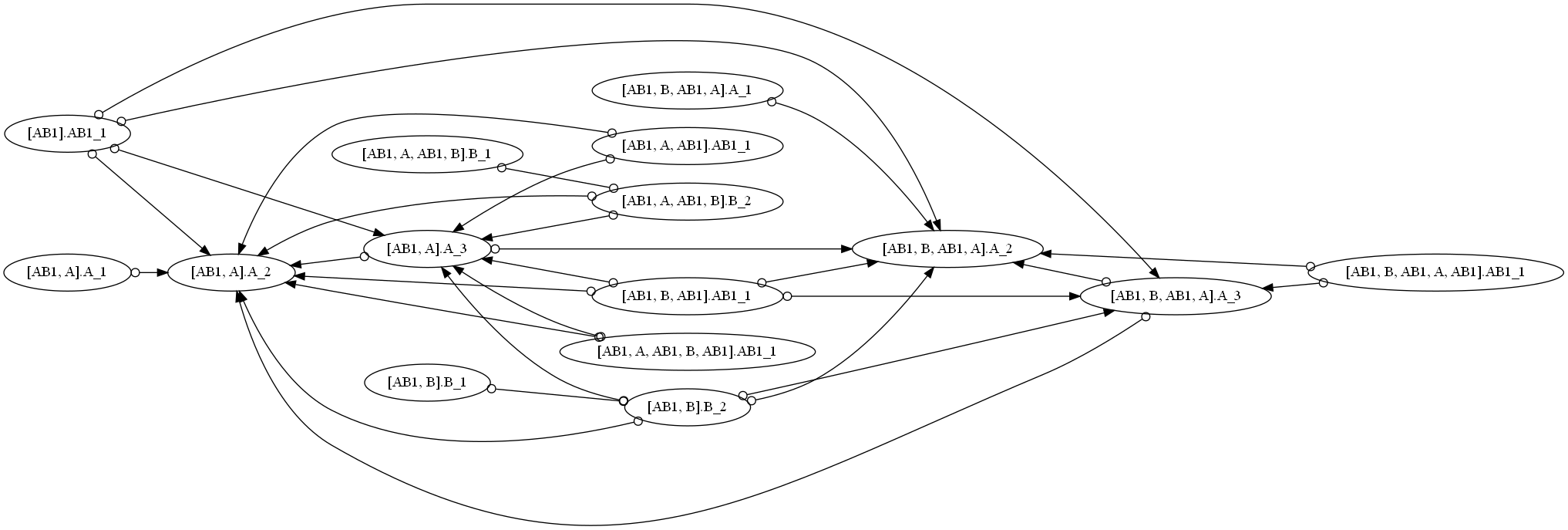}
    \caption{PAAGG after collider orientation via CD.}
    \label{fig:phase-2}
\end{figure}

\subsection*{Phase 3 – Further Orientation via RCD and FCI Rules}

In this step, remaining ambiguous edge marks are refined using the additional RCD (RBO, CA, MR3, and KNC) and FCI rules, repeating this process until no rule can be applied anymore. For this example:
\begin{itemize}
    \item Rule KNC is activated once to orient the triple $[AB1, A, AB1].AB1_1 \rightarrowast [AB1, A].A_3 \rightarrow [AB1, B, AB1, A].A_2$ and all other triples sharing the same underlying dependencies;
    \item FCI rule R4 is activated once to orient the triangle $[AB1, A].A_3 \leftrightarrow [AB1].AB1_1 \leftrightarrow [AB1, B, AB1, A].A_2$ and all other triples sharing the same underlying dependencies;
    \item All other rules are not activated.
\end{itemize}

After all rule applications and orientation propagation, the resulting PAAGG (Figure \ref{fig:final} is maximally informative: each remaining $\circ$ mark reflects a true ambiguity in the equivalence class $O\text{-Equiv}(D_O)$.

\begin{figure}[h!]
    \centering
    \includegraphics[width=\textwidth]{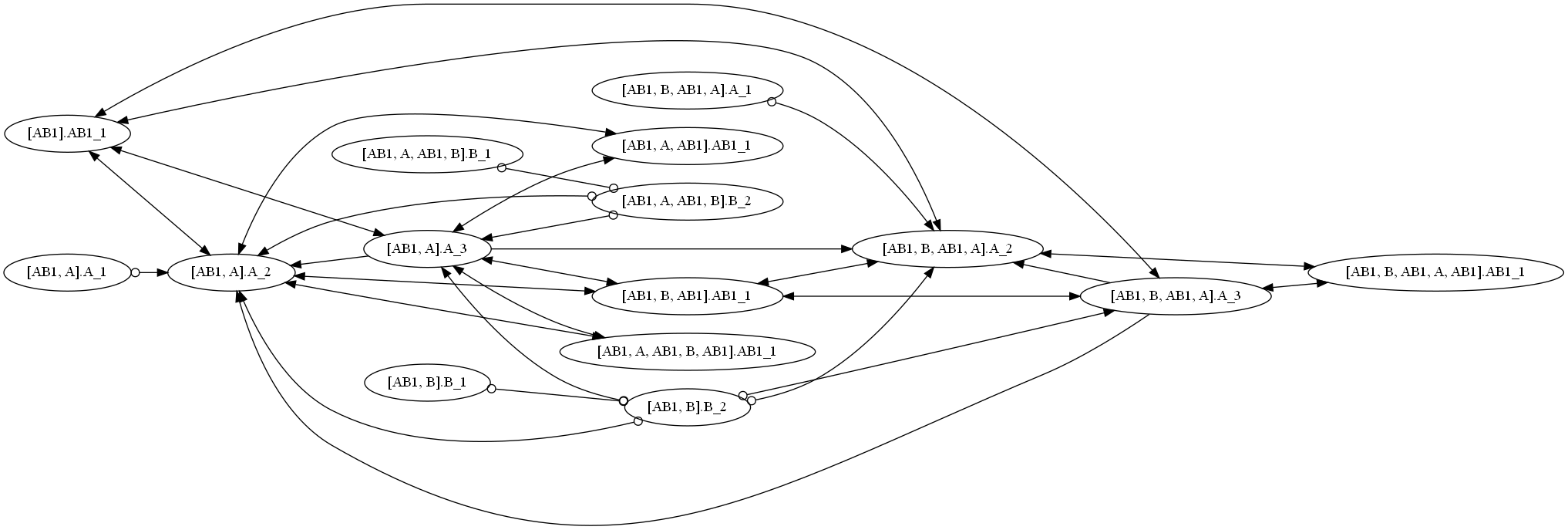}
    \caption{Final PAAGG with maximally informative edge orientations.}
    \label{fig:final}
\end{figure}

\subsection*{Output – Extraction of Dependencies}

From the oriented PAAGGs, the algorithm extracts the required and possible underlying dependencies. These define the Partial Ancestral Relational Model, shown in Figure \ref{fig:parm}. 

\begin{figure}[H]
    \centering
    \includegraphics[width=0.8\textwidth]{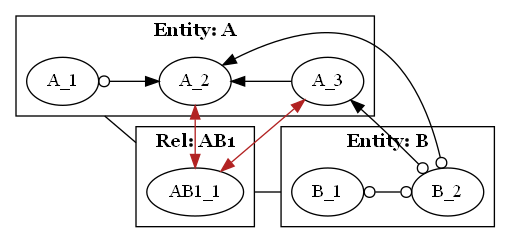}
    \caption{Learned PARM for the example model.}
    \label{fig:parm}
\end{figure}

\section{Proofs}\label{proofs}

This section contains complete proofs for all the theoretical results presented in the main paper.

\begin{lemma}\label{lemma:gg}
    Given a relational causal model structure $\mathcal{M}$ and perspective $\mathcal{B}$, if an abstract ground graph $AGG_{\mathcal{M}\mathcal{B}}$ is ancestral, then all ground graphs $GG_{\mathcal{M}\sigma}$, with skeleton $\sigma\in\sum_\mathcal{S}$, are ancestral.
\end{lemma}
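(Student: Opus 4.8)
The plan is to prove the contrapositive: assuming some ground graph $GG_{\mathcal{M}\sigma}$ fails to be ancestral, I would exhibit a violation of ancestrality already in $AGG_{\mathcal{M}\mathcal{B}}$. First I would recall the defining conditions of an ancestral graph — no directed cycles, no almost-directed cycles (a bidirected edge between two vertices also joined by a directed path), and no arrowheads incident to undirected edges. Since we assume no selection variables ($\mathbf{S}=\emptyset$), the ground graphs and the abstract ground graph contain no undirected edges, so only the first two conditions are in play. Moreover, under the acyclicity assumption the class dependency graph of $\mathcal{M}$ is acyclic, and by the construction of \citet{maier2014reasoning} this already implies that neither $AGG_{\mathcal{M}\mathcal{B}}$ nor any $GG_{\mathcal{M}\sigma}$ contains a directed cycle (bidirected edges never participate in a directed cycle). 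Hence it suffices to show that an \emph{almost-directed} cycle in some $GG_{\mathcal{M}\sigma}$ induces an almost-directed cycle in $AGG_{\mathcal{M}\mathcal{B}}$.

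So suppose $GG_{\mathcal{M}\sigma}$ contains a bidirected edge $x \leftrightarrow y$ together with a directed path $x = w_0 \to w_1 \to \cdots \to w_k = y$. The key step is to transport this configuration into $AGG_{\mathcal{M}\mathcal{B}}$ using the correspondence between ground graphs and abstract ground graphs furnished by the \texttt{extend} construction of \citet{maier2014reasoning}, adapted to the mixed-graph setting of the LAGG: for a perspective instance $b$ and any attribute instance reachable from $b$ along a relational path, there is a corresponding relational variable in $AGG_{\mathcal{M}\mathcal{B}}$, and every edge of $GG_{\mathcal{M}\sigma}$ between two such instances — directed or bidirected — maps to an edge of the same type and orientation in $AGG_{\mathcal{M}\mathcal{B}}$. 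Because the LAGG construction partitions edges according to the dependency in $\mathcal{D}_{\mathbf{O}}$ or $\mathcal{D}_{\mathbf{L}}$ that yields them, the bidirected edge $x \leftrightarrow y$ and each directed edge $w_{i-1}\to w_i$ lift to a bidirected edge and a chain of directed edges sharing the appropriate endpoints, producing in $AGG_{\mathcal{M}\mathcal{B}}$ a bidirected edge $\alpha \leftrightarrow \beta$ together with a directed path from $\alpha$ to $\beta$ — an almost-directed cycle, contradicting ancestrality of $AGG_{\mathcal{M}\mathcal{B}}$.

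One gap must be closed before the transport step applies: the connected component of $\sigma$ carrying the bad configuration need not contain any $\mathcal{B}$-instance. I would handle this by enlarging the skeleton — since the schema is connected and $AGG_{\mathcal{M}\mathcal{B}}$ ranges over all relational variables, one may attach a fresh $\mathcal{B}$-instance $b$ to the relevant entity along a relational path in a new skeleton $\sigma'$; adding entities and relationships only adds edges, so $GG_{\mathcal{M}\sigma'}$ still contains the almost-directed cycle, and now the instance of $x$ (hence all of $w_1,\dots,w_k=y$, which lie on a directed path from it) is reachable from $b$. I expect the main obstacle to be precisely this transport step: making rigorous that a directed path and a bidirected edge in the ground graph, once lifted via \texttt{extend}, line up on common endpoints in the AGG rather than merely producing a disconnected collection of abstract edges — this is the relational-path bookkeeping, including the treatment of intersection variables, where the connectivity arguments of \citet{maier2014reasoning} must be invoked and checked to carry over to bidirected edges. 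It is also the place where the hop threshold of the AGG is forced to be at least that of the underlying model, consistent with Proposition~\ref{prop:hop}.
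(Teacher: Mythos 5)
Your proof takes essentially the same route as the paper's: both argue the contrapositive by transporting a directed or almost-directed cycle from a ground graph into the AGG via the soundness/completeness of the abstraction from \citet{maier2014reasoning}, and both dispose of the undirected-edge condition using acyclicity and the absence of selection variables. If anything, the paper is less careful than you are about the issues you flag (lining up the lifted edges on common endpoints, and reachability of the offending component from a perspective instance); it simply asserts that the two dependence paths of the cycle ``will also be present in the AGG.''
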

\begin{proof}
    From the definition of \citet{ZHANG20081873}, a graph is ancestral if:
    \begin{enumerate}
    \item There is no directed cycle, i.e., B$\rightarrow$A is in $G$ and A is an ancestor of B (meaning there's a directed path from A to B);
    \item There is no almost directed cycle, i.e., B$\leftrightarrow$A is in $G$ and A is an ancestor of B;
    \item For any undirected edge A\textemdash B, both A and B have no parent or spouses, i.e., X, Y such that either or both A$\leftrightarrow$X or B$\leftrightarrow$Y.
    \end{enumerate}
For each of the three conditions, we must demonstrate that if the AGG is ancestral, all GGs must likewise be ancestral to prove this lemma. Given the definition of the abstract ground graph building process in Definition 5.2 and Theorem 5.2 from \citet{maier2014reasoning}, we know that the AGG is sound and complete for all ground graphs for a given perspective and hop threshold $h$. This suggests that the AGG captures every dependent path between two variables in every GG. In the same way, each path of dependence between two variables in the AGG is mirrored in at least one GG. We now verify the lemma for the three conditions of ancestrality:
    \begin{enumerate}
        \item Assume that the AGG is ancestral and that one of the ground graphs, $G$, has a directed cycle between $A$ and $B$ to provide a contradiction. Consequently, the two dependence paths in $G$ will also be present in the AGG, resulting in a directed cycle. Thus, the maximal ancestral abstract ground can't be ancestral;
        \item Similar reasoning can be carried when considering almost directed cycles containing double-arrowed edges (in the case of \textit{Maximal Ancestral Abstract Ground Graphs}), thus verifying the lemma for this condition as well;
        \item Given the assumptions of the underlying structure's acyclicity and no selection bias (i.e., no variables are in the set \textbf{S}), an undirected edge cannot exist as it corresponds to the presence of selection variables, of which $X$ and $Y$ are the cause \cite{ZHANG20081873}. Thus, this condition does not apply to AGGs.
    \end{enumerate}
\end{proof}
Lemma \ref{lemma:gg} guarantees that the theoretical reasoning devised for MAGs and PAGs can also be applied to the relational counterparts we provide in this work, MAAGGs, and PAAGGs. In other words, we know that the ancestrality of these relational lifted representations corresponds to the same ancestrality properties in the underlying ground graphs and, thus, in the underlying latent causal relational causal model we want to learn.
\begin{proposition}
    Given a relational causal model $\mathcal{M}_\textbf{L}(\mathcal{S},\mathcal{D})$ with hop threshold $h$, and its respective latent abstract ground graph $LAGG$:
    \begin{enumerate}[label=\Roman*.]
    \item The constructed MAAGG probabilistically and causally represents $LAGG$ and thus the underlying relational causal model; \label{item1} 
    \item Assuming a sound and complete procedure to construct the $PAAGG$, it correctly represents the Markov equivalence class of the produced $MAAGG$ and, therefore, of $LAGG$ and the underlying model $\mathcal{M}_\textbf{L}$.
    \end{enumerate}
\end{proposition}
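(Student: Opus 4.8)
The plan is to prove the two parts of the proposition separately, leveraging the fact (Lemma~\ref{lemma:gg}) that ancestrality lifts between ground graphs and abstract ground graphs, together with the established soundness and completeness results of \citet{maier2014reasoning} for abstract ground graphs and of \citet{ZHANG20081873} for MAGs and PAGs. The key conceptual observation is that the LAGG for a fixed perspective $\mathcal{B}$ and hop threshold $h$ is itself an ordinary directed graph over the relational variables in $\textbf{O}\cup\textbf{L}$ (plus intersection variables), so that the entire MAG/PAG machinery applies to it once we verify that the relevant notions (inducing paths, $m$-separation, Markov equivalence) transfer correctly under the abstraction.

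For part~\ref{item1}, I would first invoke the MAG construction of \citet{ZHANG20081873} applied to the LAGG viewed as a DAG with latent nodes $\textbf{L}$ and empty selection set: this produces a graph over the observed relational variables in which $A,B$ are adjacent iff there is an inducing path relative to $\langle \textbf{L}\rangle$ in the LAGG, with orientations reflecting ancestry and non-causality. By the MAAGG definition and the bulleted characterization immediately following it, this graph \emph{is} the MAAGG. That the MAAGG is genuinely ancestral (no directed or almost-directed cycles, no forbidden undirected edges) follows from Lemma~\ref{lemma:gg} together with the acyclicity and no-selection assumptions, which guarantee the LAGG — and hence every ground graph — is acyclic. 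Probabilistic representation then follows from the standard MAG result: the MAAGG and the LAGG entail exactly the same $m$-separation statements among the observed relational variables, which by the soundness and completeness of relational $d$-separation \citep{maier2014reasoning} are exactly the conditional independences holding across all ground graphs. Causal representation (ancestry/non-causality marks) follows because inducing-path and ancestral-relationship structure is preserved by the MAG construction and, by Lemma~\ref{lemma:gg} and the soundness/completeness of \texttt{extend}, reflects the true ancestral structure of the underlying LRCM.

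For part~II, assuming a sound and complete procedure to build the PAAGG, I would argue that the PAAGG is the PAG of the MAAGG. Two MAAGGs are Markov equivalent (as elements of \textit{O-Equiv}($\mathcal{D}_\textbf{O}$)) iff they have the same adjacencies and the same $m$-separation relations, which by part~\ref{item1} holds iff the corresponding LRCMs induce the same observed conditional independences across all ground graphs. Since the PAAGG by definition has the same nodes and adjacencies as the MAAGG and records exactly the invariant/variant edge marks over this equivalence class (the circle marks being precisely the variant ones, modified from \citet{ZHANG20081873}), it represents the Markov equivalence class of the MAAGG, and therefore — chaining through part~\ref{item1} — of the LAGG and of $\mathcal{M}_\textbf{L}$.

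The main obstacle I anticipate is not any single step above but rather making rigorous the claim that the MAG-theoretic notions transfer cleanly to the abstract ground graph setting. Concretely, abstract ground graphs have intersection variables and the same underlying dependency can yield multiple edges, so one must check that ``inducing path relative to $\langle\textbf{L}\rangle$ in the LAGG'' and ``Markov equivalence of MAAGGs'' behave as expected despite this many-to-one structure — in particular that the hop-threshold inflation to $h'\leq 2h$ (Proposition~\ref{prop:hop}) is large enough that every inducing path through latent variables is captured, so that no adjacency required by the MAG construction is missed. I would isolate this as a lemma of the form ``for sufficiently large $h'$, adjacency in the MAAGG at threshold $h'$ coincides with the existence of an inducing path relative to $\langle\textbf{L}\rangle$ in the LAGG,'' and reduce the rest of the proposition to bookkeeping on top of \citet{maier2014reasoning} and \citet{ZHANG20081873}.
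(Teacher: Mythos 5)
Your proposal is correct and follows essentially the same route as the paper: part~I is obtained by applying the MAG construction of \citet{ZHANG20081873} to the LAGG viewed as a graph with latent nodes (the paper cites Theorem~4.18 of \citet{Richardson2002AncestralGM} for the coincidence of independence models under marginalization) and then lifting to the underlying model via Lemma~1, while part~II follows from Zhang's PAG results under the assumed sound-and-complete construction procedure, chained through part~I. Your closing remark about verifying that the $h'\leq 2h$ inflation captures every inducing path through latent variables is a point the paper defers to its separate Proposition~\ref{prop:hop} rather than folding into this proof, so flagging it explicitly is a reasonable (and slightly more careful) refinement rather than a divergence.
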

\begin{proof} 
\begin{enumerate}[label=\Roman*.]
\item We can demonstrate that the MAAGG, constructed from $LAGG$ by employing the same MAG construction procedure provided in \citet{ZHANG20081873}, probabilistically and causally represents it as a result of theorem 4.18 of \citet{Richardson2002AncestralGM}, where they show that the independence model corresponding to the constructed graph coincides with the one obtained by marginalizing and conditioning the model on the original graph ($LAGG$). Furthermore, the MAAGG also represents the model $\mathbfcal{M}_\mathbf{L}$, which follows from Lemma \ref{lemma:gg}.
\item Under the assumption of a sound and complete procedure for generating said representation (i.e., the RelFCI algorithm), the PAAGG represents the Markov equivalence class containing the MAAGG. This proof follows from \citet{ZHANG20081873}: the PAAGG, constructed from a sound and complete algorithm that outputs a set of graphs which includes all the causal relationships consistent across all MAAGGs, accurately represents the equivalence class. This is because it captures the uncertainty (circle marks) where the data does not provide enough information to distinguish between different causal structures. Finally, from \ref{item1}, we can prove that the PAAGG also represents the equivalence class of $LAAG$ and the underlying model $\mathbfcal{M}_\mathbf{L}$.
\end{enumerate}
\end{proof}

\begin{proposition} \label{prop:hop}
Given a latent relational causal model $\mathcal{M}_\textbf{L}(\mathcal{S},\mathcal{D})$ with hop threshold $h$ and its corresponding PARM $\mathbfcal{M}$, the hop threshold $h'$ of the $PAAGG_{\mathbfcal{M}\mathcal{B}}$ for any perspective $\mathcal{B}$ can be at most $2h$.
\end{proposition}
\begin{proof}
Let us consider a scenario within a relational causal model that allows relational latent variables to be observed and in which the non-dependence of these variables holds (i.e., no latent variable causes another latent variable, which entails there cannot exist a chain of dependencies consisting of multiple consecutive latent variables). 
For the sake of clarity, we will focus on three entities, A, B, and C, each containing one attribute, respectively $A1$, $B1$, and $C1$, with $B1$ designated as latent as in Figure \ref{fig:hop}. 
% The observed attributes \{A, C\} may belong to the same entity or to two distinct entities. Likewise, the latent attribute B may belong to the same entity as the other attributes or to a different one. 
% Let's ponder the necessary hop threshold for encompassing a model wherein all three attributes are observed. 
% Suppose we were to distribute them across disparate entities, and thanks to the proper relational dependencies, we connect all of them, using B as the connecting bridge between the other two attributes. 
Suppose we were to connect them, using B1 as the connecting bridge between the other two attributes using the following dependencies: $[A, B].B1 \rightarrow [A].A1$ and $[C, B].B1 \rightarrow [C].C1$, both of which require a hop threshold of one to be represented.
After removing the assumption of having all variables observed, the scenario reverts to one where $B1$ is latent, which means that the dependencies between $B1-A1$ and $B1-C1$ are no longer observable. The possible existing dependencies, containing only relational variables with a path of length two (hop threshold equal to one), make the model unable to express the dependencies among the attributes of different entities, e.g., $[A,B,C].C1\rightarrow[A].A1$ and $[C,B,A].A1\rightarrow[C].C1$.
To account for the relational dependencies between the two entities, we need a relational path that is long enough to traverse the entities and describe the relationship between the variables expressed by the model, which requires twice the original hop threshold of one.
\end{proof}
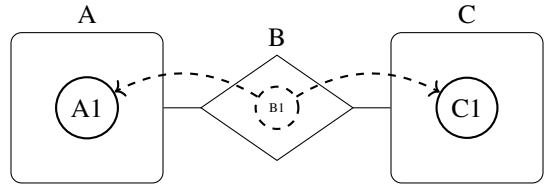
\begin{figure}[H]
    \centering
    \begin{tikzpicture}
        % Entities
        \node[draw, rounded corners, rectangle, minimum width=2cm, minimum height=2cm] (A) at (0, 0) {};
        \node[above] at (A.north) {A};
    
        \node[draw, rounded corners, rectangle, minimum width=2cm, minimum height=2cm] (B) at (5, 0) {};
        \node[above] at (B.north) {C};
    
        % Define the four corners of the rhombus
        \coordinate (N) at (2.5, 0.7);
        \coordinate (E) at (3.5, 0);
        \coordinate (S) at (2.5, -0.7);
        \coordinate (W) at (1.5, 0);
        \node[above] at (N.north) {B};
        % \node[above, font=\tiny] at ([xshift=-0.16cm] W.center);
        % \node[above, font=\tiny] at ([xshift=0.16cm] E.center);
    
        % Create attributes inside A
        \node[draw, circle, thick] (A1) at ([xshift=-0.0cm, yshift=-0.0cm] A.center) {A1};
    
        % Create attributes inside B
        \node[draw, circle, font=\tiny, thick, dashed] (B1) at ([yshift=-0.7cm] N.center) {B1};
        
        % Draw the lines to form the rhombus
        \draw (N) -- (E) -- (S) -- (W) -- cycle;
    
        % Create attributes inside C
        \node[draw, circle, thick]  (C1) at ([xshift=-0.0cm, yshift=-0.0cm] B.center) {C1};
      
        % Lines
        \draw (A) -- (W);
        \draw (E) -- (B);
    
        % Curved edge
        \path (B1) edge[bend right, thick, ->, dashed] (A1);
        \path (B1) edge[bend left, thick, ->, dashed] (C1);
        % \path (A1) edge[bend right, thick, <->] (C1);
        % \path (AB1_1) edge[bend left, thick, ->] (A3);
        % \path (A2) edge[bend right, thick, ->] (A1);
        % \path (A2) edge[bend left, thick, ->] (A3);
    \end{tikzpicture}
    \caption{Example of Relational Causal Model with a latent variable}
    \label{fig:hop}
\end{figure}

\begin{theorem}
    Let G be the partially oriented PAAGG from perspective B with the correct set of adjacencies, unshielded colliders oriented correctly through CD and RBO, and as many edges as possible oriented through KNC, CA, MR3, and the purely common cause of RBO. Then, the rules R4-R10 from FCI and the orientation propagations are sound.
\end{theorem}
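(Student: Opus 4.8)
The plan is to prove soundness by transfer: show that every structural object an FCI rule inspects --- adjacency, endpoint marks, discriminating paths, uncovered and potentially directed paths, colliders and their ancestry relations --- has exactly the same meaning in a PAAGG that it has in an ordinary PAG over a MAG, and then invoke the existing soundness arguments of \citet{spirtes2000causation} for R4 and of \citet{ZHANG20081873} for R5--R10 essentially verbatim, with ``vertex'' replaced by ``observed relational (or intersection) variable.'' Recall that a mark placed by a rule is \emph{sound} precisely when it is invariant over \textit{O-Equiv}($\mathcal{D}_{\textbf{O}}$), i.e., present in every MAAGG of the class; so the goal is to check that whenever a rule's premises hold in the input graph $G$ --- which by hypothesis has the correct adjacencies and the colliders already oriented correctly by CD and RBO --- the new mark is forced in every MAAGG consistent with that information.

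First I would record the three semantic bridges that license the transfer. (i) By the definition of the MAAGG, two observed relational variables are adjacent iff there is an inducing path relative to $\langle\mathbf{L}\rangle$ in the true $LAGG$, and arrowhead/tail marks encode non-ancestry/ancestry exactly as in MAGs; maximality guarantees this adjacency--inducing-path equivalence. (ii) By Lemma~\ref{lemma:gg}, ancestrality (absence of directed and almost-directed cycles, plus the undirected-edge condition, which is vacuous under no selection) holds in an abstract ground graph iff it holds in all ground graphs, so the directed paths, almost-directed cycles, and ``collider-is-ancestor'' relations on which R4--R10 reason behave identically in the PAAGG and in the underlying model. (iii) By \citet{maier2014reasoning}, $m$-separation on abstract ground graphs coincides with relational $d$-separation and is sound and complete, so the conditional independence facts stored in the separating sets --- used explicitly by R4 and implicitly by the correctly oriented colliders assumed in the hypothesis --- have the same ancestral-graph consequences here as in the propositional case. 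Together with the proposition establishing that the MAAGG probabilistically and causally represents the $LAGG$, these let every step of the propositional proofs be reread over relational variables.

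With the bridges in place the per-rule arguments go through unchanged. For R4 I would reproduce the discriminating-path argument: on a discriminating path $\langle\theta,\dots,\alpha,\beta,\gamma\rangle$ every intermediate vertex is a collider and an ancestor of $\gamma$, so whether $\theta$ and $\gamma$ are $m$-separated by a set containing $\beta$ forces either $\beta\rightarrow\gamma$ or the double collider $\alpha\leftrightarrow\beta\leftrightarrow\gamma$ --- a deduction using only ancestral-graph semantics and $m$-separation, both available for PAAGGs. For R5--R10 I would invoke \citet{ZHANG20081873}'s tail-orientation proofs, each a purely graph-theoretic deduction within ancestral-graph semantics (uncovered all-circle paths forcing undirected edges, potentially directed paths forbidding arrowheads into a source, and so on); since a MAAGG is ancestral by Lemma~\ref{lemma:gg} and a PAAGG is its PAG analogue, these deductions carry over. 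For orientation propagation I would reuse the argument of \citet{maier2013sound}: all PAAGGs over all perspectives are generated from the same $\mathcal{D}_{\textbf{O}}$ via the \verb|extend| method, so fixing the direction or an endpoint mark of an underlying dependency in one PAAGG merely copies that mark onto the other edges --- in other perspectives or on intersection variables --- that the same dependency yields; since the mark being copied is already sound, propagation preserves soundness.

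The main obstacle I expect is the bookkeeping that takes place at the level of relational dependencies rather than individual edges. A single dependency in $\mathcal{D}_{\textbf{O}}$ can induce several edges within one PAAGG and further copies across perspectives, and intersection variables inherit edges from both constituents, so I must verify that the edge-level premises a rule checks truly correspond to a well-defined dependency-level fact and that orienting that dependency cannot conflict with another of its copies --- exactly the point at which the ``required'' versus ``possible'' dependency distinction and the propagation rules of Appendices~\ref{deps}--\ref{edges} are needed, and where one must confirm that bidirected edges and the enlarged threshold introduce no new conflict beyond what \citet{maier2013sound} already handled for AGGs. A secondary point requiring care is the hop threshold: since $h'$ may be as large as $2h$ by Proposition~\ref{prop:hop}, I must check that the PAAGG at threshold $h'$ still contains every discriminating or uncovered path the rules rely on, so that relevant paths present in the underlying model are not truncated in the lifted graph --- which is precisely why the MAAGG/PAAGG definitions require $h'\ge h$ and why Proposition~\ref{prop:hop} pins down the bound.
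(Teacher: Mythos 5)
Your proposal matches the paper's proof in both structure and substance: it uses Lemma 1 (ancestrality transferring between abstract ground graphs and ground graphs) as the bridge, imports the R4 argument from Spirtes et al. and the R5--R10 arguments from Zhang by showing a rule violation would yield a non-ancestral or non-Markov-equivalent MAAGG, and defers soundness of orientation propagation to Maier et al.'s argument for shared underlying dependencies. The additional concerns you flag (dependency-level bookkeeping across perspectives and the $h'=2h$ threshold) are reasonable diligence but are handled in the paper exactly as you anticipate, via the propagation machinery and Proposition 2.
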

\begin{proof}
Given lemma \ref{lemma:gg}, the proof derives from \citet{spirtes1995causal} and \citet{ZHANG20081873}. A rule is sound if the arrows and tails used in the resulting PAAGG are invariant. Therefore, we need to prove that any mixed abstract ground graph $G$ that violates a rule does not belong to the equivalence class \textit{O-Equiv}($\mathcal{D}_{\textbf{O}}$), that is, it is not ancestral or Markov equivalent to the original MAAGG.
The proof for rule R4 is identical to the proof by induction provided in \citet{spirtes1995causal}, stating that by applying iteratively rule R4 on a PAAGG $G$ oriented using rules CD, CA, KNC, and MR3, the resulting graph $G_i$ at each iteration $i$ maintains its ancestral properties for the equivalence class \textit{O-Equiv}($\mathcal{D}_{\textbf{O}}$). The proof for the remaining rules is taken from \citet{ZHANG20081873}:
\begin{itemize}
\item R5: The rule states that the path $p = \langle\alpha, \gamma,...,\theta,\beta,\alpha\rangle$ consists of an uncovered cycles of only circle marks. If we assume instead that a graph $G$ has an arrowhead on this cycle because of KNC, this cycle must be directed to avoid unshielded colliders. But by doing so, the graph is not ancestral;
\item R6: Any graph $G$ that contains the opposite orientation than the one stated by the rule, i.e., $\alpha$\textemdash$\beta\leftarrow\ast\gamma$, is not ancestral;
\item R7: Supposed that a graph $G$ has an arrowhead into $\beta$ as opposed to the rule. Therefore, the triple can be oriented as $\alpha$\textemdash$\beta\leftarrow\ast\gamma$ or $\alpha\rightarrow\beta\leftarrow\ast\gamma$. In the former case, $G$ is not ancestral. In the latter, it contains an unshielded collider not present in the original MAAGG;
\item R8: If a graph $G$ instead of $\alpha\rightarrow\gamma$ contains $\alpha\leftrightarrow\gamma$, then there is an almost directed cycle or an arrowhead into an undirected edge. In both cases, the graph is not ancestral;
\item R9: The same proof for R5 can be applied for this rule;
\item R10: The rule states that $\langle\mu,\alpha,\omega,\rangle$ is not a collider in the original MAAGG. Assume that a graph $G$ in the equivalence class contains $\alpha\leftrightarrow\gamma$ instead of the rule specification. Then, for $G$ to be ancestral, one or more edges out of $\alpha$ must be directed. Therefore, to avoid unshielded colliders not in the original MAAGG, $p_1$ or $p_2$ must be a directed path, making $alpha$ an ancestor of $gamma$ and thus $G$ not ancestral. 
\end{itemize}
Finally, considering that the rules are proven sound and, as such, all orientations produced are correct, it is straightforward to prove that the respective orientation propagation procedure is sound, following from \cite{maier2013sound}.
\end{proof}
The following two lemmas for the arrowhead and tail completeness make use of a representation defined as \textit{chordal graph}, established in \citet{meek1995causal} and extended in \citet{maier2013sound} for relational data. This representation is an undirected graph where every undirected cycle of length four or more has an edge between two nonconsecutive vertices on the cycle. In chordal graphs, a total order $\alpha$ is consistent with respect to $AGG$ if and only if $AGG_\alpha$ (abstract ground graph in which $A\rightarrow B$ if and only if $A<B$ with respect to $\alpha$) has no unshielded colliders. Furthermore, for all adjacent vertices $A$ and $B$, there exists consistent total orderings $\alpha$ and $\gamma$ such that $A\leftarrow B\in AGG_\alpha$ and $A\rightarrow B\in AGG_\gamma$. 

\begin{lemma}
Let G be a partially oriented PAAGG with correct adjacencies. Then, exhaustively applying CD, RBO, KNC, CA, MR3, and R4, all with orientation propagation of edges, produces a PAAGG G' in which for every circle mark there exists a MAAGG in the \textit{O-Equiv}($\mathcal{D}_{\textbf{O}}$) class with a corresponding tail mark.
\end{lemma}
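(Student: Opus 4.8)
The plan is to mirror the arrowhead-completeness argument that \citet{ali2012towards} give for MAGs, lift it to MAAGGs using Lemma~\ref{lemma:gg}, and then handle the relational RBO rule separately with the chordal-skeleton / consistent-total-ordering machinery that \citet{maier2013sound} develop for abstract ground graphs. The goal is to show that once CD, RBO, KNC, CA, MR3, and R4 (all with orientation propagation) can no longer fire, every surviving circle mark is \emph{variant in the arrowhead sense}: there is at least one MAAGG in \textit{O-Equiv}($\mathcal{D}_{\textbf{O}}$) in which the corresponding mark is a tail.

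First I would record the propositional statement: by \citet{ali2012towards}, for a partially oriented PAG with the correct adjacencies, exhaustively applying the FCI arrowhead rules (CD, KNC, CA, MR3, R4) produces a graph in which every circle mark is a tail in some MAG of the equivalence class, i.e.\ these rules capture exactly the invariant arrowheads. I would then transport this to the relational level. By Lemma~\ref{lemma:gg}, the two ancestrality conditions used throughout that argument (no directed cycle, no almost-directed cycle) hold on a MAAGG if and only if they hold on every ground graph $GG_{\mathcal{M}\sigma}$; and by Theorem~5.2 of \citet{maier2014reasoning} adjacency and $d$-connection in the MAAGG soundly and completely reflect those across all ground graphs for the fixed perspective and hop threshold. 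Hence any witness MAG that \citet{ali2012towards} build at the level of a fixed ground graph lifts (via \texttt{extend}) to a witness MAAGG with the same marked edge turned into a tail, and every step of ``this FCI-rule orientation is invariant'' carries over verbatim to MAAGGs.

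The genuinely new ingredient is RBO, which has no FCI analogue. For this I would invoke the characterization of consistent total orderings on the chordal skeleton (\citet{meek1995causal}, extended to abstract ground graphs in \citet{maier2013sound}): for every pair of adjacent vertices there exist consistent orderings realizing each direction of the edge \emph{unless} the direction is forced. RBO fires exactly on an unshielded triple $\alpha \circ\!\!-\!\!\circ \beta \circ\!\!-\!\!\circ \gamma$ with $\alpha=[I_X].X$, $\beta=[I_X,\dots,I_Y].Y$, $\gamma=[I_X,\dots,I_Y,\dots,I_X].X$, $\textit{card}([I_Y,\dots,I_X])=\texttt{MANY}$, and $\alpha\independent\gamma\mid\mathbf{Z}$ with $\beta\in\mathbf{Z}$; under these conditions every ground graph in the equivalence class must have $\beta$ as a common cause of the two instantiations of $X$, so the marks in $\alpha\leftarrowcircle\beta\rightarrowcircle\gamma$ are invariant --- this is precisely the soundness of RBO already established by \citet{maier2013sound}. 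Thus RBO only deletes circle marks that are \emph{not} tail-variant, so it cannot destroy the property being proved; and for any circle still present after RBO, the ``both orderings exist'' statement still yields the required tail-witness MAAGG. The orientation-propagation step then preserves everything because all PAAGGs across perspectives are generated from the same $\mathcal{D}_{\textbf{O}}$: a tail-witness MAAGG for one perspective induces, through \texttt{extend}, consistent tail-witness MAAGGs for the others, exactly as in the propagation argument of \citet{maier2013sound}.

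The main obstacle I anticipate is controlling the interleaving of RBO with the FCI rules inside a single exhaustive run, so that no circle which the \citet{ali2012towards} argument would keep as a tail-witness instead acquires an arrowhead through a derivation that leans on an RBO orientation in an incompatible way. I would resolve this by induction on the number of rule applications, maintaining as an invariant that the partially oriented MAAGG is always a consistent partially oriented graph for \textit{O-Equiv}($\mathcal{D}_{\textbf{O}}$) (every oriented mark invariant), and by noting that RBO's precondition --- the explicit cardinality and relational $d$-separation facts --- is stable under subsequent orientations, so RBO's contributions stay sound and the chordal-ordering witness construction for the remaining circles goes through unchanged.
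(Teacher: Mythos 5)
Your proposal is correct and follows essentially the same route as the paper's proof: both rest on the arrowhead-completeness argument of \citet{ali2012towards} (the four-step chordal/consistent-ordering construction), lift it to MAAGGs via Lemma~\ref{lemma:gg}, treat RBO separately by appealing to the completeness result of \citet{maier2013sound} for its purely-common-cause orientations, and justify orientation propagation by the fact that all perspectives' PAAGGs derive from the same $\mathcal{D}_{\textbf{O}}$. Your added invariant controlling the interleaving of RBO with the FCI arrowhead rules is a reasonable extra precaution that the paper handles only implicitly through its closing proof by contradiction, but it does not change the substance of the argument.
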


\begin{proof}  
The proof follows from Theorem 4.3 of \citet{ali2012towards}. They prove arrowhead completeness for a different graph representation for the Markov equivalence class of MAGs, \textit{Joined Graphs}, which do not distinguish between tail marks and circle marks, provided that the work focused explicitly on arrowhead edge orientations. The same reasoning can be used to ancestral graphs and, with Lemma \ref{lemma:gg}, to PAAGGs. Let $G'$ be the PAAGG with as many edges orientated using CD, RBO, CA, MR3, and R4. For these proofs, we define the edge marker $\otimes$, which corresponds to either a circle or edge mark. There are four steps to prove the arrowhead completeness: 
\begin{enumerate}
    \item Removing any non-directed edge in $G'$ creates a disjoint union of maximal ancestral PAAGGs. Assume for contradiction that the graph $G^*$ obtained by removing undirected edges is not ancestral. Given that $G^*$ does not contain undirected edges, it cannot contain the following configurations: $A\otimes\rightarrow B$\textemdash$C$ or $A\ast\rightarrow B$\textemdash$C$\textemdash$D\rightarrow A$. Therefore, it contains a partially directed k-cycle such as $X\ast\rightarrow Y\rightarrow ... \rightarrow Z\rightarrow X$. It can be easily proven that no such cycle can exist without contradiction for $k\geq 3$; therefore, $G^*$ is both ancestral and maximal (Lemma 4.1 of their work that proves that the oriented $G'$ contains only triangles with the following forms: \\
    (i) $B\rightarrowast A\leftarrowast C \ast$ \textemdash $\ast B$; (ii) $B \ast$ \textemdash $A$ \textemdash $\ast C \ast$ \textemdash $\ast B$; or (iii) $Y\rightarrowast A $\textemdash $\ast C \leftarrowast B$).
    \item No replacement of the undirected edges in $G'$ by directed edges will result in non-ancestral structures such as partially directed cycles, unshielded colliders, colliders with order, or inducing paths with non-adjacent endpoints that include an edge oriented by the orientation rules. The absence of these non-ancestral structures is a direct consequence of Lemma 4.1. \label{item2}
    \item By removing all directed edges and undirected ones with no parents or spouses from $G'$, the resulting AGG $U$ is a disjoint union of chordal undirected graphs. Assume for contradiction that the orderings of $U$ lead to unshielded colliders. From \ref{item2}, we know that a replacement of undirected edges could generate a collider with order or inducing paths with non-adjacent endpoints. It's also possible to prove by contradiction that if $U$ is not chordal, then the subgraph $U'$ of the partially oriented PAAGG corresponding to $U$ must contain the same non-chordal properties (i.e., unshielded colliders), which is not possible as $U'$ cannot contain an unshielded collider given the orientation provided by the CD rule. Therefore, $U$ must be chordal.
    \item By definition of chordal graph, for every pair $(A,B)$ there are at least two orderings such that $A\rightarrow B$ in one and $A\leftarrow B)$ in the other. Therefore, $G'$ is maximally oriented, and as such, the rules  CD, CA, MR3, and R4 are arrowhead complete. 
\end{enumerate}
\citet{maier2013sound} demonstrates the completeness of the merely common cause rule of RBO, which establishes edge orientation through arrowhead marks only. Consider again the PAAGG $G'$. Assume by contradiction that there's an edge in $G'$ with a circle mark (without loss of generality, $A\rightarrowcircle B$), such that there are no MAAGGs in \textit{O-Equiv}($\mathcal{D}_{\textbf{O}}$) with a corresponding tail mark for that edge. This requires that the edge mark correspond to an arrowhead in both the equivalence class and the generated PAAGG. Based on the completeness proofs provided above, one of the rules would have orientated that edge mark with an arrowhead. As a result, there must be a MAAGG in \textit{O-Equiv}($\mathcal{D}_{\textbf{O}}$) that has the edge $A\rightarrow B$, also known as a tail mark. 
\end{proof}

\begin{lemma}
Let G' be the partially oriented PAAGG with correct adjacencies and unshielded colliders, and as many edges oriented with KNC, CA, and MR3, all with orientation propagation. Then, applying rules R5-R10, together with orientation propagation, produces a PAAGG G'' such that for every circle mark, there exists a MAAGG in \textit{O-Equiv}($\mathcal{D}_{\textbf{O}}$) in which the corresponding mark is an arrowhead.
\end{lemma}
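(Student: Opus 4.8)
The plan is to transfer Zhang's tail-completeness argument for PAGs \citep{ZHANG20081873} to PAAGGs, using Lemma \ref{lemma:gg} to move the ancestrality and maximality conditions between the lifted graph and the underlying ground graphs, and treating the RBO rule and the cross-perspective orientation propagation as additions. The role of this lemma is complementary to arrowhead completeness: there I showed that every circle surviving CD, RBO, KNC, CA, MR3 and R4 is a \emph{tail} in some MAAGG of \textit{O-Equiv}($\mathcal{D}_{\textbf{O}}$); here I must show that every circle surviving R5-R10 is an \emph{arrowhead} in some (possibly different) MAAGG of the class. Combining the two, every remaining circle mark is genuinely variant, so $G''$ is maximally informative, which is exactly how Theorem \ref{theo:compl} will use it.

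The core step is: given a circle mark in $G''$ --- say the circle at the $A$ end of an edge $A \circleast B$ --- exhibit a MAAGG $\mathcal{H} \in$ \textit{O-Equiv}($\mathcal{D}_{\textbf{O}}$) whose $A$ end of that edge carries an arrowhead. Following Zhang, I would work with the circle component of $G''$ (the subgraph induced by the edges with circles at both ends), which by the chordal-graph machinery of \citet{meek1995causal}, extended to AGGs in \citet{maier2013sound} and already invoked in the arrowhead-completeness lemma, decomposes into chordal blocks admitting many consistent orientations. I would then pick an orientation of that component, together with the forced orientation of the target edge, that (i) puts an arrowhead at the $A$ end, (ii) introduces no new unshielded collider, no new collider with order, no directed or almost-directed cycle, and no inducing path between non-adjacent vertices, and (iii) respects the canonical direction of the underlying relational dependencies, so that the resulting mixed graph is itself the MAAGG of some LRCM. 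The existence of such an orientation is where Zhang's case analysis of R5-R10 is used: because none of R5-R10 fired to convert the $A$-end circle into a tail, the discriminating-path and uncovered-path obstructions that would force a tail there are absent, so the arrowhead completion goes through.

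It then remains to verify that $\mathcal{H}$ lies in the equivalence class and that the auxiliary machinery causes no trouble. Adjacencies are unchanged by construction, and condition (ii) preserves all the Markov-equivalence invariants of \citet{ali2012towards} and the ancestrality conditions, so $\mathcal{H}$ entails the same $m$-separations as the MAAGG of the input model; by Lemma \ref{lemma:gg} this equality lifts to identical $d$-separations across all ground graphs, hence $\mathcal{H} \in$ \textit{O-Equiv}($\mathcal{D}_{\textbf{O}}$). The RBO rule needs only a remark: its collider action yields arrowheads and its purely-common-cause action yields tails (the latter already subsumed in the arrowhead-completeness lemma), and neither constrains the circle-component orientation above, so it requires no separate treatment among R5-R10. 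Orientation propagation across the PAAGGs for the other perspectives is sound and introduces no invariant beyond those on $G''$ for perspective $\mathcal{B}$; this follows, exactly as in Theorem \ref{theo:sound} and the arrowhead-completeness lemma, from the fact that all PAAGGs are generated from the single dependency set $\mathcal{D}_{\textbf{O}}$ via \texttt{extend}, so the propagation-correctness argument of \citet{maier2013sound} applies verbatim.

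The step I expect to be the main obstacle is condition (iii) together with the multi-perspective picture: in the propositional case any consistent extension of the circle component yields a MAG, but in the relational case the chosen orientation must be a legal MAAGG \emph{and} be induced by one common dependency set whose \texttt{extend}-images give mutually consistent PAAGGs for every perspective, so local surgery on one PAAGG has to be compatible with the orientations it forces elsewhere. I would address this by performing the orientation at the level of the underlying required and possible dependencies (Appendix \ref{deps}) rather than on a single PAAGG, and then appealing once more to Lemma \ref{lemma:gg} and the propagation-correctness result of \citet{maier2013sound} to certify that the globally propagated orientation stays ancestral and Markov-equivalent to the original MAAGG.
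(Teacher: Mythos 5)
Your proposal follows essentially the same route as the paper's proof: both transfer Zhang's tail-completeness argument to PAAGGs via Lemma 1, both use the chordal circle-component construction to exhibit, for each surviving circle, a MAAGG in \textit{O-Equiv}($\mathcal{D}_{\textbf{O}}$) carrying an arrowhead there, and both discharge the multi-perspective consistency concern by appealing to the orientation-propagation correctness of \citet{maier2013sound} over the shared dependency set. Your explicit flagging of the need for the completed orientation to be realizable as a single MAAGG consistent across all perspectives is, if anything, more careful than the paper's treatment, which handles this only implicitly.
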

\begin{proof}
Using Lemma \ref{lemma:gg}, we may follow \citet{ZHANG20081873} tail completeness proof. We show that any PAAGG edge with a ◦ mark (e.g., ◦\textemdash, ◦\textemdash◦, ◦$\rightarrow$) corresponds to an arrowhead in a MAAGG in the equivalence class. 
% Zhang (2008) established and proved various theorems and lemmas that can be used as findings of Lemma \ref{lemma:gg}. 
For the first two types of edges (◦\textemdash, ◦\textemdash◦), we make use of some properties of PAGs, proven in \citet{ZHANG20081873} and adapted to PAAGGs:
\begin{enumerate}[label=\textbf{P}\arabic*]
    \item Given a triple A, B, C in a PAAGG, if $A\rightarrowast B\circleast C$, then there is an edge $A\rightarrowast C$. In addition, if $A\rightarrow B$, then the edge between A and C cannot be $A\leftrightarrow C$;
    \item Given two vertices, A and B, in a PAAGG, if $A$\textemdash◦$B$, then there is no edge into A or B;
    \item Given a triple A, B, C in a PAAGG, if $A$\textemdash◦$B\circleast C$, then there is an edge between A and C. Furthermore, if $A$\textemdash◦$B$◦\textemdash◦$C$, then the edge between A and C is $A$\textemdash◦$C$; if $A$\textemdash◦$B\rightarrowast C$, then either $A\rightarrow C$ or $A\rightarrowast C$;
    \item Given two vertices, A and B, in a PAAGG, if $A$\textemdash◦$B$, then there is no cycle with the following structure $A$\textemdash◦$B$\textemdash◦$...$\textemdash◦$A$.
\end{enumerate}
With these properties, it can be proven that:
\begin{itemize}
    \item For every edge $A$◦\textemdash◦$B$ in the subgraph obtained by keeping only ◦\textemdash◦ edges from the PAAGG (which we denote as $P^C_{AAGG}$), the subgraph can be oriented into two DAGs without unshielded colliders such that $A\rightarrow B$ in one and $A\leftarrow B$ in the other. This is proven by showing that $P^C_{AAGG}$ is chordal: assume by contradiction that there is a non-chordal cycle $\langle X, Y, W, ..., Z \rangle$. This implies that any non-consecutive vertices in the cycle are not adjacent in either $P^C_{AAGG}$ or the original PAAGG, as otherwise they would be connected by a ◦\textemdash◦ edge (deriving from \textbf{P}1 and \textbf{P}3) and as such connected in the $P^C_{AAGG}$ as well. Therefore, this non-chordal cycle also appears in the PAAGG, which should have been oriented with rule R5. Therefore the $P^C_{AAGG}$ is chordal.
    \item Let $H$ be the graph obtained from the following steps applied to the PAAGG:
    \begin{enumerate}
        \item orient all $\rightarrowcircle$ and \textemdash◦ edges into directed ones, i.e., $\rightarrow$;
        \item orient the $P^C_{AAGG}$ into a DAG with no unshielded collider.
    \end{enumerate}
    Then $H$ belongs to the equivalence class represented by the PAAGG: \\
    \textbf{P}1-4 ensure that no directed or almost directed cycle is generated after the first step. For step 2, \textbf{P}1 and \textbf{P}3 ensure that in the $P^C_{AAGG}$ no new directed or almost directed cycles will be generated in $H$, and furthermore, no new edge into any vertex incident to undirected edges and no inducing paths between any non-adjacent vertices appear. This verifies that $H$ is ancestral and maximal. It is easy to prove then that $H$ belongs to the equivalence class as \textbf{P}1-3 guarantee that no new unshielded colliders are created, and as no new bi-directed edges are created also the discriminating path condition for Markov equivalence between $H$ and the PAAGG is verified.
\end{itemize}
These two theoretical conclusions guarantee that no circle on a PAAGG's ◦\textemdash and ◦\textemdash◦ edges corresponds to an invariant tail. The proof for the ◦$\rightarrow$ edge comes from Theorem 3 in \citet{ZHANG20081873}, which uses the chordal graph representation established in \citet{meek1995causal} and extended in \citet{maier2013sound} for relational data. 
For the PAAGG $ G''$, a proof by contradiction similar to the one provided in Lemma 2 can be carried out for every circle mark corresponding to an arrowhead in at least one MAAGG in the equivalence class \textit{O-Equiv}($\mathcal{D}_{\textbf{O}}$).
\end{proof}

\setcounter{theorem}{2}
\begin{theorem}
Given a schema and a probability distribution P(\textbf{V}) with $\textbf{V}=\textbf{O}\cup\textbf{L}\cup\textbf{S}$, the output of RelFCI is a correct maximally informative PAAGG, and thus a maximally informative PARM $\mathbfcal{M}$, assuming perfect conditional independence tests and sufficient hop threshold $h'$.
\end{theorem}
\begin{proof}
The following proof sketch is adapted from \citet{maier2014reasoning}. Given a sufficient $h'$ at least equal to $2h$ (Proposition \ref{prop:hop}), the set of potential dependencies $PDs$ includes all true dependencies that generate the respective $MAAGG$, which implies the generation of the correct adjacencies, which include the true causes for each relational variable. The unoriented PAAGGs are then constructed using the procedure from \citet{maier2014reasoning}. Assuming perfect conditional independence tests, the algorithm maintains only the correct edges for the PAAGGs. $S$ and $U$ also contain the correct separating sets for every pair of nonadjacent variables and the true unshielded colliders. Next, RelFCI orients all unshielded colliders using either CD or RBO and then, finally, produces a maximally informative PAAGG $G$ and PARM $\mathbfcal{M}$ as an implication of Theorem 1 and Theorem 2.
\end{proof}

\section{Additional Results}\label{res}

We further evaluated the performance of RelFCI in the absence of latent variables to establish a fair comparison with RCD under causal sufficiency. The experimental setup mirrors that described in Section \ref{sec:setup}, and the results are presented in Figure \ref{fig:no-lat}. As shown, RelFCI achieves precision and recall comparable to, and in some configurations slightly exceeding, those of RCD. These results demonstrate that RelFCI maintains high accuracy even when latent confounders are not present, confirming its soundness in recovering the true causal structure in standard relational settings.
\begin{figure}[ht]
    \centering
    \includegraphics[width=\textwidth]{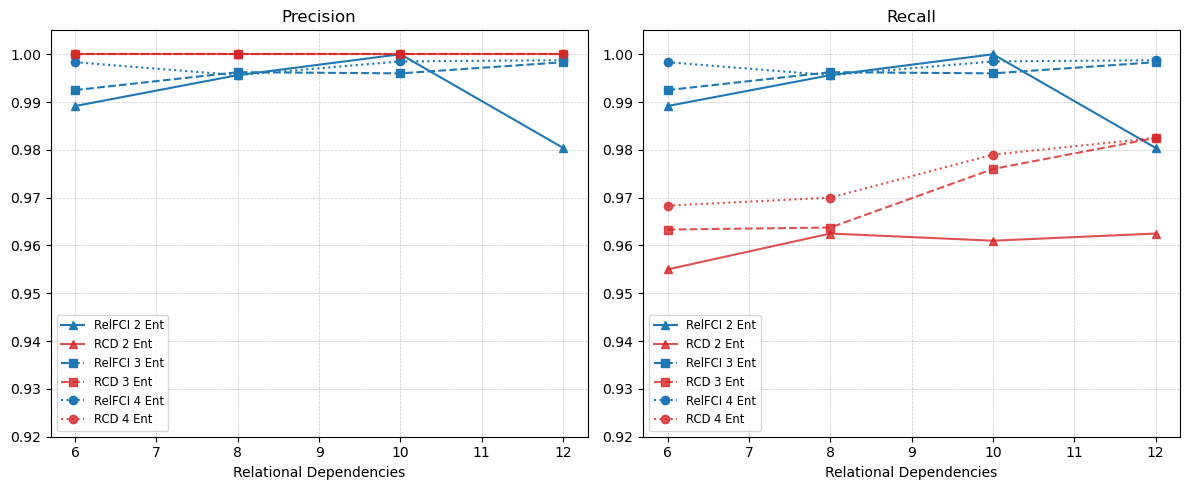}
    \caption{RelFCI Precision and Recall performance with no latent variables.}
    \label{fig:no-lat}
\end{figure}

\end{appendix}

\end{document}